\definecolor{dblue}{RGB}{52, 104, 192}
\definecolor{dgreen}{RGB}{65, 171, 93}
\definecolor{dred}{RGB}{210, 69, 69}
\definecolor{dpurple}{RGB}{148, 103, 189}
\definecolor{dorange}{RGB}{255, 127, 14}
\definecolor{dcyan}{RGB}{35, 70, 100}
\definecolor{color_cem}{RGB}{102, 194, 165}
\definecolor{color_ecbm}{RGB}{252, 141, 98}
\definecolor{color_ar}{RGB}{141, 160, 203}
\definecolor{color_scbm}{RGB}{231, 138, 195}
\newcommand{\ie}{\textit{i}.\textit{e}.}
\newcommand{\eg}{\textit{e}.\textit{g}.}
\definecolor{tabgray}{HTML}{f2f2f2}
\definecolor{tablavender}{HTML}{E6F7E6}
\definecolor{tabnavy}{HTML}{00008B}
\definecolor{tabwhite}{HTML}{FFFFFF}
\definecolor{linf}{HTML}{8172b2}
\definecolor{l2}{HTML}{ccb974}
\definecolor{l1}{HTML}{64b5cd}
\newcommand{\std}[1]{\footnotesize{$\pm$}\scriptsize{$#1$}}
\newcommand{\up}[1]{\textcolor{red}{\scriptsize{$($}\footnotesize{$+$}\scriptsize{$#1)$}}}
\definecolor{lavender}{HTML}{9ad756}
\definecolor{DBlue}{RGB}{31, 119, 180}
\definecolor{DGreen}{RGB}{44,160,44}
\definecolor{DRed}{RGB}{214,39,40}
\definecolor{dorange}{RGB}{255, 127, 14}
\definecolor{dorange2}{RGB}{255, 165, 0}
\newcommand{\sam}{\textsc{SAM}\xspace}
\newcommand{\base}{\textsc{Base}\xspace}
\newcommand{\cbm}{\textsc{CBM}\xspace}
\newcommand{\cbms}{\textsc{CBM}s\xspace}
\let\hat\widehat
\let\tilde\widetilde
\newcommand{\bw}{\bm{w}}
\newcommand{\cH}{\mathcal{H}}
\newcommand{\bvarepsilon}{\bm{\varepsilon}}
\newcommand{\argmin}{\mathop{\mathrm{argmin}}}
\newcommand{\ve}{\@ifnextchar\bgroup{\velong}{{\bm{e}}}}
\newcommand{\velong}[1]{{\bm{#1}}}
\newtheorem{theorem}{Theorem}[section]
\newtheorem{proposition}[theorem]{Proposition}
\newtheorem{lemma}[theorem]{Lemma}
\newtheorem{definition}[theorem]{Definition}
\newtheorem{assumption}[theorem]{Assumption}
\title{An Analysis of Concept Bottleneck Models: Measuring, Understanding, and Mitigating the Impact of Noisy Annotations}
\author{%
  Seonghwan Park, Jueun Mun, Donghyun Oh, Namhoon Lee\\
  POSTECH\\
  \texttt{\{seonghwan.park,jueun1021,donghyunoh,namhoon.lee\}@postech.ac.kr} \\
}
\begin{document}
\maketitle

\begin{abstract}
Concept bottleneck models (\cbms) ensure interpretability by decomposing predictions into human interpretable concepts.
Yet the annotations used for training \cbms that enable this transparency are often noisy, and the impact of such corruption is not well understood.
In this study, we present the first systematic study of noise in \cbms and show that even moderate corruption simultaneously impairs prediction performance, interpretability, and the intervention effectiveness.
Our analysis identifies a \textit{susceptible} subset of concepts whose accuracy declines far more than the average gap between noisy and clean supervision and whose corruption accounts for most performance loss.
To mitigate this vulnerability we propose a two-stage framework.
During training, sharpness-aware minimization stabilizes the learning of noise-sensitive concepts.
During inference, where clean labels are unavailable, we rank concepts by predictive entropy and correct only the most uncertain ones, using uncertainty as a proxy for susceptibility.
Theoretical analysis and extensive ablations elucidate why sharpness-aware training confers robustness and why uncertainty reliably identifies susceptible concepts, providing a principled basis that preserves both interpretability and resilience in the presence of noise.
\end{abstract}
\addtocontents{toc}{\protect\setcounter{tocdepth}{0}}
\section{Introduction}
Despite a decade of breakthroughs in deep learning, the inner-workings of end-to-end neural networks remain largely elusive~\citep{rudin2019stop, mittelstadt2019explaining, rauker2023toward}.
This lack of transparency stems from their reliance on complex, high-dimensional internal representations, which offer limited insight into their decision-making process \citep{samek2017explainable, dong2017towards}.
Concept bottleneck models (\cbms) are designed to address this opacity by first predicting a set of human-interpretable \emph{concepts}, which serve as an intermediate representation for deriving the final decision \citep{CBM, Interpretability1, Interpretability4}.
For instance, when it comes to the bird species classification problem \cite{CUB}, they first predict the concepts of the bird such as tail shape or body color, and then, infer the bird class based on these \emph{attributes}, making the decision more understandable (\eg, ``this bird is \textsc{Heermann Gull} because it is \texttt{gray} and \texttt{black} in color with an \texttt{orange beak}!'').

Formally, a \cbm is trained on triplets of input $x \in \mathbb{R}^d$, binary concept vectors $c \in \{0,1\}^k$, and output $y$. 
The concept predictor $g: x \to c$ maps each input to its $k$-dimensional concept vector, and the output predictor $f: c \to y$ consumes those concepts to produce $y$ (see \Cref{fig:CBM Overview}). 
This two-stage process offers a practical advantage, which is, domain experts can step in and edit the predicted concepts at inference, a process known as concept-level intervention~\citep{CBM}. 
For example, changing `\texttt{has stripe}' from yes to no, while keeping other concepts such as `\texttt{number of legs}', can shift the prediction from \textsc{zebra} to \textsc{horse} without retraining and can recover much of the lost accuracy. 
Interventions further support counterfactual analysis~\citep{wachter2017counterfactual}, allowing users to ask ``what if'' scenarios by manipulating concepts rather than raw pixels. 
Such human-in-the-loop interactivity is central to the appeal of \cbms and underscores the need to safeguard concept fidelity.

Despite their potential, we identify a critical issue inherent in \cbms:
their reliance on concept annotations for training.
This requirement entails extensive labeling efforts, and because of that, it can easily introduce annotation errors, \ie, noisy concept labels.
Unlike end-to-end models, such noise directly corrupts the intermediate concept bottleneck, the very foundation \cbms rely upon for interpretability and prediction, potentially rendering them more vulnerable to performance degradation under imperfect labels.
Such noise can arise from human mistakes, subjective disagreements, and inconsistencies in annotator expertise, and yet, its impact on \cbms has been largely overlooked.

In this work, we provide the first systematic investigation into the impact of noisy annotations on \cbms, by comprehensively measuring its extent, understanding its underlying mechanisms, and mitigating its detrimental effects.

Specifically, our study reveals that noise significantly compromises key characteristics of \cbms, \ie, interpretability, intervention effectiveness, as well as prediction performance (see \Cref{fig:pitch spider} for basic results). 
We discover that this degradation is primarily driven by a specific set of concepts that are highly \emph{susceptible} to noise, whose accuracy drop largely accounts for overall performance declines.
To mitigate these negative effects, we propose two strategies. 
First, we utilize sharpness-aware minimization (SAM) \citep{SAM} during training to build robustness, particularly for noise-sensitive concepts. 
Second, observing that susceptible concepts often have high predictive uncertainty, quantified by entropy~\citep{shannon1948mathematical}, we adopt an uncertainty-based intervention strategy at inference~\citep{UCP1, UCP2, Intervention3}.
This strategy prioritizes correcting high-uncertainty concepts, enabling targeted interventions that significantly improve overall task performance.
Extensive empirical results validate the effectiveness of this combined approach, consistently recovering the reliability of \cbms despite the presence of noise.

\begin{figure}
    \centering
    \begin{subfigure}{0.44\linewidth}
        \centering
        \includegraphics[width=\linewidth]{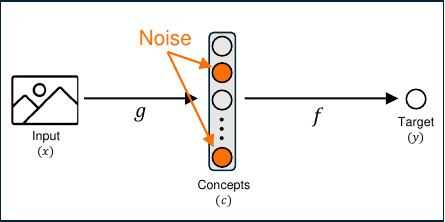}
        \subcaption{CBMs under noise}
        \label{fig:CBM Overview}
    \end{subfigure}
    \hfill
    \begin{subfigure}{0.25\linewidth}
        \centering
        \includegraphics[width=\linewidth]{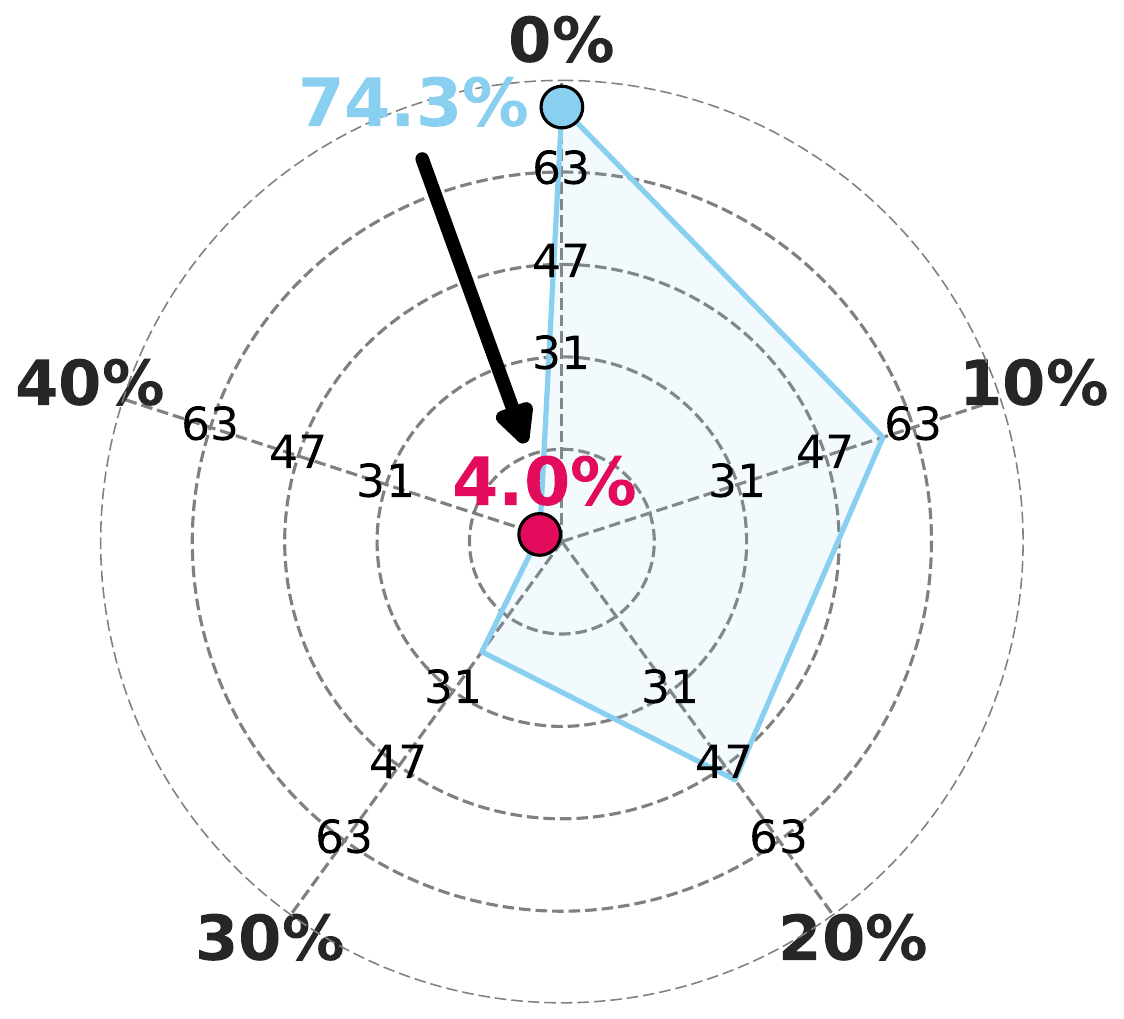}
            \subcaption{Task accuracy}
            \label{subfig: pitch target acc}
    \end{subfigure}
    \hfill
    \begin{subfigure}{0.25\linewidth}
        \centering
        \includegraphics[width=\linewidth]{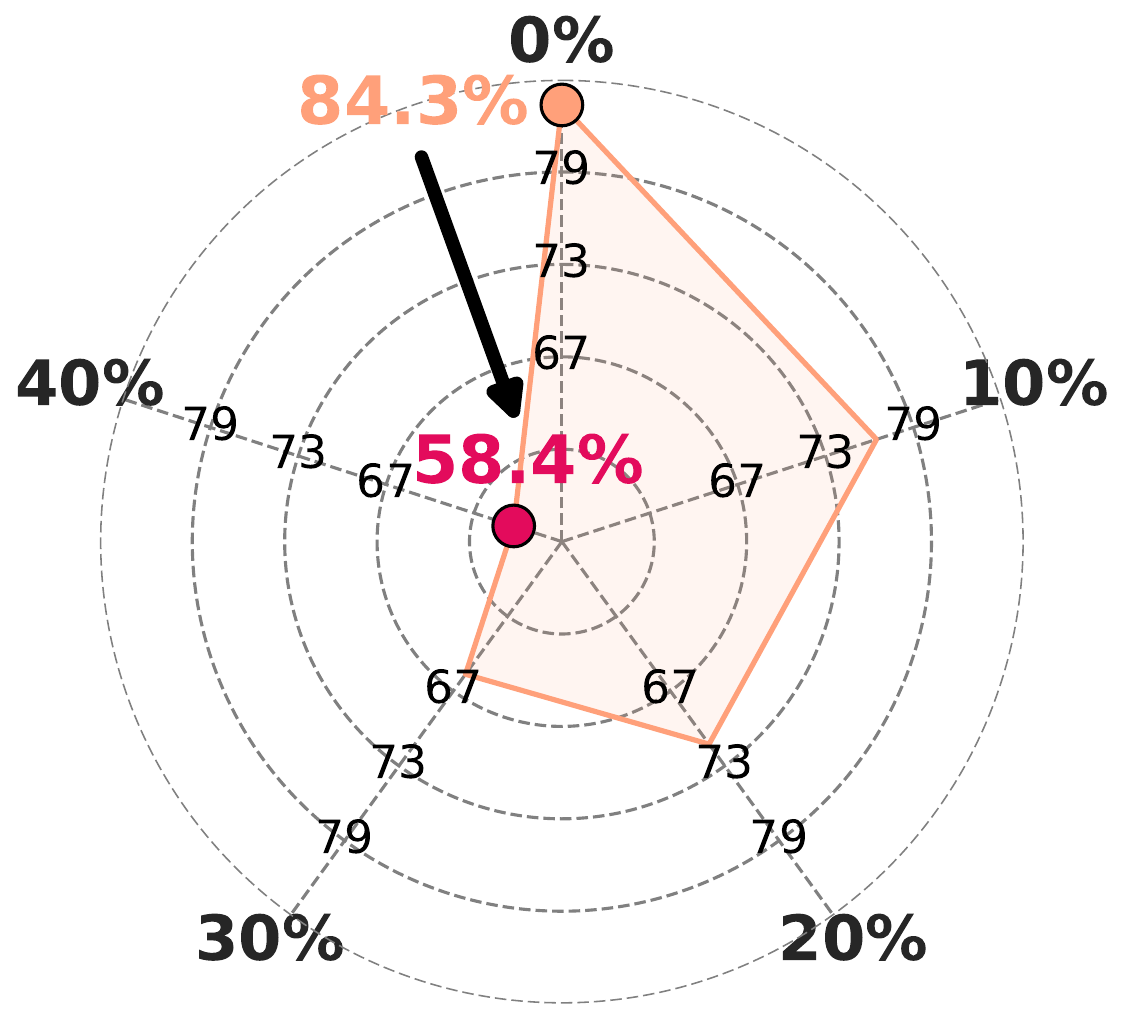}
            \subcaption{Concept alignment}
            \label{subfig: pitch cas}
    \end{subfigure}
    \caption{
        (a) Noisy concept annotations arise easily in \cbms.
        (b, c) On the CUB~\citep{CUB} dataset, raising the noise level to 40\% lowers task accuracy from 74.3\% to 4.0\% and reduces interpretability—assessed by the concept alignment score~\citep{CEM}, which measures semantic alignment between learned representations and ground truth concepts—from 84.3\% to 58.4\%.
        }
    \label{fig:pitch spider}
    \vspace{-1em}
\end{figure}

Our key contributions are summarized as follows:
\begin{description}[font={\tiny$\bullet$}~~\normalfont, leftmargin=5.9em, labelsep=1em, topsep=-\parskip]
    \item [\Cref{sec:measuring}.]
    We present the first comprehensive and systematic characterization of noise's impact on \cbms, demonstrating that even moderate level of noise significantly degrades predictive performance, interpretability, and intervention effectiveness.
    
    \item [\Cref{sec:understanding}.]
    To deepen our understanding of noise effects on \cbms, we identify a \emph{susceptible set}, a small subset of concepts that suffer significant accuracy loss, and demonstrate that overall performance decline is largely driven by these localized failures.
    
    \item [\Cref{sec:Mitigating the Noise Effects in cbms}.]
    To neutralize the risk of the susceptible set, we introduce two mitigation strategies: integrating sharpness minimization during training and implementing an uncertainty-based intervention at inference. 
    Our suggestion is supported by theory and validated through extensive empirical evidence.
\end{description}
\section{Measuring Impact}
\label{sec:measuring}
In this section, we systematically measure the impact of noise on \cbms across three key dimensions: prediction performance, interpretability, and intervention effectiveness. 
We begin by outlining our experimental setup (\Cref{subsec:settings}), followed by an evaluation of the noise effects on prediction performance (\Cref{subsec:performance drop}), interpretability (\Cref{subsec:interpretability}), and intervention effectiveness (\Cref{subsec:intervention}).

\subsection{Experimental Setup}
\label{subsec:settings}

\begin{figure}[t]
    \centering
        \begin{subfigure}{0.225\linewidth}
            \centering
            \includegraphics[width=0.95\linewidth]{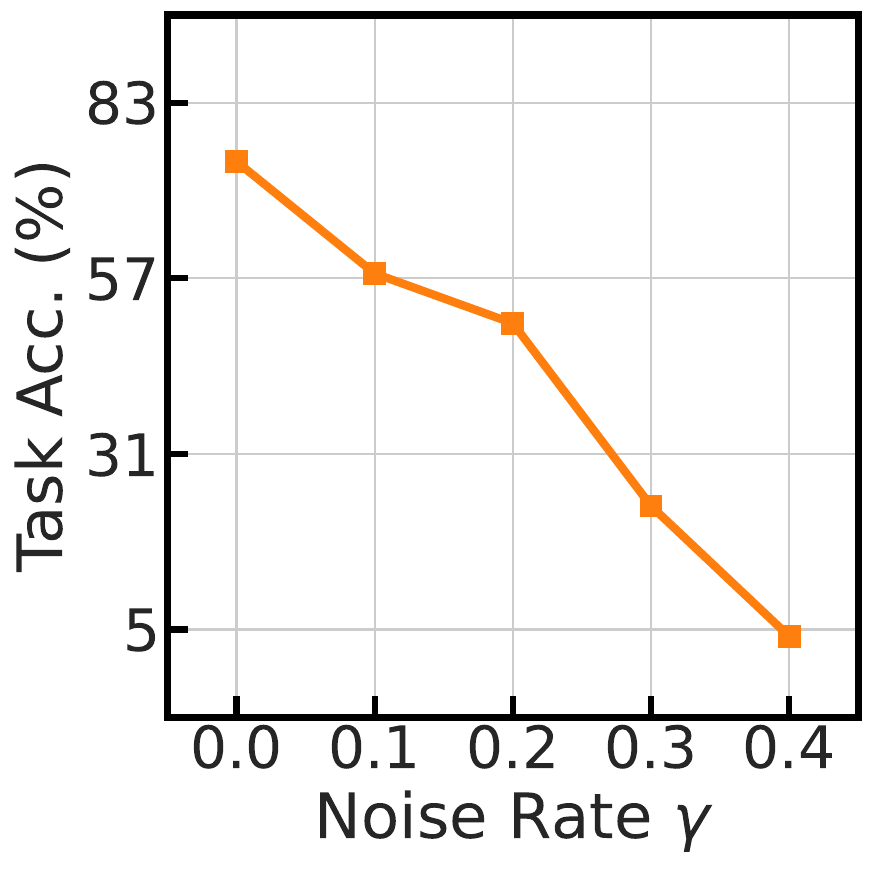}
            \label{subfig:cub target acc}
        \end{subfigure}
        \begin{subfigure}{0.225\linewidth}
            \centering 
            \includegraphics[width=0.95\linewidth]{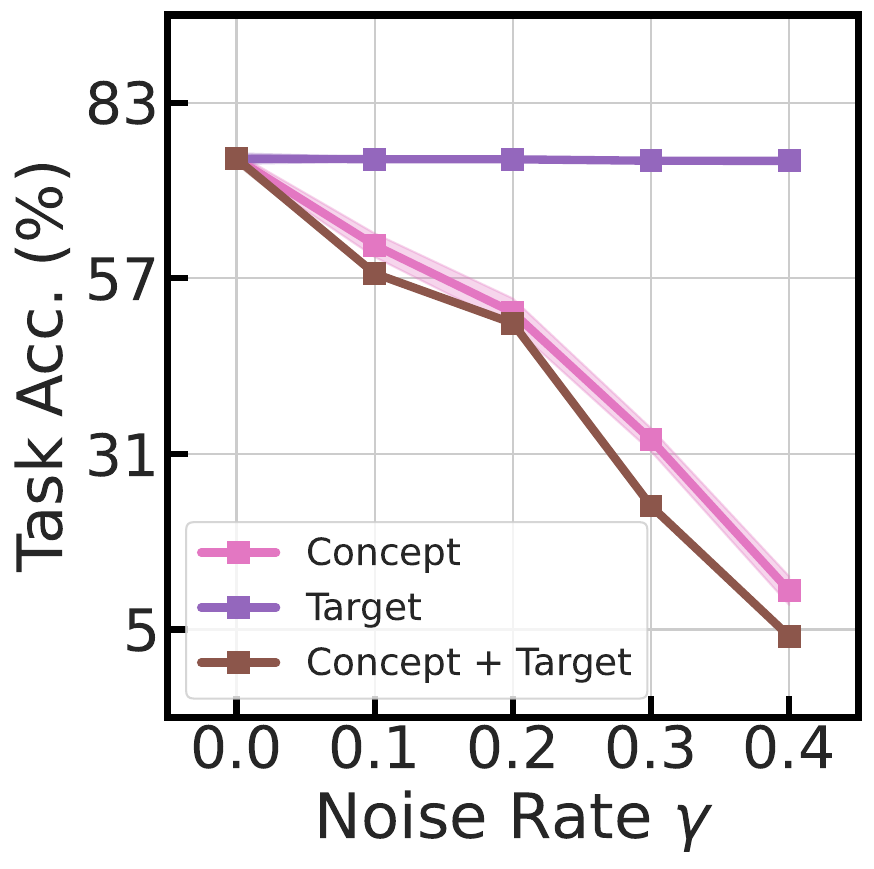}
            \label{subfig:class_noise_CUB}
        \end{subfigure}
        \begin{subfigure}{0.225\linewidth}
            \centering
            \includegraphics[width=0.95\linewidth]{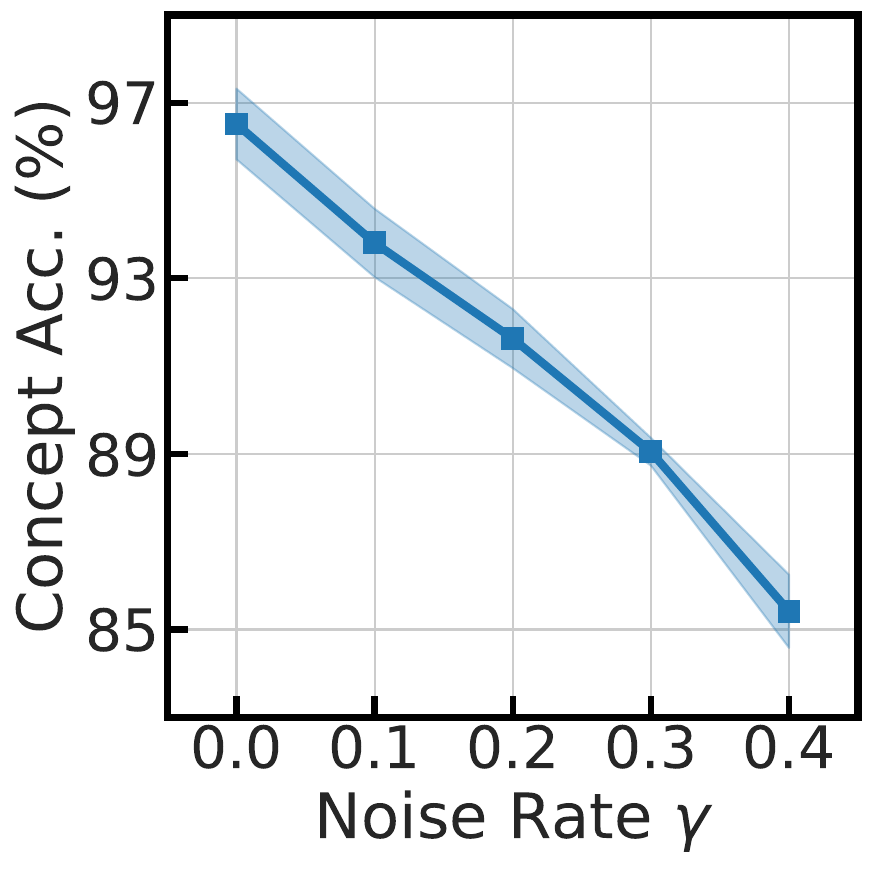}
            \label{subfig:cub concept acc}
        \end{subfigure}
        \begin{subfigure}{0.225\linewidth}
            \centering
            \includegraphics[width=0.95\linewidth]{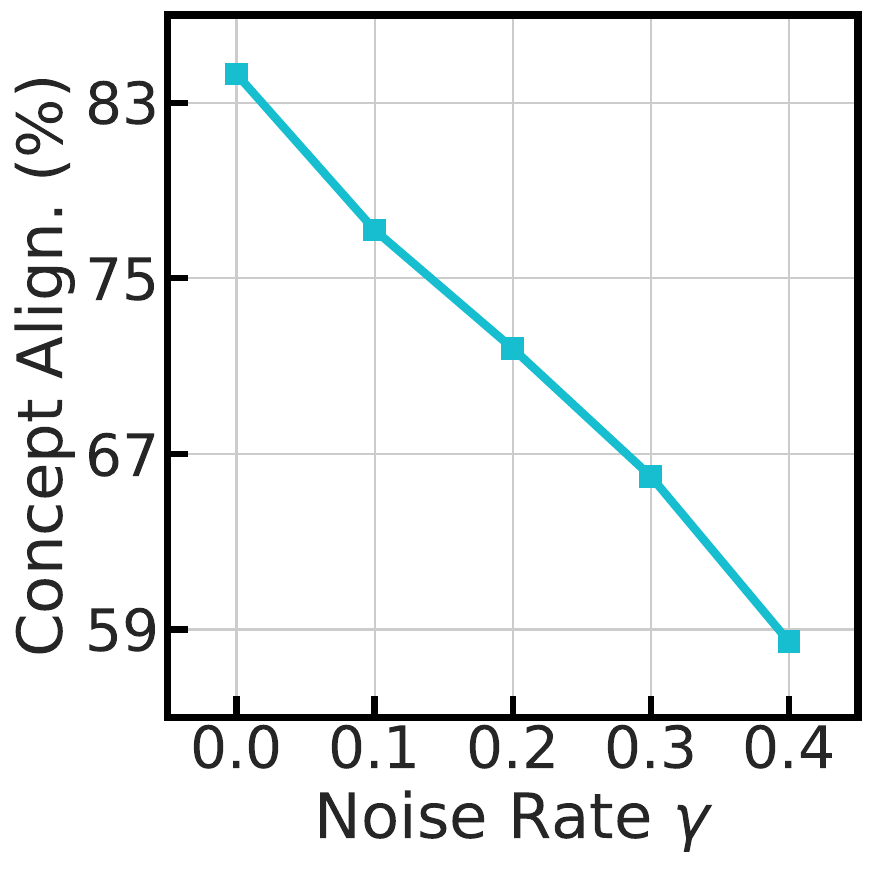}
            \label{subfig:cub cas score}
        \end{subfigure}
        
        \begin{subfigure}{0.225\linewidth}
            \centering
            \includegraphics[width=0.95\linewidth]{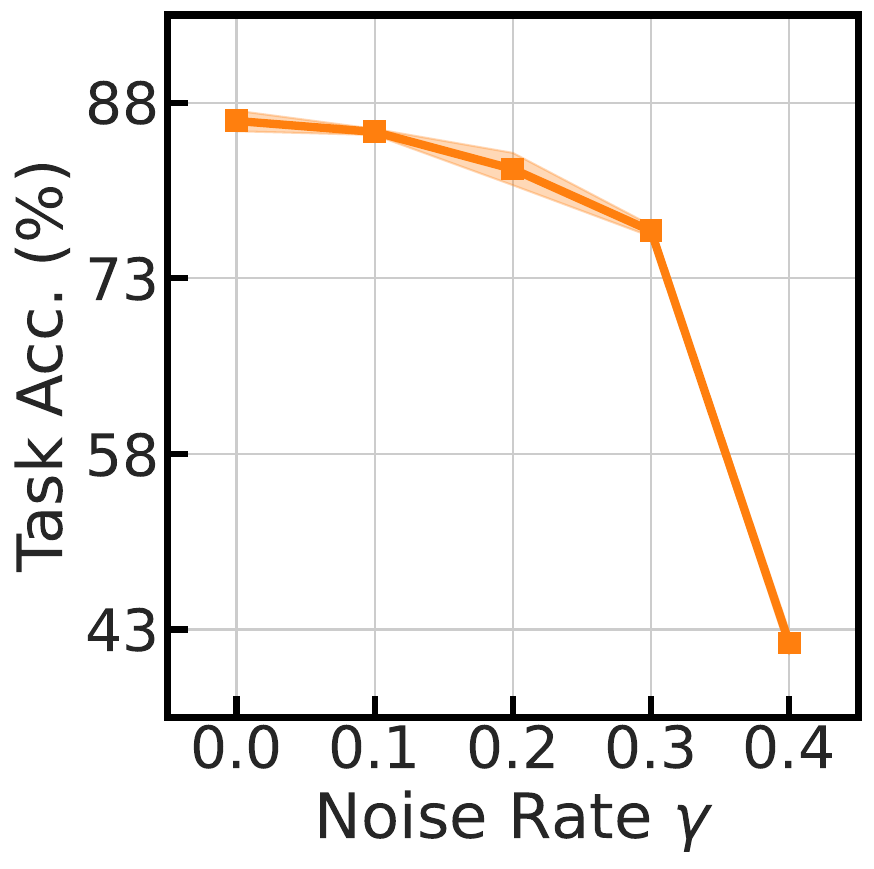}
            \subcaption{Task accuracy}
            \label{subfig:awa target acc}
        \end{subfigure}
        \begin{subfigure}{0.225\linewidth}
            \centering 
            \includegraphics[width=0.95\linewidth]{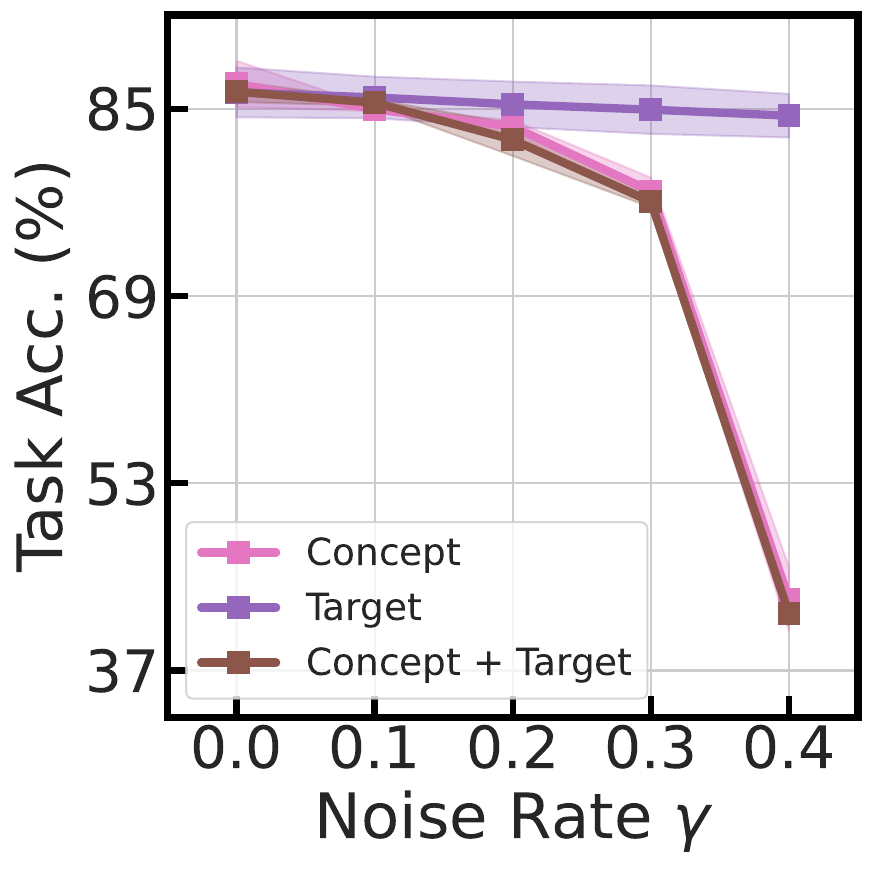}
            \subcaption{Source of degradation}
            \label{subfig:class_noise_AWA}
        \end{subfigure}  
        \begin{subfigure}{0.225\linewidth}
            \centering
            \includegraphics[width=0.95\linewidth]{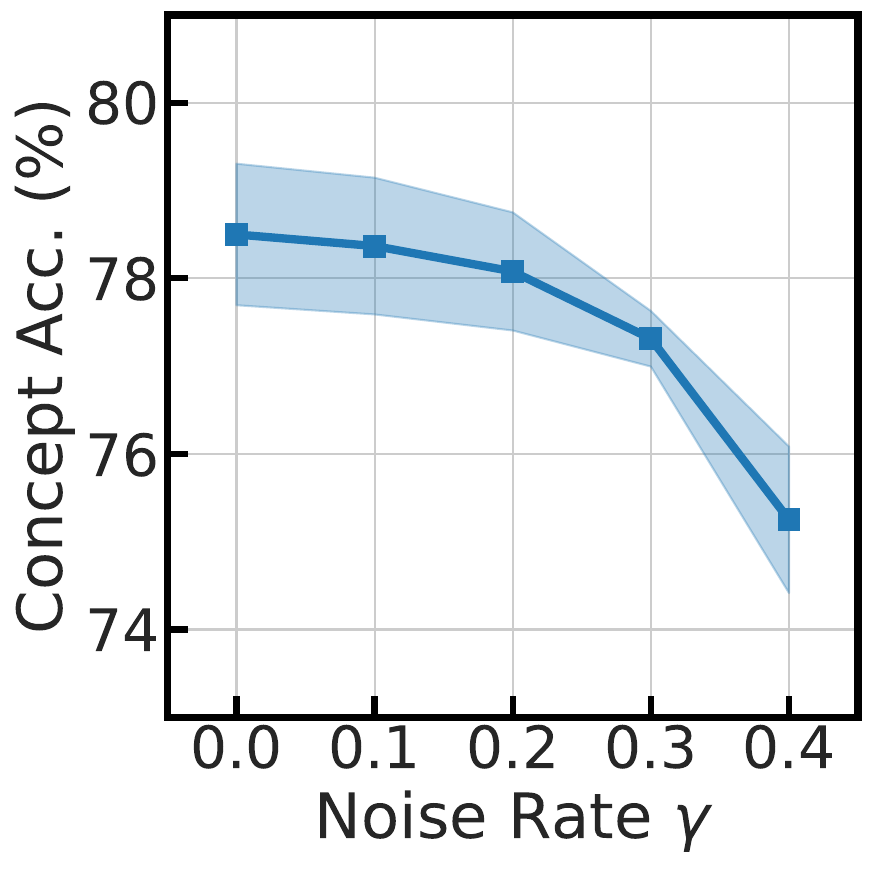}
            \subcaption{Concept accuracy}
            \label{subfig:awa concept acc}
        \end{subfigure}
        \begin{subfigure}{0.225\linewidth}
            \centering
            \includegraphics[width=0.95\linewidth]{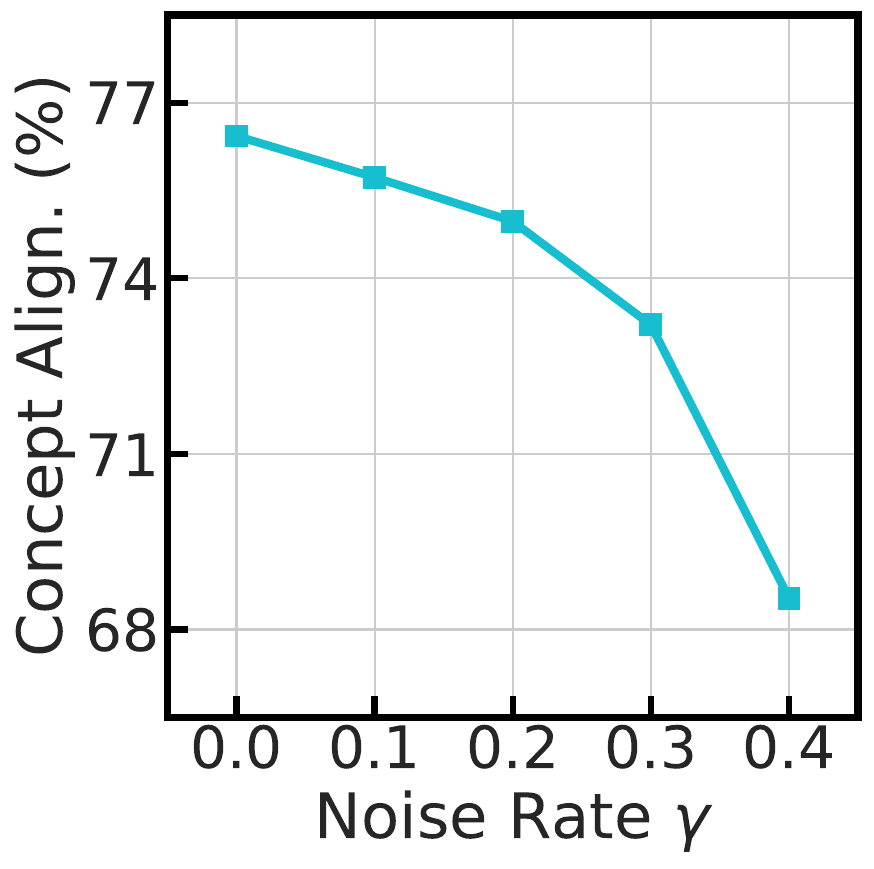}
            \subcaption{Concept alignment}
            \label{subfig:awa cas score}
        \end{subfigure}
    \caption{
        Impact of noise on \cbms.
        (a) Task accuracy degradation;
        (b) Source of degradation;
        (c) Concept accuracy degradation;
        (d) Decline in interpretability measured by the concept alignment score.
        The top row represents the CUB dataset, and the bottom row shows results for AwA2 dataset.
    }
    \label{fig:CBM overall performance}
    \vspace{-1em}
\end{figure}

Our experiments follow the settings proposed by~\citet{CBM}, using InceptionV3 \citep{labelsmoothing} as the concept predictor $g$ and a linear model as the target predictor $f$. 
We introduce noise into the CUB~\citep{CUB} and AwA2~\citep{AwA2} datasets to study noise impacts on \cbms.
Specifically, each binary concept label $c_i \in \{0, 1\}$ is independently flipped with probability $\gamma$ to simulate concept noise $\hat{c}$.
For target noise $\hat{y}$, each class label $y$ is randomly changed to a different label, uniformly sampled from the remaining classes, also with probability $\gamma$.
This formulation of target noise is standard in the literature~\citep{LSimprove,LStranslation,labelsmoothinghelp}, and we adopt a parallel strategy for concept noise.

\begin{wrapfigure}{r}{0.5\linewidth}
    \centering
    \vspace{-1.25em}
    \begin{subfigure}{0.32\linewidth}
        \centering
        \includegraphics[width=\linewidth]{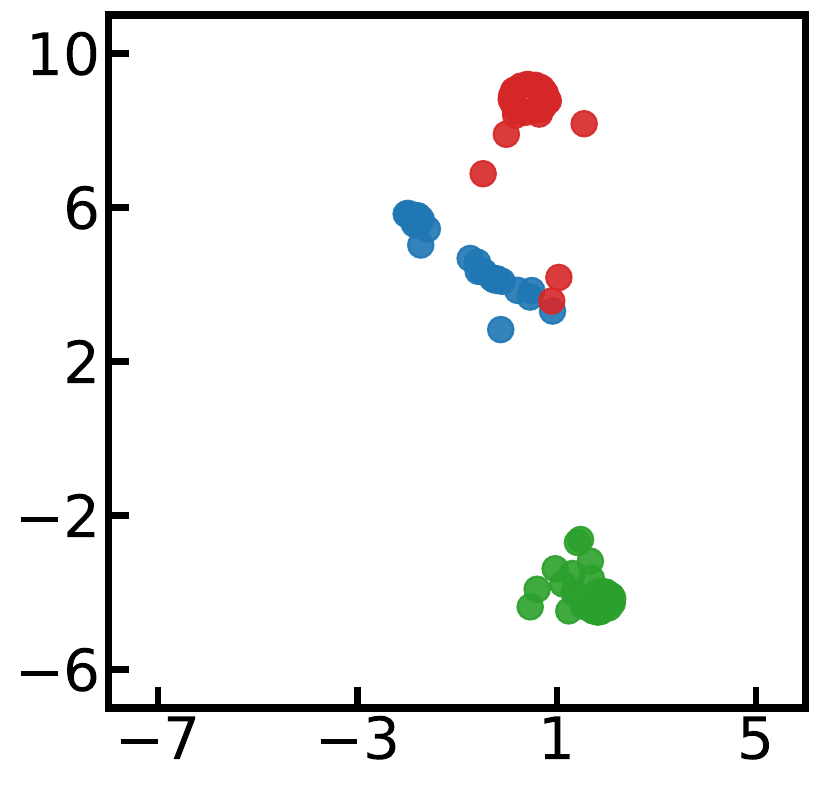}
        \subcaption{Clean}
    \end{subfigure}
    \begin{subfigure}{0.32\linewidth}
        \centering
        \includegraphics[width=\linewidth]{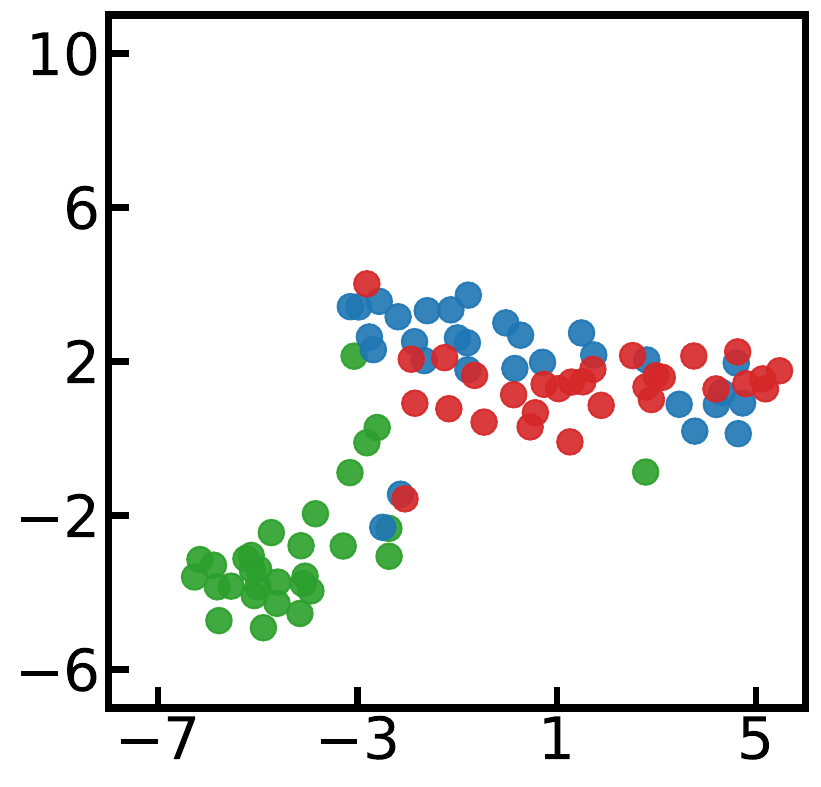}
        \subcaption{Concept}
    \end{subfigure}
    \begin{subfigure}{0.32\linewidth}
        \centering
        \includegraphics[width=\linewidth]{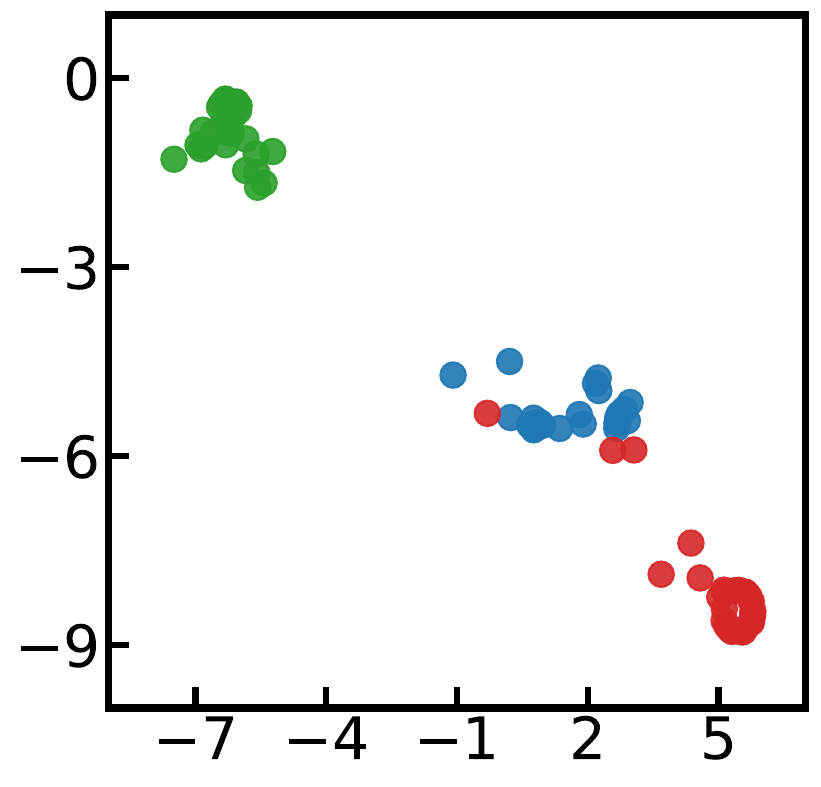}
        \subcaption{Target}
    \end{subfigure}
    \caption{
        t-SNE~\cite{tsne} visualization of model representations under different noise settings.
        }
    \vspace{-1.25em}
    \label{fig:tsne}
\end{wrapfigure}

\subsection{Prediction Performance}
\label{subsec:performance drop}
We begin by examining how noise impacts prediction performance. 
As shown in~\Cref{subfig:awa target acc}, increasing noise levels lead to a consistent decline in accuracy.
For instance, introducing 30\% noise into the AwA2 dataset results in a 9.4\% drop in accuracy, while the same noise level in the CUB dataset leads to a dramatic 51\% decline.
Even modest noise has a surprisingly strong effect on \cbms, with just 10\% noise in CUB causing a 16.6\% drop in accuracy.
To better understand this vulnerability, we compare three types of noise: (i) concept-only noise, (ii) target-only noise, and (iii) combined noise (see~\Cref{subfig:class_noise_AWA}).  
We find that concept noise alone leads to almost the same performance degradation as the combined case, suggesting that corrupted concept labels are the primary driver of performance loss.
This conclusion is further supported by~\Cref{fig:tsne}, which visualizes internal representations using t-SNE~\citep{tsne} for three classes (\eg, \textsc{Black-footed Albatross} ($\mathcolor{DRed}{\bullet}$), \textsc{Red-winged Blackbird} ($\mathcolor{DBlue}{\bullet}$), and \textsc{Yellow-headed Blackbird} ($\mathcolor{DGreen}{\bullet}$)).  
Compared to clean supervision, concept noise severely distorts the class-wise clustering of embeddings, while target noise causes minimal disruption.  
These results demonstrate that concept-level corruption not only reduces prediction accuracy but also undermines the semantic structure of learned representations.


\begin{figure}[t]
    \centering
    \begin{subfigure}{0.325\linewidth}
        \centering 
        \includegraphics[width=\linewidth]{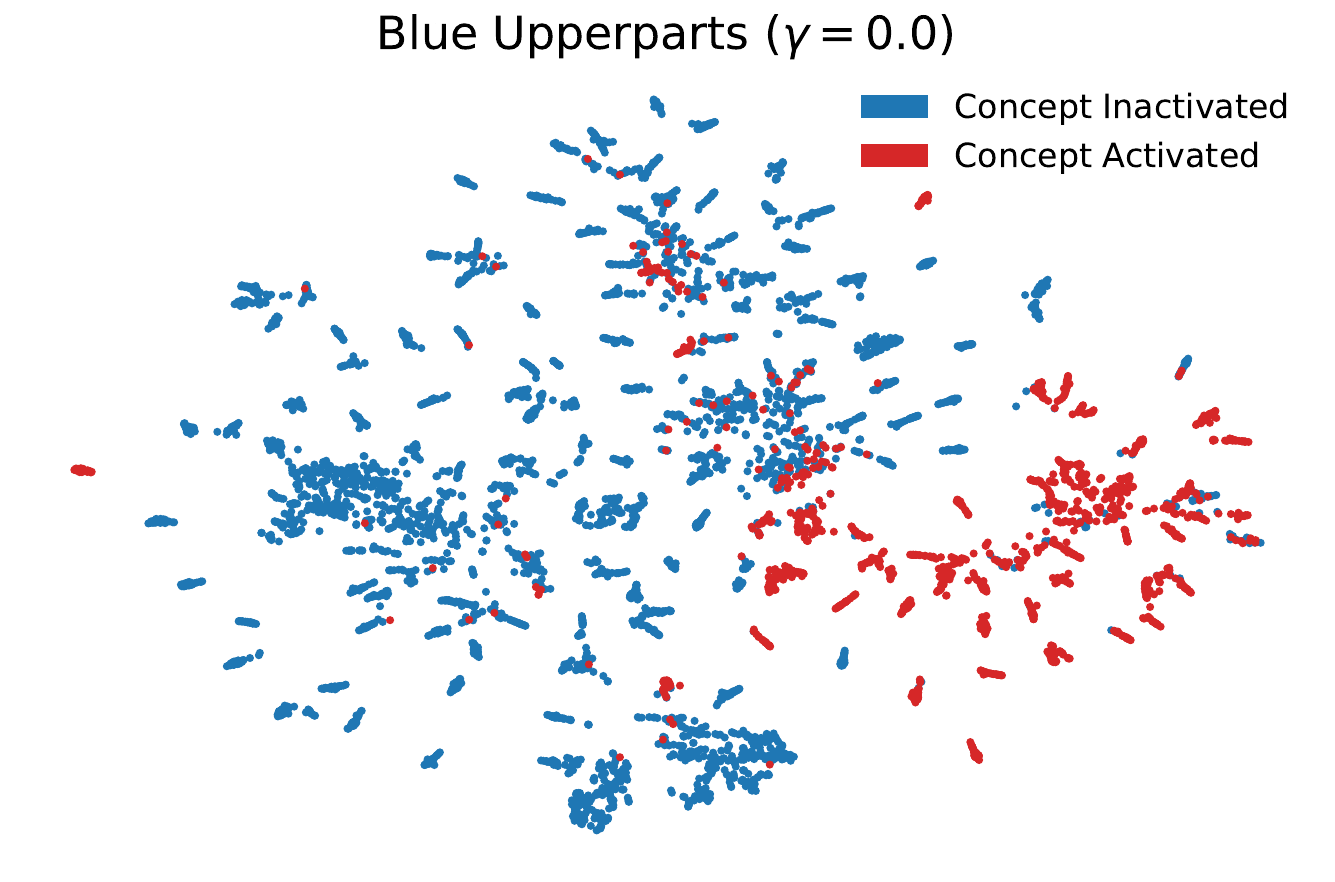}
        \subcaption{$\gamma = 0.0$}
        \end{subfigure}
    \begin{subfigure}{0.325\linewidth}
        \centering 
        \includegraphics[width=\linewidth]{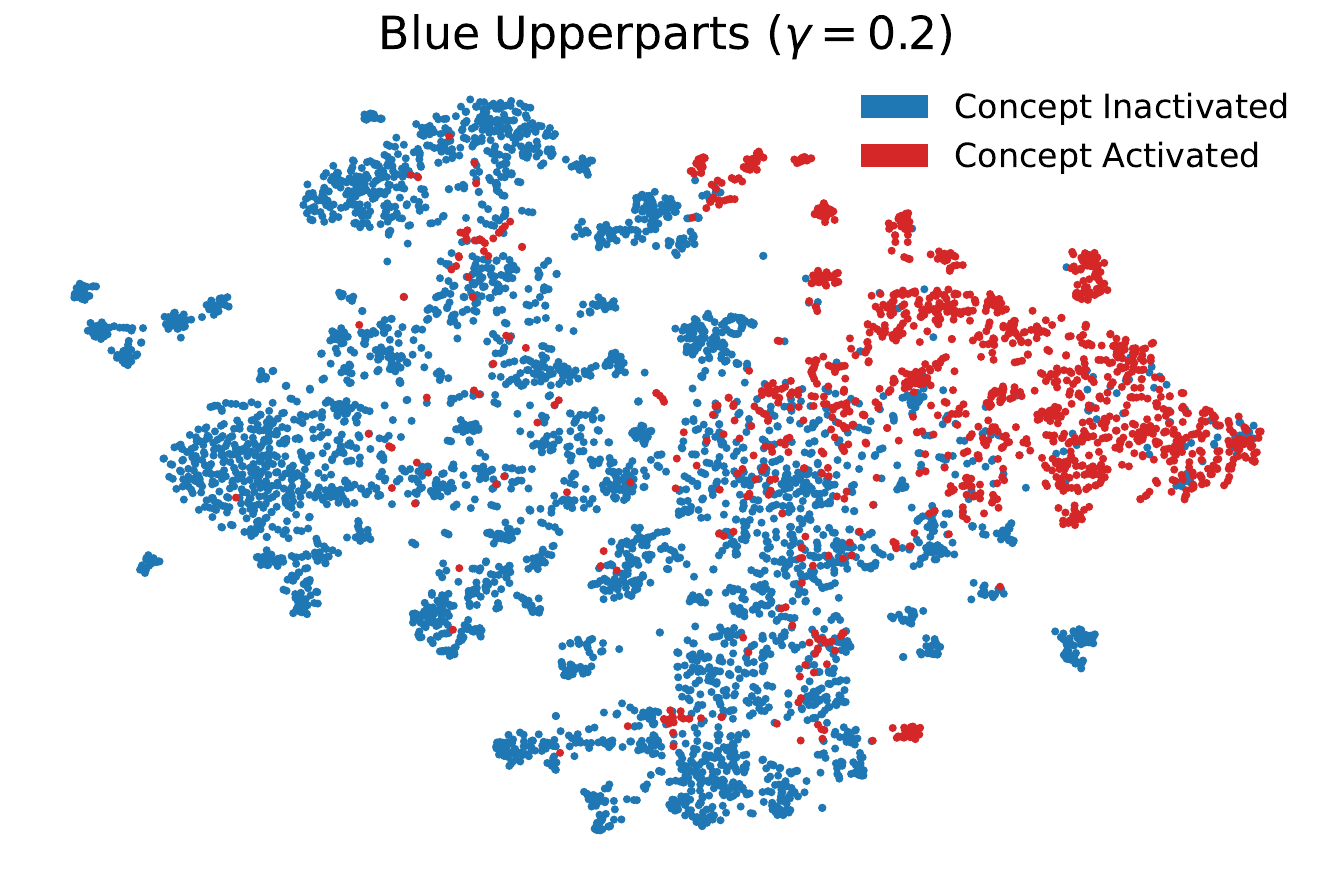}
        \subcaption{$\gamma = 0.2$}
    \end{subfigure}
    \begin{subfigure}{0.325\linewidth}
        \centering 
        \includegraphics[width=\linewidth]{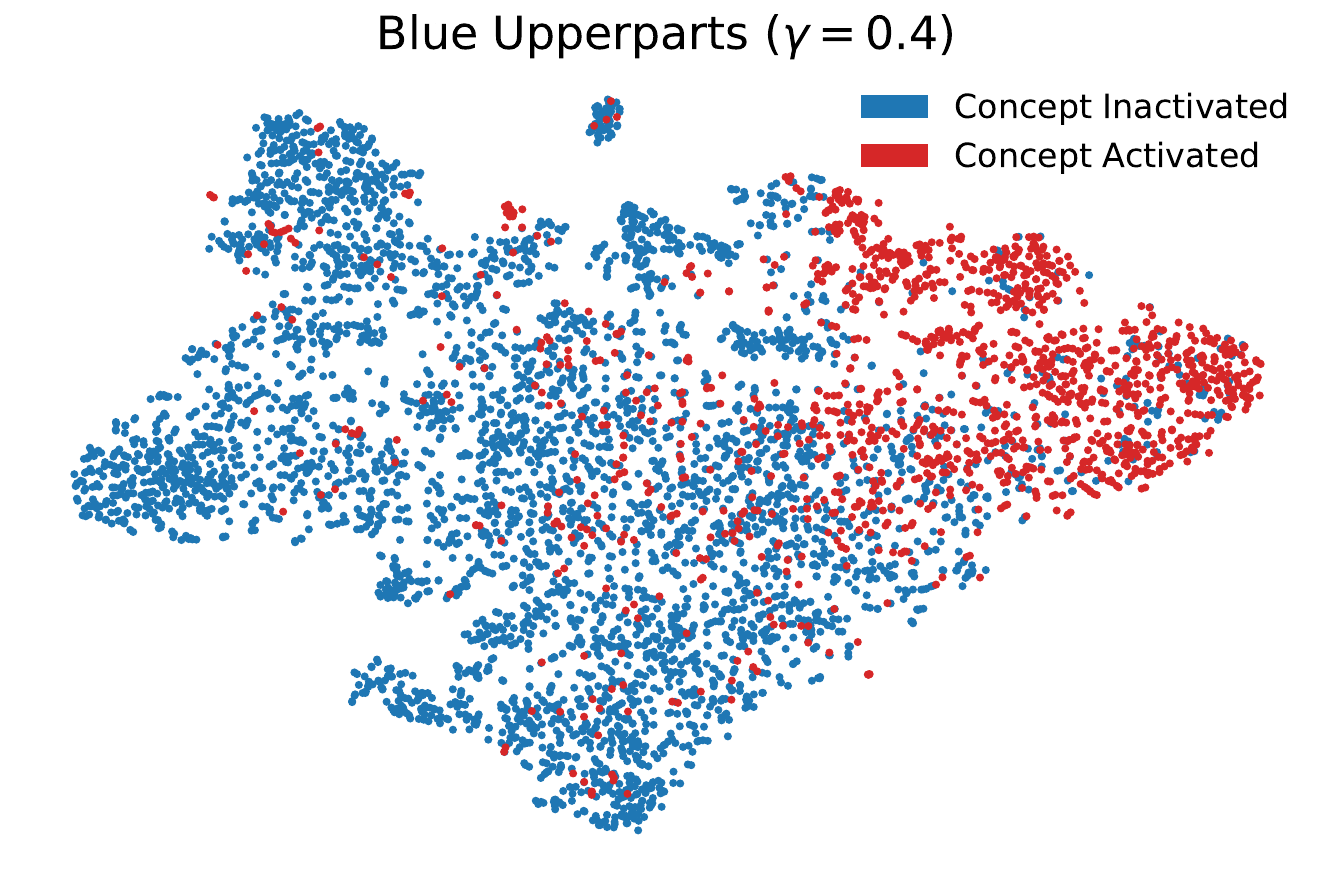}
        \subcaption{$\gamma = 0.4$}
    \end{subfigure}
    \caption{
        t-SNE~\citep{tsne} visualizations of `\texttt{blue upperparts}' concept embedding learnt in CUB with sample points colored red if the concept is active and blue if the concept is inactive in that sample.
        As noise ratio increases, the concepts are clearly entangled making concepts unreliable.
        }
    \label{fig:tsne cas}
\end{figure}

\subsection{Interpretability}
\label{subsec:interpretability}

Next, we assess the effect of noise on interpretability. 
As shown in~\Cref{subfig:awa concept acc}, concept prediction accuracy declines from 96.5\% in clean conditions to 85.4\% with 40\% noise on the CUB dataset. 
Despite appearing modest, the 11.1\% drop notably compresses the margin over random guessing in binary classification tasks, where minor concept prediction errors can compound and severely impair target prediction in \cbms.
Beyond accuracy, interpretability critically relies on semantic coherence of learned representations.
Thus, we measure interpretability using the concept alignment score (CAS)~\citep{CEM}, quantifying semantic alignment between learned representations and ground-truth concepts (see~\Cref{subfig:awa cas score}). 
We find CAS decreases substantially with increasing noise, \eg, for each additional 10\% increment of noise, CAS drops by approximately 7\% on the CUB dataset, indicating severe degradation in interpretability.

Qualitative t-SNE~\citep{tsne} visualizations further illustrate this interpretability decline (see~\Cref{fig:tsne cas}).
Clean embeddings for the specific concept `\texttt{blue upperparts}' show distinct, semantically meaningful clusters corresponding to active (\ie, red circles) and inactive (\ie, blue circles) concept instances. 
However, increasing noise levels result in ambiguous, less separable clusters. 
Thus, noise compromises the explanatory power and reliability of concepts, diminishing interpretability.
Additional examples are provided in \Cref{app:qualitative results}.


\begin{figure}[t]
    \centering
        \begin{subfigure}{0.195\linewidth}
            \centering 
            \includegraphics[width=0.95\linewidth]{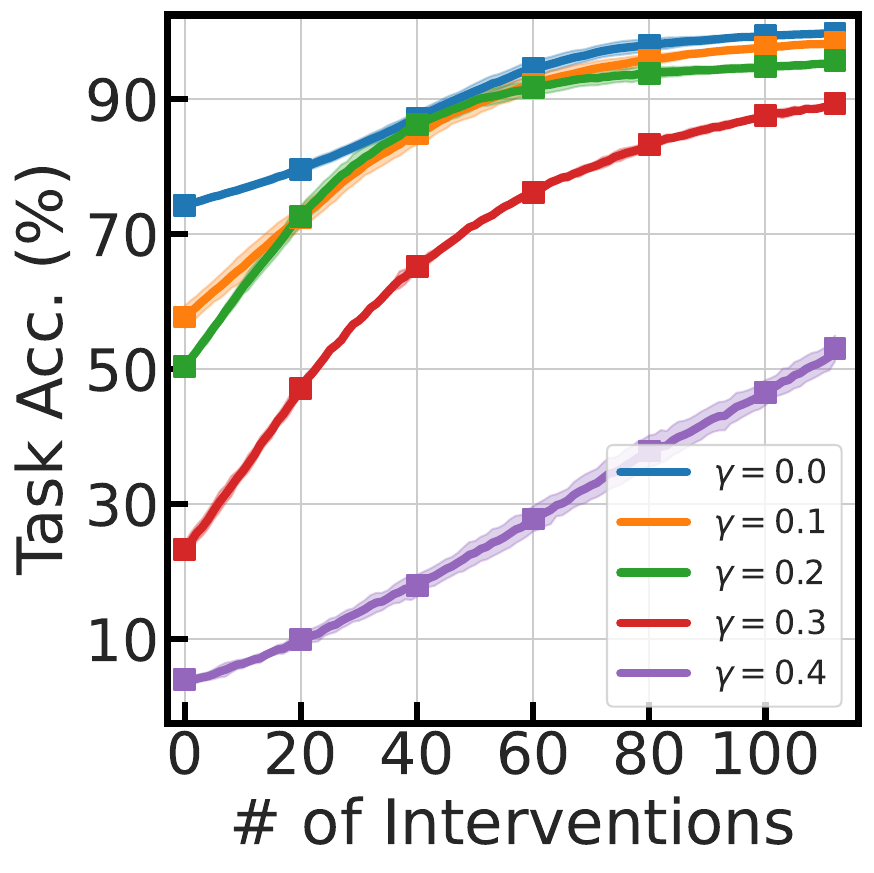}
            \subcaption{Random}
            \label{subfig:intervene random}
        \end{subfigure}
        \begin{subfigure}{0.195\linewidth}
            \centering 
            \includegraphics[width=0.95\linewidth]{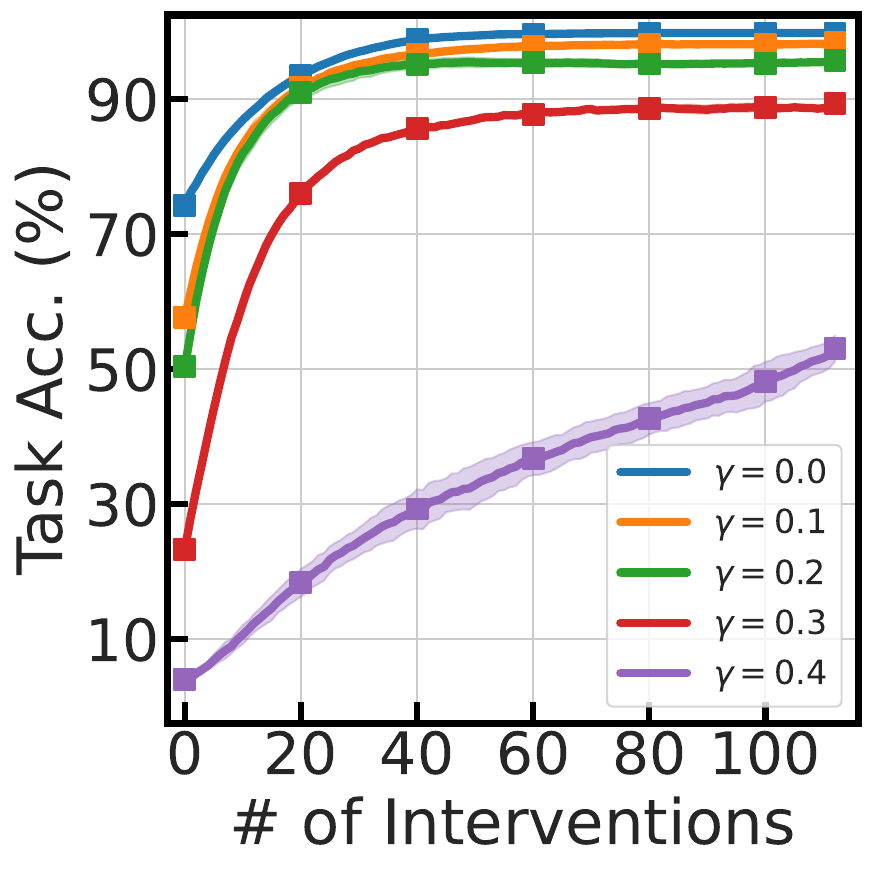}
            \subcaption{UCP}
            \label{subfig:intervene ucp}
        \end{subfigure}
        \begin{subfigure}{0.195\linewidth}
            \centering 
            \includegraphics[width=0.95\linewidth]{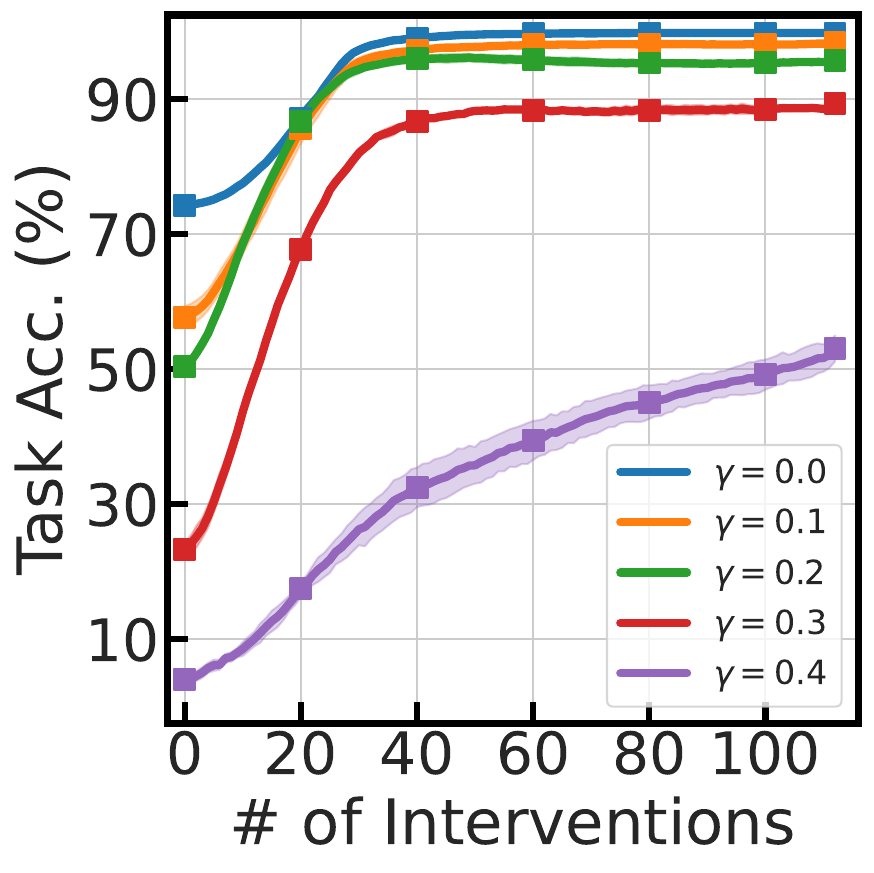}
            \subcaption{CCTP}
            \label{subfig:intervene cctp}
        \end{subfigure}
        \begin{subfigure}{0.195\linewidth}
            \centering 
            \includegraphics[width=0.95\linewidth]{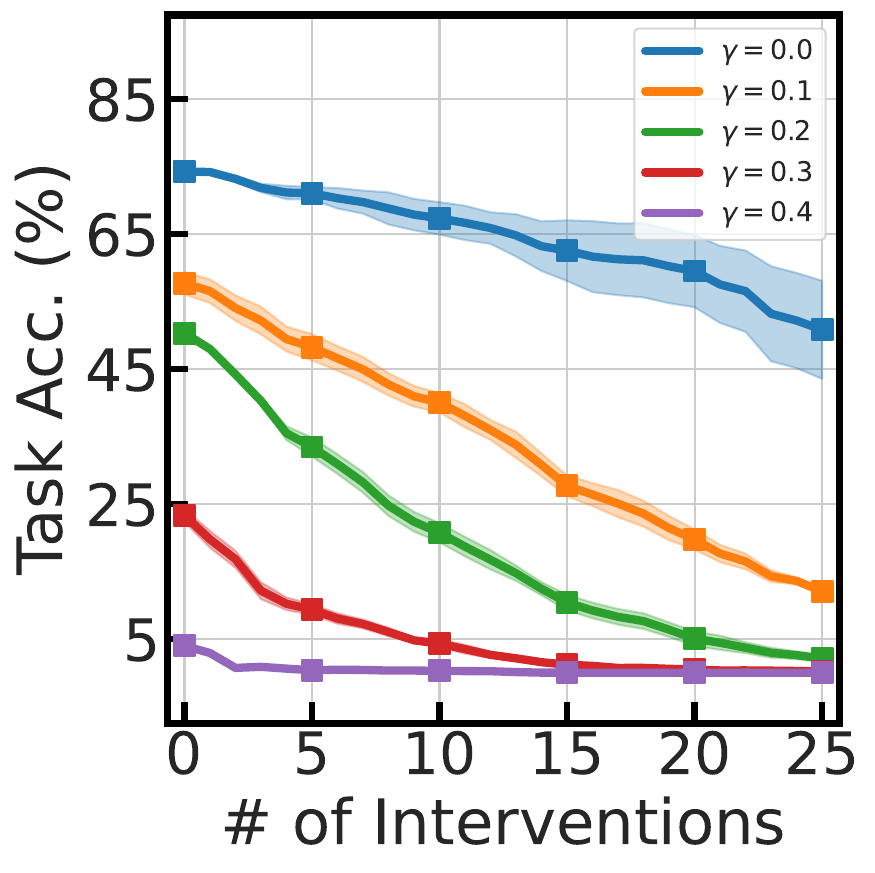}
            \subcaption{Incorrect, CUB}
            \label{subfig:intervene incorrect cub}
        \end{subfigure}
        \begin{subfigure}{0.195\linewidth}
            \centering 
            \includegraphics[width=0.95\linewidth]{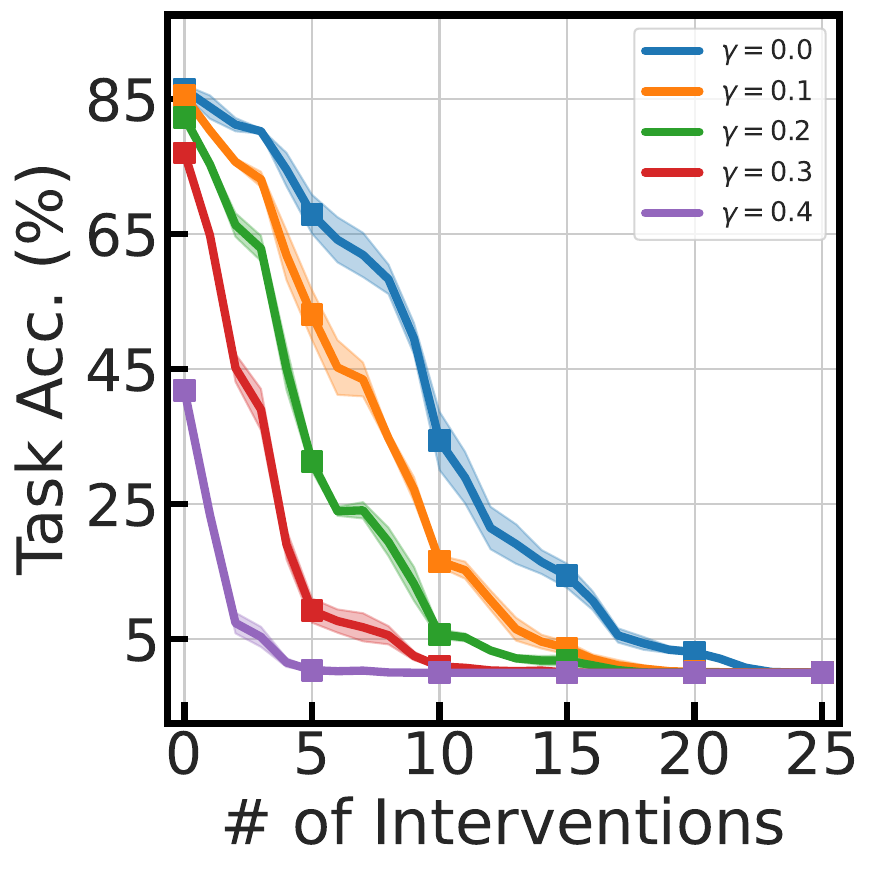}
            \subcaption{Incorrect, AwA2}
            \label{subfig:intervene incorrect awa2}
        \end{subfigure}
    \caption{
        (a, b, c) Impact of noise on the effectiveness of \cbm interventions using Random, UCP, and CCTP strategies under varying noise levels in the CUB dataset.
        (d, e) Effects of performing incorrect random concept interventions at different noise rates in CUB and AwA2 dataset.
    }
    \label{fig:intervention}
    \vspace{-1em}
\end{figure}

\subsection{Intervention Effectiveness}
\label{subsec:intervention}

Finally, we investigate how noise limits the utility of post‑hoc interventions, utilizing three concept selection strategies proposed by \citet{Intervention3}:
(i) Random, which samples concepts uniformly~\cite{CBM};
(ii) UCP (uncertainty of concept prediction), which ranks concepts by predictive uncertainty;
(iii) CCTP (contribution of concept on target prediction), which selects concepts exerting the greatest influence on the target prediction.
Implementation specifics are detailed in~\Cref{app:selection criteria}.

As shown in~\Cref{subfig:intervene random}–\ref{subfig:intervene cctp}, the benefits of intervention are increasingly constrained as noise levels rise (\ie, the accuracy curves translate downward along the y-axis).
Under the Random strategy, correcting every concept in a model trained with clean supervision yields near-optimal performance at 99.8\% accuracy.
However, this upper bound progressively declines to 98.4\%, 95.7\%, 89.4\%, and 53.1\% as the noise level increases from 10\% to 40\%.
Notably, even exhaustive intervention at 40\% noise fails to recover the performance of the clean model without any intervention.
Similar trends are observed for the advanced intervention methods (\eg, UCP and CCTP), indicating that the errors induced by noisy supervision in \cbms cannot be rectified through post‑hoc correction alone.

To further evaluate robustness, we deliberately assign incorrect values to selected concepts, emulating the inevitable inaccuracies of expert intervention. 
As shown in \Cref{subfig:intervene incorrect cub}–\ref{subfig:intervene incorrect awa2}, \cbms trained on clean data tolerate a small number of such mistakes with only marginal performance decline. 
Under noise, however, even slight intervention errors precipitate pronounced additional loss, revealing a fragility that limits the practical utility of concept‑level correction in realistic human‑expert workflows.
\begin{figure}[t]
    \centering
    \includegraphics[width=\linewidth]{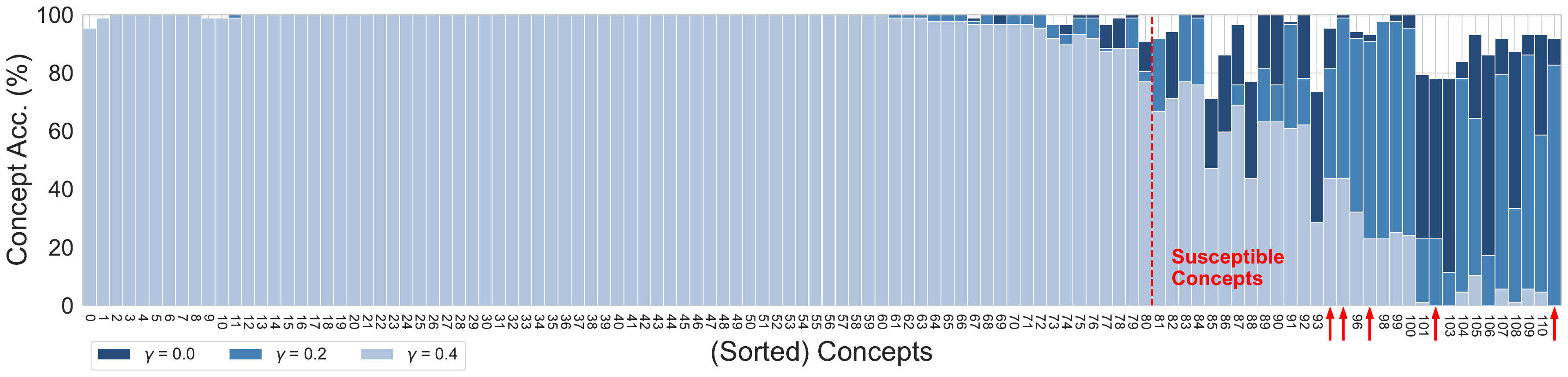}
    \caption{
        Impact of noise on individual concept accuracy.
        We evaluate the noise effects on concept predictions for \textsc{Le Conte Sparrow}. 
        As noise increases, accuracy declination becomes increasingly uneven across concepts. 
        Concepts are sorted by their accuracy difference between 40\% and 0\% noise.
        }
    \vspace{-0.5em}
    \label{fig:overall concept drop}
\end{figure}

\section{Understanding Mechanisms}
\label{sec:understanding}
In this section, we further analyze how noise affects the internal mechanisms of \cbms, separately examining the concept predictor $g$ and the target predictor $f$. 
We begin with the observation that noise induces non-uniform degradation of individual concept accuracies (\Cref{subsec:non-uniform concept accuracy drops}).
We then assess how noise disrupts the alignment between concepts representations and target predictions (\Cref{subsec:concept-class relationship}).

\subsection{Non-Uniform Degradation of Concept Accuracy}
\label{subsec:non-uniform concept accuracy drops}
To understand the noise impact on \cbms, we analyze individual concept accuracy for a specific class, \textsc{Le Conte Sparrow}, from the CUB dataset. 
We vary noise rates $\gamma \in \{0, 0.2, 0.4\}$ and measure accuracy across each concept.

\begin{wrapfigure}{r}{0.4\linewidth}
    \centering
    \vspace{-2em}
    \includegraphics[width=\linewidth]{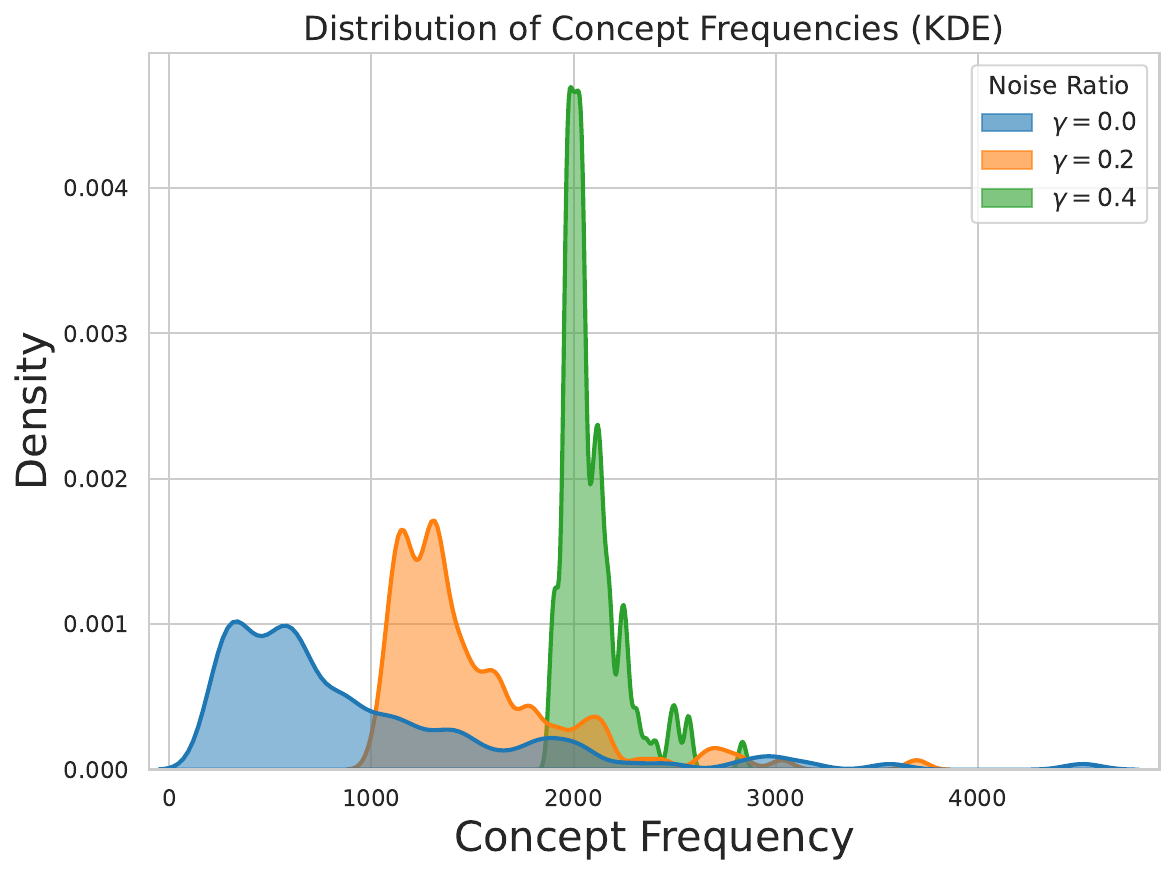}
    \caption{
       Concept frequency distribution (KDE). 
       The distribution under noisy conditions (\eg, 20\%, 40\%) differs substantially from that of the clean dataset. 
        }
    \label{fig:overall concept frequency}
    \vspace{-1.5em}
\end{wrapfigure}

As shown in~\Cref{fig:overall concept drop}, although noise is introduced uniformly across all concepts, accuracy degradation is notably uneven. 
While the majority of concepts remain comparatively resilient, a distinct minority suffers a disproportionately large decline. 
We designate this minority as the \textit{susceptible} concept set, defined as
\begin{equation}
    S = \left\{\,c_i \;\middle|\; \Delta \mathrm{acc}_{c_i} > \overline{\Delta \mathrm{acc}} \right\}, 
\end{equation}
where $\Delta \mathrm{acc}_{c_i} = \mathrm{acc}^{\text{clean}}_{c_i} - \mathrm{acc}^{\text{noisy}}_{c_i}$ represents the accuracy drop for concept $c_i$, and $\overline{\Delta \mathrm{acc}}$ denotes the mean drop across all $112$ concepts.
Under this criterion, approximately $23\%$ of the concepts are identified as susceptible, suggesting heightened vulnerability to label corruption, likely rooted in greater semantic ambiguity.

To better understand this phenomenon, we plot the concept frequency distributions using kernel density estimation (KDE)~\citep{KDE} in~\Cref{fig:overall concept frequency}. 
In the clean dataset, concept occurrences are imbalanced, with many low-frequency and a few high-frequency concepts. 
As noise increases, however, this distribution becomes more uniform; 
at 40\% noise, most concepts occur roughly 2,000 times. 
This shift may reduce the ability of model to learn rare but informative concepts, as the effective signal-to-noise ratio for such concepts deteriorates. 
Identifying and understanding these vulnerable concepts is essential, as their degradation can disproportionately affect downstream performance, which we explore in the next section.


\begin{figure}[t]
    \centering
    \begin{subfigure}{0.225\linewidth}
        \centering
        \includegraphics[width=\linewidth]{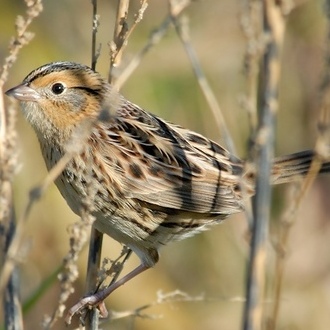}
        \vspace{0.25em}
        \subcaption{Le Conte Sparrow}
        \label{subfig: Le Conte Sparrow}
    \end{subfigure}
    \hfill
    \begin{subfigure}{0.435\linewidth}
        \centering
        \includegraphics[width=\linewidth]{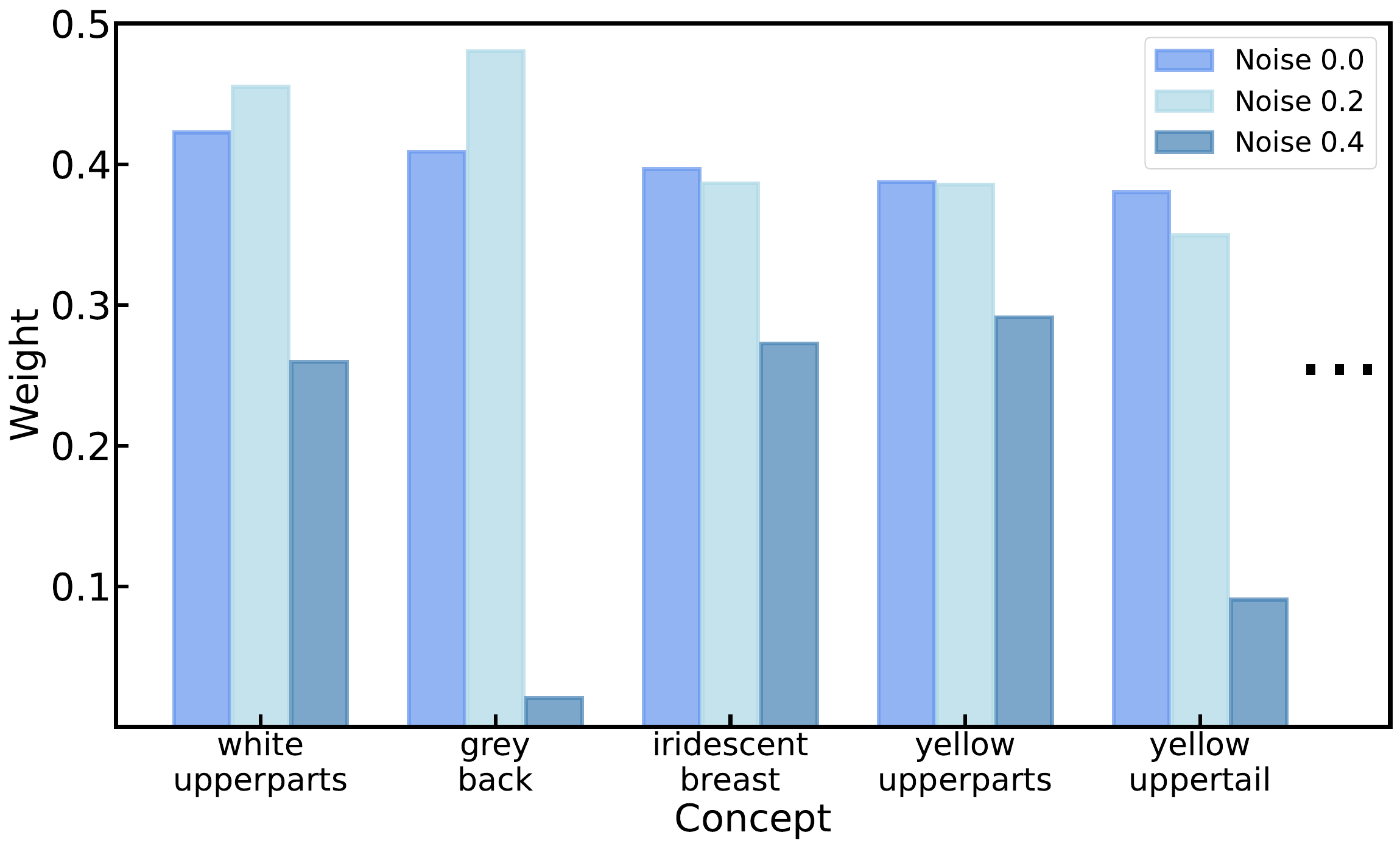}
        \subcaption{Importance shift in key concepts}
        \label{subfig:concept weight on clean dataset}
    \end{subfigure}
    \begin{subfigure}{0.3\linewidth}
        \centering
        \includegraphics[width=0.88\linewidth]{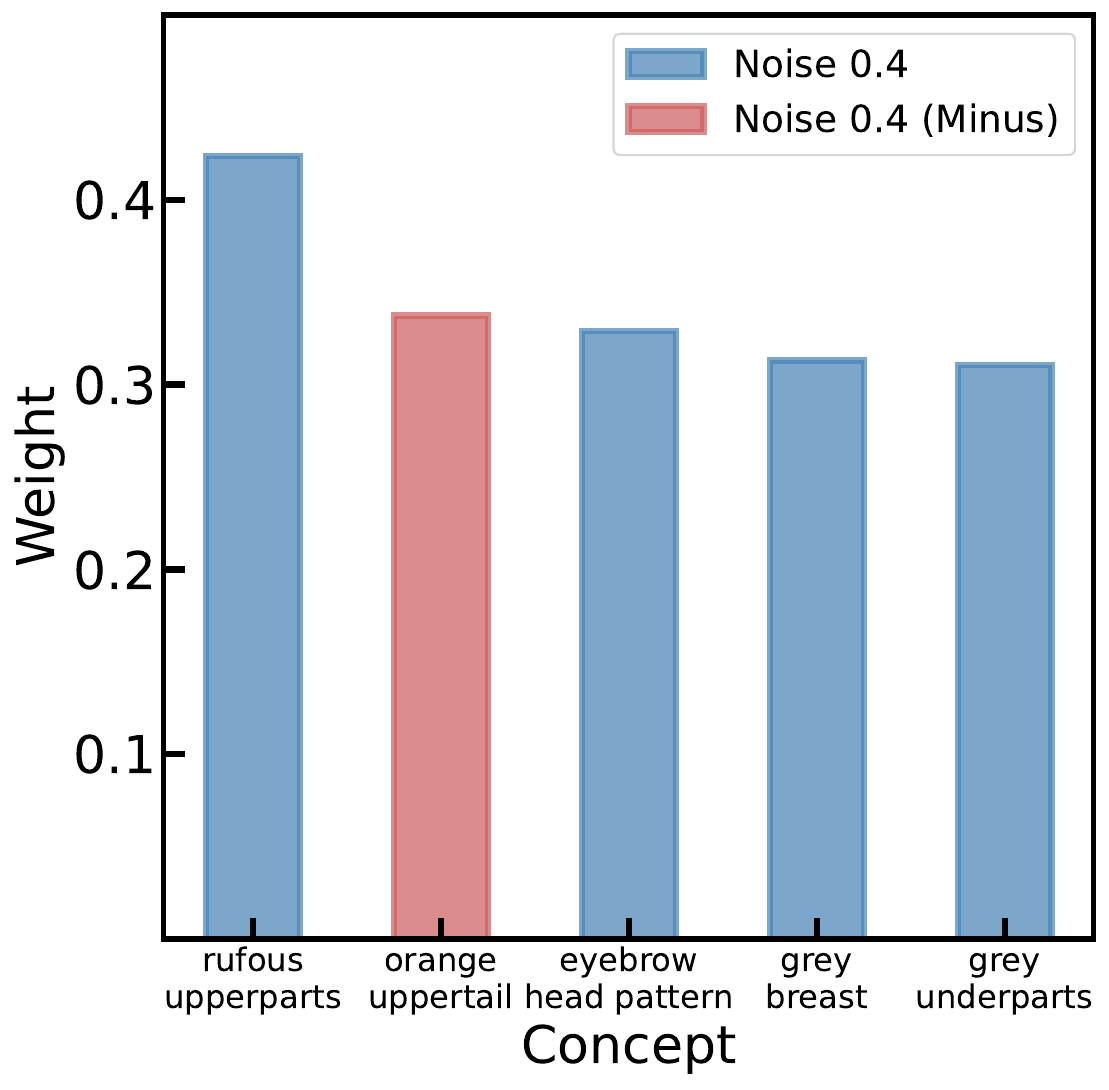}
        \subcaption{Key concepts under 40\% noise}
        \label{subfig:concept weight on noise 40 dataset}
    \end{subfigure}
    \caption{
        Effect of noise on representation alignment. 
        (b) Top 5 most influential (\ie, key) concepts for \textsc{Le Conte Sparrow} in a clean setting and tracks how their influence shifts as noise increases. 
        (c) Top 5 influential concepts under 40\% noise. 
        In both figures, bars in \textcolor{dblue}{blue} denote positive weights, while \textcolor{dred}{red} indicate negative weights, illustrating how noise reshapes concept importance.
    }
    \label{fig:concept impact}
    \vspace{-0.5em}
\end{figure}

\subsection{Disruption in Representation Alignment}
\label{subsec:concept-class relationship}
We next examine how noise during training reshapes the correspondence between learned representations and their associated target classes.  
In a noise‑free setting we first identify the five most influential representation dimensions (\ie, concepts) for the class \textsc{Le Conte Sparrow}.  
These dimensions align with meaningful visual attributes such as `\texttt{white upperparts}' and `\texttt{grey back}'.  
Their importance is quantified by the absolute magnitude of the weights in the linear target predictor $f$, a widely used proxy for feature saliency~\citep{molnar2025, li2016pruning, cheng2024survey}.  
We then trace how these saliency values change as the proportion of corrupted labels increases.

As illustrated in~\Cref{subfig:concept weight on clean dataset}, when the noise level is moderate (\ie, 20\%) the relative ordering of the predictive dimensions is largely preserved.  
At a higher noise level (\ie, 40\%), however, their saliency declines markedly.  
Informative dimensions lose influence, whereas irrelevant or spurious ones gain disproportionately large weights (see~\Cref{subfig:concept weight on noise 40 dataset}).  
Specifically, dimensions linked to `\texttt{grey back}' or `\texttt{yellow upper tail}' receive minimal weight, while a related attribute such as `\texttt{orange upper tail}' is assigned a negative weight.  
These shifts indicate a pronounced misalignment between the learned representations and their intended semantics. 

We further relate this effect to concept-level susceptibility.
In \Cref{fig:overall concept drop}, its five most influential dimensions, indicated with red arrows, coincide with the concepts most susceptible to noise.  
The same pattern emerges across the full dataset.  
Among 200 classes, 189 display an exact overlap between the five most influential dimensions and the noise-susceptible set. 
This widespread overlap exposes a critical weakness, \ie, when concepts that are noise-sensitive coincide with distorted representations, performance degrades sharply.
Thus, the intersection of susceptibility and disrupted alignment emerges as a principal source of instability, emphasizing the need for robust training methods to prevent representation collapse under noisy supervision.
\section{Mitigating Effects}
\label{sec:Mitigating the Noise Effects in cbms}  
\vspace{-0.2em}
We have shown that noise severely degrades the key aspects of \cbms, primarily due to a subset of concepts (\ie, \textit{susceptible} concept set) that are especially vulnerable to noisy supervision and frequently coincide with representational misalignments.
To address this, we introduce two mitigation strategies that operate at different stages of the model lifecycle: during training and inference.
First, we incorporate sharpness-aware minimization (\sam)~\cite{SAM} at training time to improve the prediction accuracy of susceptible concepts (\Cref{subsec:SAM}). 
Then, at inference time, we apply an uncertainty-based intervention strategy that prioritizes corrections on unreliable concept predictions (\Cref{subsec:SAM ucp}).

\subsection{Training-Time Robustness via \sam}
\label{subsec:SAM}
\vspace{-0.2em}
\paragraph{Sharpness-aware minimization.}
\sam induces parameters $\bw$ that lie in flat regions of the loss landscape, yielding models that are less sensitive to perturbations and therefore generalize more reliably. 
Its objective is
\begin{equation}
    \min_{\bw} \max_{\|\bvarepsilon\|_2 \leq \rho} \ell_{\mathrm{train}}(\bw+\bvarepsilon),
\end{equation}
where the inner maximization is approximated by
$\tilde{\bvarepsilon}(\bw) = \rho \frac{\nabla_{\bw} \ell_{\mathrm{train}}(\bw)}{\|\nabla_{\bw} \ell_{\mathrm{train}}(\bw)\|_2}$.
Gradient updates are then computed at $\bw + \tilde{\bvarepsilon}(\bw)$.
By steering solutions toward flatter optima, \sam attenuates the performance degradation caused by noisy labels and other small perturbations~\citep{SAM,SAMlabelnoise}, rivaling the best dedicated noise-robust training methods~\citep{MentorNet, zhang2018mixup, arazo2019unsupervised}.

\begin{table}[!t]
    \centering
    \caption{
        Comparison of concept accuracy, task accuracy, and CAS under different noise levels for \base and \sam. 
        \sam consistently enhances robustness across all evaluation metrics. 
        Red values indicate the performance improvement achieved by \sam relative to \base.
    }
    \label{tab:Base vs SAM in cbm all dataset}
    \resizebox{\linewidth}{!}{%
        \centering
        \begin{sc}
        \begin{tabular}{clr@{\hspace{0.4mm}}lr@{\hspace{0.4mm}}lr@{\hspace{0.4mm}}lr@{\hspace{0.4mm}}lr@{\hspace{0.4mm}}lr@{\hspace{0.4mm}}l}
        \toprule
        & & \multicolumn{6}{c}{CUB} & \multicolumn{6}{c}{AwA2}\\
        \cmidrule(lr){3-8} \cmidrule(lr){9-14}
        Method & Metric & \multicolumn{2}{c}{$\gamma = 0.0$} & \multicolumn{2}{c}{$\gamma = 0.2$} & \multicolumn{2}{c}{$\gamma = 0.4$} & \multicolumn{2}{c}{$\gamma = 0$} & \multicolumn{2}{c}{$\gamma = 0.2$} & \multicolumn{2}{c}{$\gamma = 0.4$} \\ 
        \midrule
    
        & Concept Acc & 96.52\std{0.0} & & 91.63\std{0.0} & & 85.42\std{0.0} & & 78.50\std{0.8} & & 78.08\std{0.7} & & 75.25\std{0.8} & \\
        & Task Acc  & 74.31\std{0.3} & & 50.35\std{0.7} & & 3.99\std{0.7} & & 86.45\std{0.9} & & 82.34\std{1.4} & & 41.85\std{1.0} & \\
        \multirow{-3}{*}{Base} & CAS & 84.31\std{0.0} & & 71.80\std{0.0} & & 58.44\std{0.0} & & 76.43\std{0.0} & & 74.97\std{0.0} & & 68.54\std{0.0} & \\
        \midrule
        & Concept Acc & 97.19\std{0.1} & \up{0.67}  & 92.54\std{0.1} &\up{0.91}  & 86.31\std{0.1}&\up{0.89}  & 78.78\std{0.8} & \up{0.28}  & 78.47\std{0.5}&\up{0.39}  & 75.85\std{1.3}&\up{0.60}  \\
        & Task Acc  & 78.96\std{0.8}&\up{4.65}  & 54.21\std{0.7}&\up{3.86}  &  4.95\std{1.4} & \up{0.96}  & 87.75\std{0.8} & \up{1.30}  & 85.73\std{0.4} & \up{3.39}  & 46.53\std{1.4} & \up{4.68}  \\
        \multirow{-3}{*}{\sam} & CAS & 87.45\std{0.0} & \up{3.14}  & 74.18\std{0.0} & \up{2.38}  & 60.44\std{0.0} & \up{2.00}  & 77.74\std{0.0} & \up{1.31} & 76.09\std{0.0} & \up{1.12} & 70.10\std{0.0} & \up{1.56}\\
        \bottomrule
        \end{tabular}
        \end{sc}
    }
    \vspace{-1em}
\end{table}

\begin{wraptable}{r}{0.55\linewidth}
    \centering
    \caption{
        Concept accuracy of \textit{susceptible} and \textit{non-susceptible} concepts. 
        \sam improves accuracy in the noise-sensitive susceptible group. 
        $\Delta$ indicates the gain of \sam over \base for susceptible concepts.
        }
    \label{tab:Base vs SAM susceptible}
    \begin{sc}
    \begin{small}
    \resizebox{\linewidth}{!}{%
        \centering
        \begin{tabular}{cccccc}
            \toprule
            Noise & Optimizer & Overall & Susceptible & $\Delta$ & Non-susceptible \\ 
            \midrule
            & \base & 96.52\std{0.0} & - & & - \\
            \multirow{-2}{*}{$\gamma = 0.0$} & \sam & 97.19\std{0.1} & - & & - \\
            \midrule
            & \base & 91.63\std{0.0} & 66.45\std{1.0} & & 99.02\std{0.0} \\
            \multirow{-2}{*}{$\gamma = 0.2$} & \sam & 92.54\std{0.1} & 70.30\std{0.7} &\textcolor{red}{\textbf{+3.85}} & 98.93\std{0.1} \\
            \midrule
            & \base & 85.42\std{0.0} & 39.58\std{3.0} & & 98.61\std{0.2} \\
            \multirow{-2}{*}{$\gamma = 0.4$} & \sam & 86.31\std{0.1} & 43.65\std{1.2} & \textcolor{red}{\textbf{+4.07}} & 98.64\std{0.3} \\
            \bottomrule
        \end{tabular}
    }
    \end{small}
    \end{sc}
    \vspace{-1em}
\end{wraptable}

\paragraph{\sam improves accuracy by stabilizing susceptible concepts.}
\label{subsec:SAM mitigation}
We compare \sam-trained \cbms with SGD-trained ones (\ie, \base) across concept accuracy, task accuracy, and CAS on the CUB and AwA2 datasets under varying noise levels. 
As shown in \Cref{tab:Base vs SAM in cbm all dataset}, \sam consistently improves all metrics across noise conditions. 
Notably, even modest gains in concept accuracy lead to substantial improvements in downstream task performance. For instance, under 40\% noise in AwA2, a 0.60\% increase in concept accuracy leads to a 4.68\% gain in task accuracy.
Previous work~\cite{SAMlabelnoise} attributed the robustness of \sam against noise to its $\ell_2$-regularization on final layer weights and intermediate activations.
In \Cref{app:theoretical analysis}, we extend the theoretical analysis of \citet{SAMlabelnoise} to CBMs, demonstrating that the same regularization effect holds.

We also emphasize that \sam is particularly effective at recovering concepts in the \textit{susceptible} concept set. 
\Cref{tab:Base vs SAM susceptible} shows that while concept accuracy in the non-susceptible set remains stable, \sam yields substantial gains in the susceptible set across all noise levels, \eg, yielding gains of +3.85\% and +4.07\% under 20\% and 40\% noise, respectively. 
These findings underscore the effectiveness of \sam in selectively enhancing the learning of noise-sensitive concepts, thereby enhancing both predictive robustness and model interpretability under corrupted supervision.

\begin{wrapfigure}{r}{0.53\linewidth}
    \centering
    \vspace{-1.25em}
    \begin{subfigure}{0.325\linewidth}
        \centering
        \includegraphics[width=0.95\linewidth]{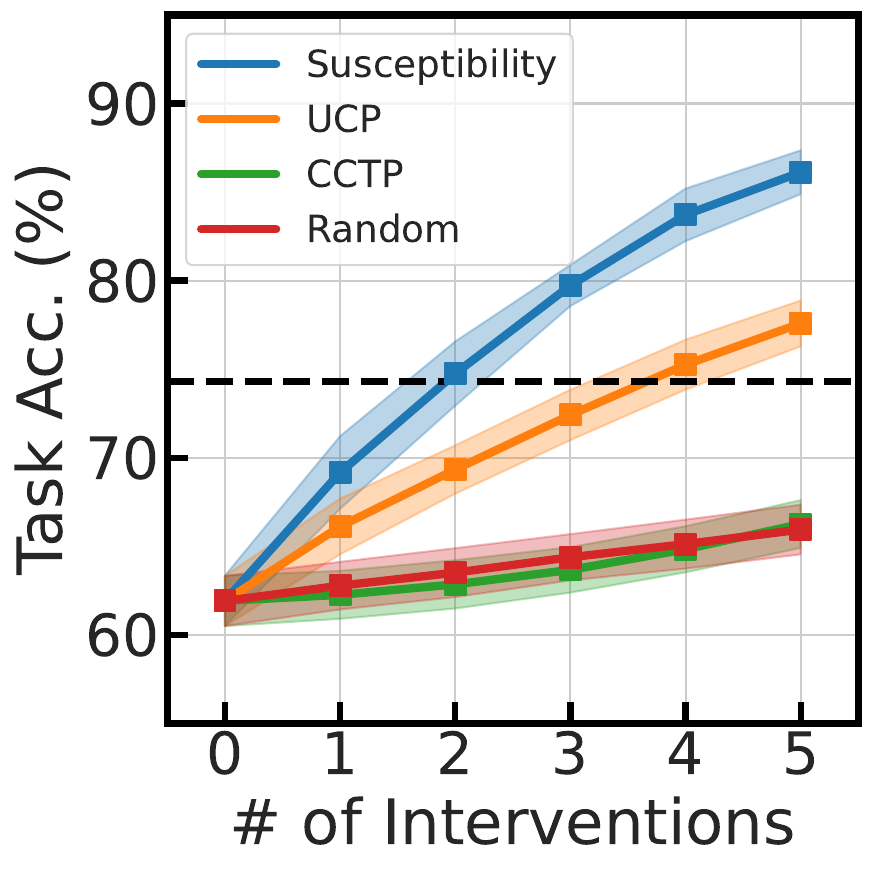}
        \subcaption{
            $\gamma = 0.1$ 
        }
    \end{subfigure}
    \hfill
    \begin{subfigure}{0.325\linewidth}
        \centering
        \includegraphics[width=0.95\linewidth]{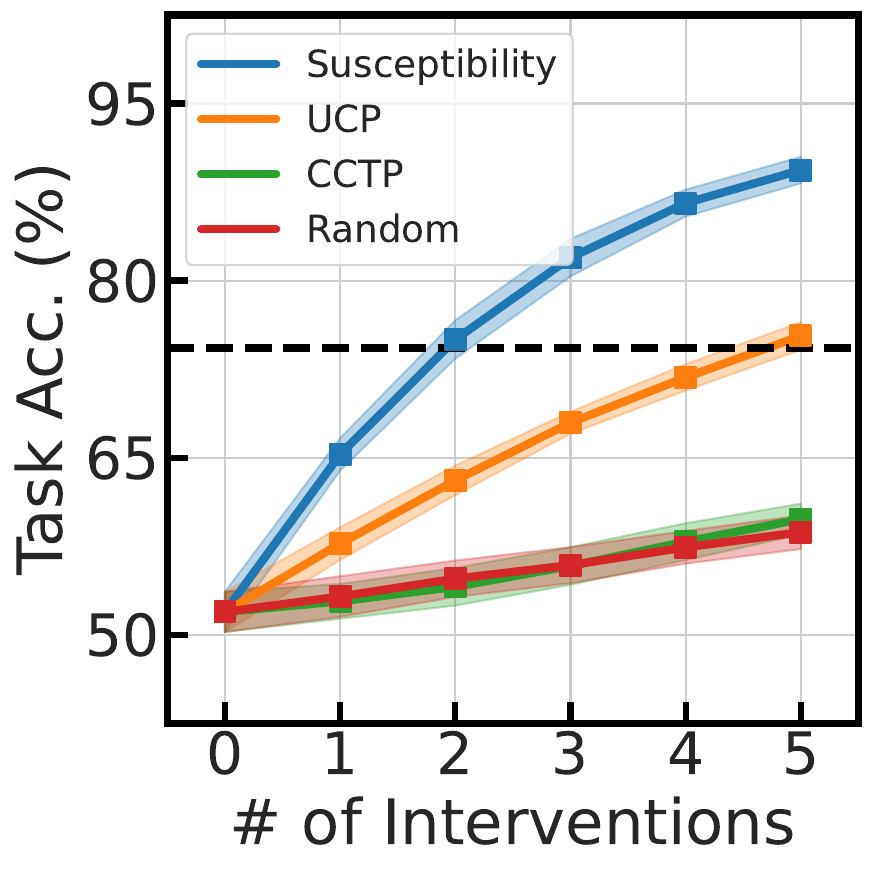}
        \subcaption{
            $\gamma = 0.2$ 
        }
    \end{subfigure}
    \hfill
    \begin{subfigure}{0.325\linewidth}
        \centering
        \includegraphics[width=0.95\linewidth]{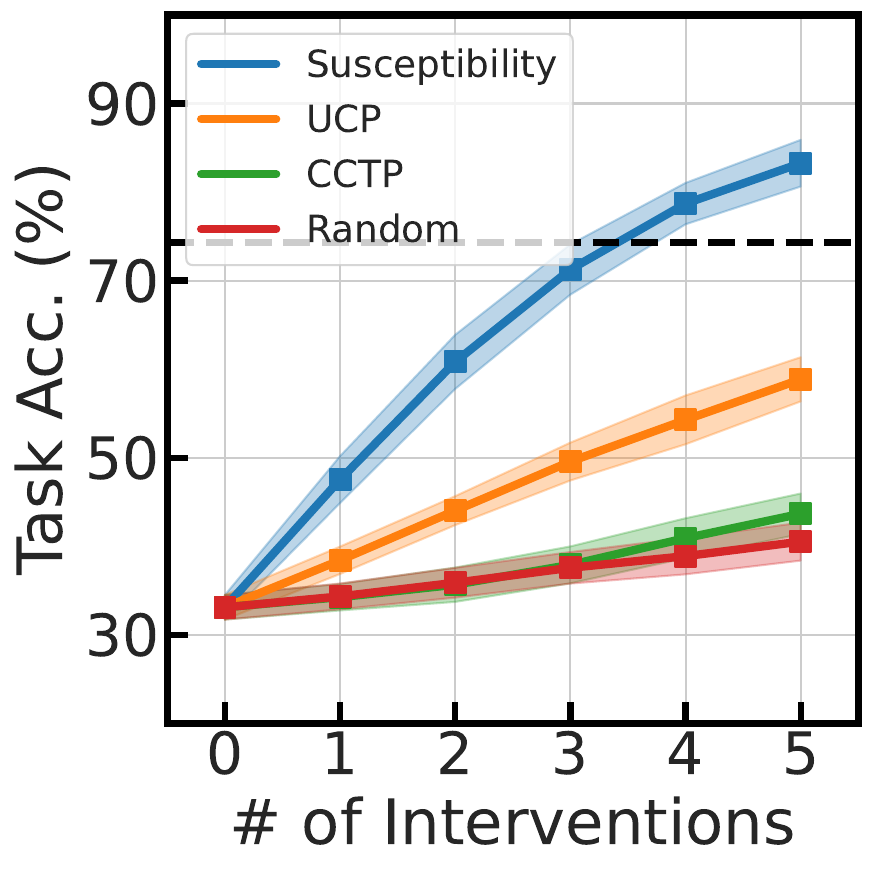}
        \subcaption{
            $\gamma = 0.3$ 
        }
    \end{subfigure}
    \caption{
        Intervention results comparing baseline and susceptibility-based corrections.
        The dotted line indicates the clean-model accuracy without intervention.
        }
    \label{fig:intervention impact}
    \vspace{-1em}
\end{wrapfigure}

\subsection{Inference-Time Robustness via Uncertainty-Guided Intervention}
\label{subsec:SAM ucp}
\paragraph{Intervention guided by susceptibility significantly enhances performance.}
Although training with \sam effectively mitigates the adverse effects of noise, additional performance gains are achievable through targeted inference-time interventions.
Motivated by our earlier findings that only a small subset of concepts significantly contributes to performance degradation, we prioritize corrections based on \textit{susceptibility}, defined as the magnitude of accuracy decline under noise.
Specifically, we select the five most susceptible concepts per class for correction.
As shown in \Cref{fig:intervention impact}, susceptibility-guided corrections significantly outperform baseline interventions across varying noise levels.
Remarkably, correcting even a single highly susceptible concept can yield accuracy improvements of around 10\% at higher noise intensities (\eg, 20\% and 30\%).
Moreover, fewer than five targeted interventions frequently achieve or surpass the accuracy of models trained on clean data.
These findings underscore that noise disproportionately affects a small subset of highly predictive and noise-sensitive concepts, and that targeted correction of these concepts substantially restores overall model performance.

\begin{figure}[t]
    \centering
    \begin{subfigure}{0.32\linewidth}
        \centering
        \includegraphics[width=\linewidth]{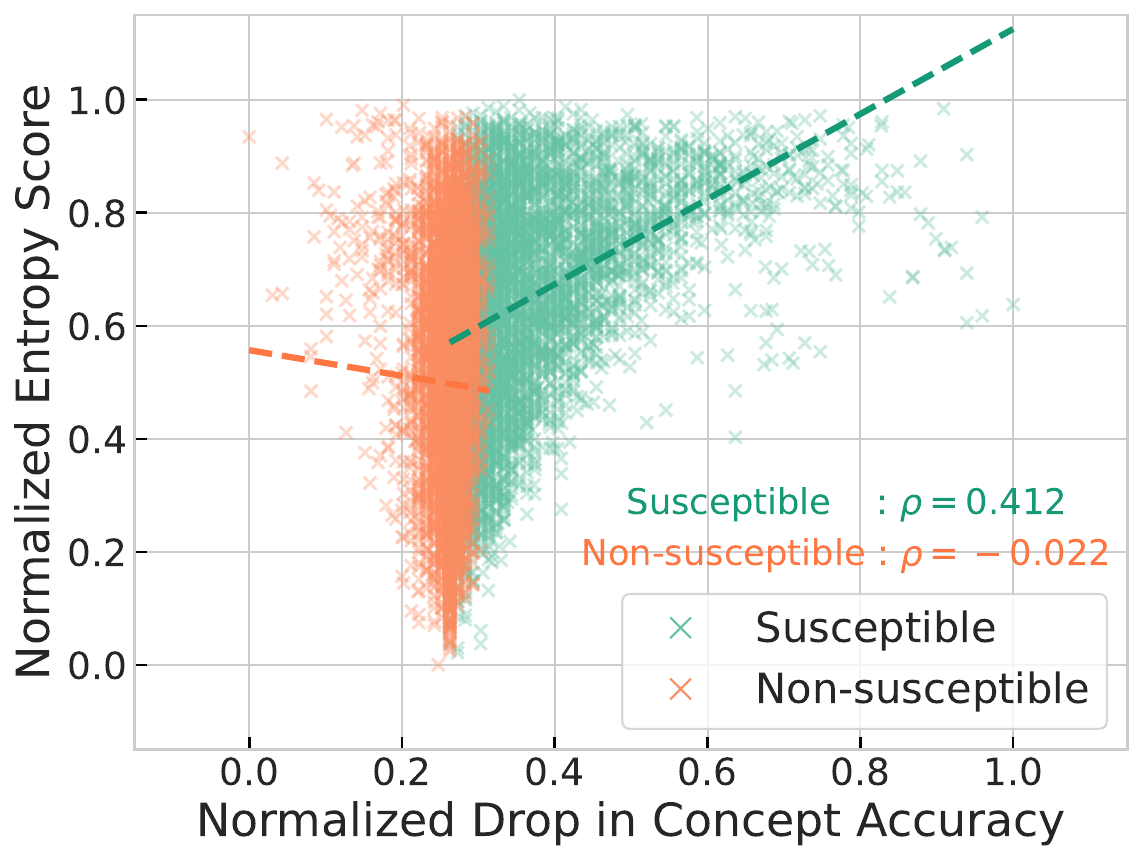}
        \subcaption{$\gamma = 0.1$}
    \end{subfigure}
    \hfill
    \begin{subfigure}{0.327\linewidth}
        \centering
        \includegraphics[width=\linewidth]{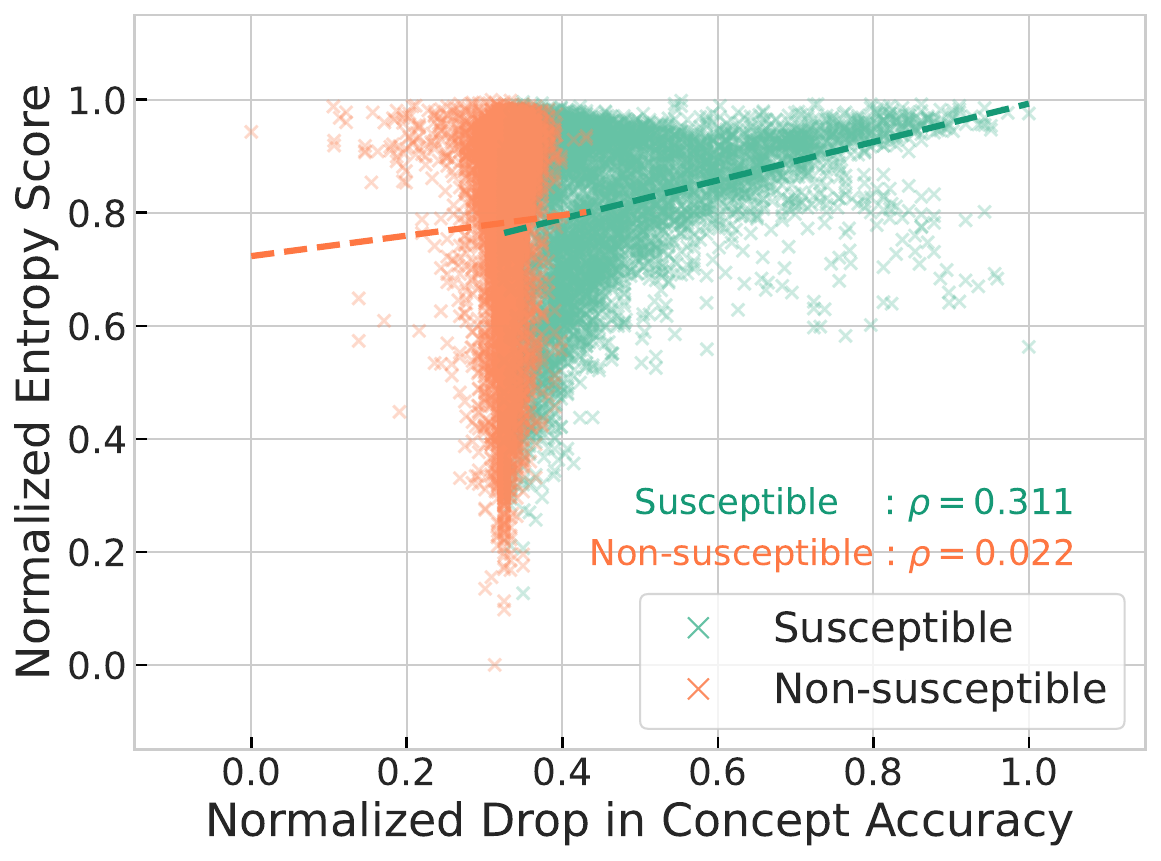}
        \subcaption{$\gamma = 0.2$}
    \end{subfigure}
    \hfill
    \begin{subfigure}{0.32\linewidth}
        \centering
        \includegraphics[width=\linewidth]{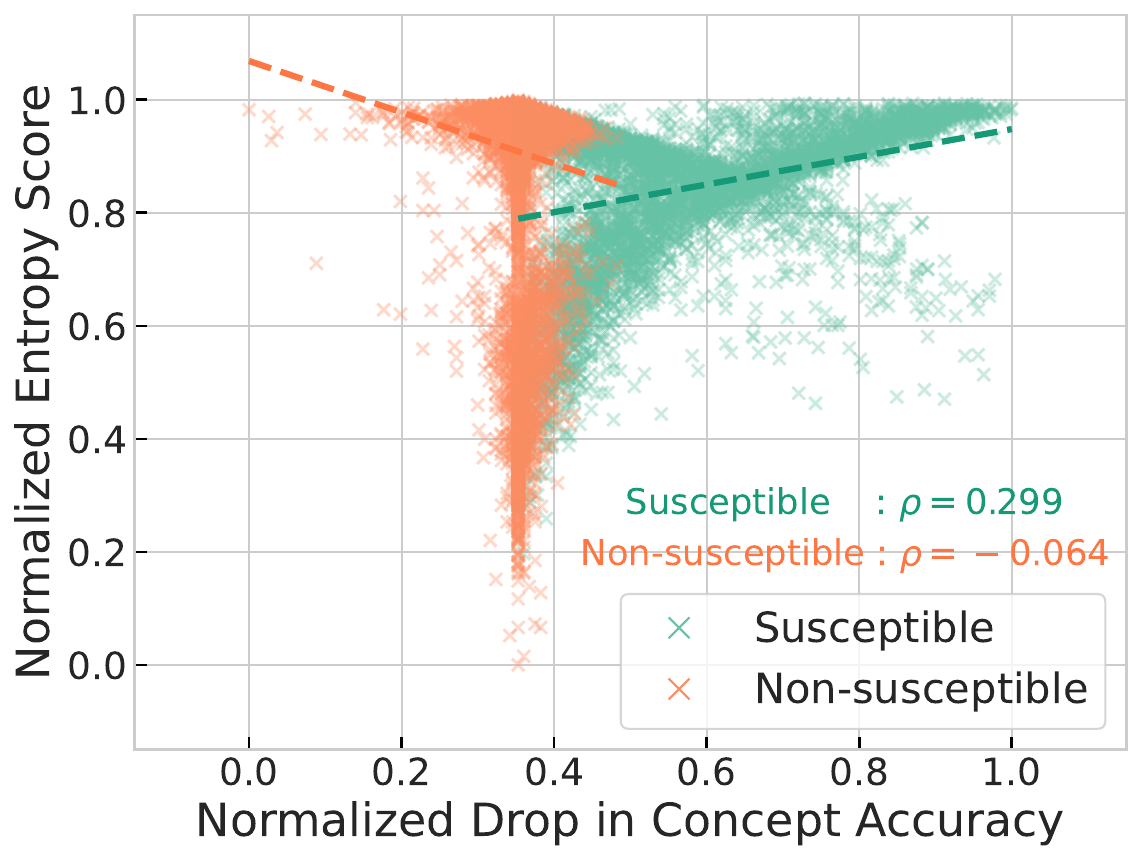}
        \subcaption{$\gamma = 0.3$}
    \end{subfigure}
    \caption{
        Correlation between uncertainty and susceptibility.
        Scatter plots show predictive uncertainty (normalized entropy; y-axis) versus susceptibility (normalized accuracy drop; x-axis) across all concepts.
        Pearson correlation values ($\rho$) are reported separately for susceptible and non-susceptible sets, revealing a positive correlation in the former and no notable trend in the latter.
    }
    \label{fig:correlation between uncertainty and susceptibility}
    \vspace{-1em}
\end{figure}

\paragraph{Susceptibility can be reliably approximated by uncertainty.}
While susceptibility-based concept selection is a highly effective intervention strategy, it is not feasible to perform in realistic scenarios since it requires the access to a model trained with clean labels.
To circumvent this challenge, we explore predictive uncertainty as a practical surrogate for susceptibility.
Intuitively, concepts that contribute little to target prediction or are noise-sensitive should exhibit elevated uncertainty.
To validate this hypothesis, we measure the Pearson correlation between concept-level uncertainty (quantified by the entropy of their predictive distributions~\citep{shannon1948mathematical}) and susceptibility.
As depicted in~\Cref{fig:correlation between uncertainty and susceptibility}, we observe a positive correlation within the subset of susceptible concepts, whereas no consistent relationship emerges among non-susceptible concepts.
These results suggest that predictive uncertainty effectively, though not definitively, reflects susceptibility induced by noise, supporting its use as a practical criterion for inference-time interventions.
In \Cref{sec:appendix_ucp_theory}, we provide theoretical justification demonstrating that, under reasonable assumptions, uncertainty-based selection asymptotically approximates susceptibility-based concept selection.

\begin{table}[t]
    \centering
    \caption{
        Task accuracy on the CUB dataset under varying noise levels $\gamma$ and numbers of interventions $n$, comparing combinations of training strategies (\base, \sam) and intervention methods (Random, Uncertainty-guided). 
        The combination of \sam training and uncertainty-guided intervention consistently yields the highest accuracy, rapidly recovering performance with only a few interventions.
        }
    \label{tab:sam+uncertainty vs base+random}
    \begin{sc}
    \begin{small}
    \resizebox{\linewidth}{!}{%
        \centering
        \begin{tabular}{lccccccccc}
            \toprule
            & \multicolumn{3}{c}{$\gamma = 0.0$} & \multicolumn{3}{c}{$\gamma = 0.2$} & \multicolumn{3}{c}{$\gamma = 0.4$}\\
            \cmidrule(lr){2-4} \cmidrule(lr){5-7} \cmidrule(lr){8-10}
            Method & $n=0$ & $n=5$ & $n=10$ & $n=0$ & $n=5$ & $n=10$ & $n=0$ & $n=5$ & $n=10$ \\ 
            \midrule
            \base + Random & 74.3\std{0.3} & 75.6\std{0.4} & 76.8\std{0.5} & 50.3\std{0.6} & 56.2\std{0.7} & 62.1\std{1.0} & 4.0\std{0.6} & 4.8\std{0.7} & 6.3\std{0.7} \\
            \rowcolor{tabgray} \base + Uncertainty & 74.3\std{0.3} & 81.8\std{0.6} & 87.0\std{0.7} & 50.3\std{0.6} & 71.2\std{1.7} & 82.0\std{1.4} & 4.0\std{0.6} & 6.9\std{0.9} & 10.7\std{1.4} \\
            \sam + Random & 79.0\std{0.7} & 80.1\std{0.6} & 81.1\std{0.8} & 54.2\std{0.6} & 59.9\std{0.8} & 65.6\std{0.9} & 5.0\std{1.2} & 5.6\std{0.8} & 7.1\std{0.7} \\
            \rowcolor{tabgray} \sam + Uncertainty & 79.0\std{0.7} & 86.2\std{0.8} & 90.9\std{0.7} & 54.2\std{0.6} & 75.2\std{0.7} & 85.1\std{0.9} & 5.0\std{1.2} & 8.0\std{1.5} & 12.4\std{1.7} \\
            \bottomrule
        \end{tabular}
    }
    \end{small}
    \end{sc}
    \label{subfig:combined results}
    \vspace{-0.5em}
\end{table}

\begin{wrapfigure}{r}{0.32\linewidth}
    \centering
    \vspace{-1.25em}
    \centering
    \begin{subfigure}{\linewidth}
        \centering
        \includegraphics[width=0.95\linewidth]{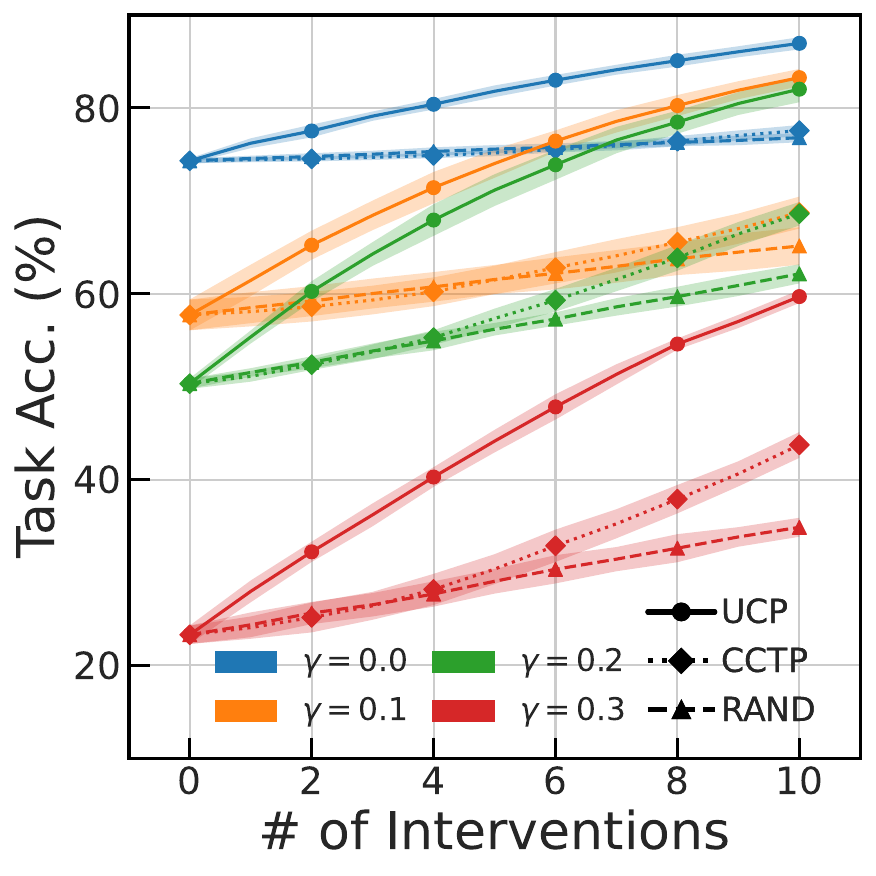}
    \end{subfigure}
    \caption{
        Task accuracy under different intervention strategies (\eg, UCP, Random, CCTP).
    }
    \label{fig:tti}
    \vspace{-1.5em}
\end{wrapfigure}

\paragraph{Uncertainty-based intervention markedly improves accuracy by targeting susceptible concepts.}
Building on the observed correlation between predictive entropy and susceptibility, uncertainty provides a principled signal for inference-time interventions.
Guided by the survey of \citet{Intervention3}, we employ uncertainty of concept prediction (UCP), which ranks concepts according to entropy $\mathcal{H}(\hat{c})$~\citep{shannon1948mathematical} and prioritize corrections of the most uncertain ones.
As illustrated in \Cref{fig:tti}, UCP consistently outperforms baseline methods such as Random selection and CCTP, with the margin of improvement increasing at higher noise intensities.
These results indicate that uncertainty-based interventions effectively approximate susceptibility-based strategies, successfully identifying and rectifying noise-sensitive concepts.

Finally, we present our culminating experiment, examining the combined effects of training-time mitigation via \sam and inference-time corrections through UCP.
\Cref{subfig:combined results} demonstrates that integrating \sam-based training with uncertainty-guided interventions nearly recovers the accuracy of cleanly trained models using only five targeted interventions at 20\% noise.
In stark contrast, models without \sam and using random corrections fall short even after ten interventions.
Taken together, these findings underscore the complementary benefit of combining sharpness-aware training with uncertainty-driven interventions, substantially enhancing robustness and preserving interpretability under noisy supervision.

\section{Discussion}
\paragraph{On the independence assumption}
Our primary objective is to provide a systematic analysis of how noise affects the performance and interpretability of CBMs. 
To isolate basic effects and to evaluate mitigation strategies in a controlled setting, we sought an experimental design in which concept noise and target noise can be manipulated independently. 
This design allows us to vary each factor without confounding and to attribute observed changes to a single source of corruption. 
For this reason, we considered the independence assumption a reasonable and transparent starting point for an initial analysis across a range of noise levels.

Nonetheless, we acknowledge that in real-world data, concept and target noise may be correlated.
To account for this, we include additional experiments under more realistic conditions, simulating a correlated noise scenario where concept corruption occurs only when the target label is flipped, thereby inducing dependence between concept and target noise.
The corresponding results are provided in~\Cref{abl:diverse noise}.
In summary, even under dependent noise, the positive association between uncertainty and susceptibility persists for susceptible concepts.
This pattern remains consistent across noise levels and indicates that our analysis and mitigation approach are effective in more realistic correlated noise settings.
However, as the dependent noise considered here is still relatively simple, it is important to account for more complex correlation structures that may arise in real-world data.

\paragraph{Benign appearance of target noise}
In our experiments, we observed that CBMs are highly sensitive to concept noise but relatively robust to target noise.
This robustness can be attributed to the structure of the CBM, where the target predictor $f$, following prior work~\citep{CBM}, is implemented as a single linear layer mapping concepts to the target.
Its limited capacity makes it less able to overfit noisy target labels, reducing sensitivity to target corruption. 
Moreover, CBM training primarily burdens the concept predictor $g$. 
When $g$ produces stable, accurate concept representations, $f$ operates on relatively clean inputs, so even severe target noise has a muted effect on final accuracy.

\paragraph{Generalization across CBM variants}
We primarily analyzed the impact of noise on CBMs using the most representative formulation, the original CBM~\citep{CBM}.
To assess whether our methodology, particularly the application of SAM, generalizes to other CBM variants, we extend our analysis accordingly.
Specifically, we evaluate several representative variants (\eg, CEM~\citep{CEM}, ECBM~\citep{ECBM}, AR-CBM~\citep{autoregressiveCBM}, and SCBM~\citep{SCBM}) to investigate how the presence of noise influences their performance.
As summarized in~\Cref{abl:diverse cbm variants}, all variants exhibit clear degradation in both concept and task accuracy as the noise level increases.

We further investigate whether the ability of SAM to mitigate the \emph{susceptible sets} generalizes across different CBM variants.
For SCBM, SAM showed clear improvements over SGD: at 20\% noise, 75.8 vs. 74.9 (+0.9), and at 40\% noise, 60.1 vs. 56.4 (+3.7). 
For AR-CBM and CEM, the gains were smaller: AR-CBM at 20\% noise 71.9 vs. 71.3; at 40\%, both 53.9; CEM at 20\%, 73.3 vs. 73.1; at 40\%, 54.6 vs. 54.5.
These results indicate that the benefits of SAM vary across CBM variants and that resilience to susceptible concepts may depend on the underlying model design.
While our focus remains on CBMs, we acknowledge that developing a broader and systematic framework encompassing multiple variants is important for understanding generalization under noisy conditions.

\paragraph{Generalization across modalities}
CBMs are now being explored beyond vision, including settings that combine CBMs with large language models~\citep{cbm-llm-1, sun2024concept}. 
The core CBM process first predicts concepts and then predicts the final target, which does not depend on a specific modality. Recent LLM based CBM studies employ a similar two stage design, so we expect our main findings to transfer to these settings.
That said, concept annotated datasets for LLM are not yet systematically established. 
An important next step is to curate well-defined concept annotated datasets suitable for LLM applications and to analyze how label noise in such datasets affects CBMs.

\paragraph{Rationale for employing SAM}
We adopted SAM based on prior findings by~\citet{SAMlabelnoise}, which demonstrate that SAM is a representative and often effective approach for alleviating label noise, and it can outperform alternative methods~\citep{zhang2018mixup, arazo2019unsupervised, MentorNet}. 
In our experiments, SAM also mitigated the impact of noisy concept annotations in CBMs, particularly for \emph{susceptible} concepts, confirming its suitability as a mitigation method in this setting.
We do not claim that SAM is the only solution; rather, we present it as one effective approach to the observed problem and note that identifying other suitable mitigation methods for different CBM variants is also important.
\section{Conclusion}

\textbf{Further analyses}\quad
Beyond the primary experiments, we undertook a suite of supplementary studies to deepen our evaluation of noise robustness in \cbms.
First, we performed ablation studies comparing alternative training strategies across varying noise levels.
Second, we benchmarked our framework against representative noise‑mitigation baselines.
Third, we evaluated robustness on several advanced \cbm variants under diverse noise configurations.
Finally, we carried out auxiliary experiments probing orthogonal aspects of resilience.
Comprehensive results are available in \Cref{app:further experiments}.

\textbf{Comparisons}\quad
Prior work has pursued robustness in related settings, yet each tackles a distinct aspect.
\citet{sinha2023understanding} analyzed adversarial perturbations of continuous concepts to develop defenses against malicious attacks, \citet{sheth2023auxiliary} enforced disentangled representations via auxiliary losses to improve generalization under distribution shifts, and \citet{penaloza2025preference} proposed a preference-optimization objective to reduce noise sensitivity.
In contrast, our study offers the first unified framework that measures, explains, and mitigates noise effects through the combined use of sharpness‑aware training and uncertainty‑guided correction.
Theoretical analysis and extensive empirical evaluation substantiate the effectiveness of this unified approach.

\textbf{Limitations}\quad
\label{app:limitations}
Despite the substantial robustness gains provided by our framework, several limitations warrant further investigation.
First, the method assumes well‑calibrated uncertainty estimates, which is an assumption that may not hold in data‑scarce or highly imbalanced regimes.
Second, the reliance on a linear target predictor, while beneficial for interpretability, may hinder generalization on tasks with more complex decision boundaries.
Third, our study is confined to binary concept labels; extending the framework to hierarchical, multi‑class, or continuous concepts remains a promising avenue for future work.
Addressing these limitations would further enhance the practical utility of \cbms in real‑world environments characterized by noisy or incomplete supervision.

\textbf{Future work}\quad
Our research opens up new avenues for future research.
First, extending our mitigation strategy to hierarchical, multi-class, or continuous concept spaces~\citep{pittino2023hierarchical, panousis2024coarse} would enable applications such as fine-grained scene understanding and clinical severity grading.
Second, the wider adoption of concept labels generated by large language models~\citep{labo, Labe-FreeCBM, vlm-label-cbm, sun2024concept} calls for a systematic study of their noise properties and their downstream impact on \cbms, which may offer important insights for both annotation protocols and model design.
Third, incorporating semi-supervised objectives~\citep{hu2024semi} and using active selection to identify the most informative concepts can decrease reliance on expensive or error-prone human labeling.
Progress in these areas will further advance the deployment of robust and interpretable \cbms in real‑world scenarios where supervision is noisy or incomplete.

\textbf{Closing remark}\quad
This analysis of \cbms establishes three core contributions as follows:\begin{enumerate}[leftmargin=*, noitemsep, label=\roman*.]
    \item \textbf{Measuring impact}\quad 
    We demonstrate that even moderate noise markedly diminishes the central strengths of \cbms (\eg, predictive performance, interpretability, and intervention efficacy) thereby revealing their vulnerability to imperfect supervision.
    
    \item \textbf{Understanding mechanisms}\quad
    We disclose that a small subset of \emph{susceptible} concepts can account for the majority of performance degradation.
    When these concepts coincide with misaligned representations in the target predictor, they trigger pronounced performance decline.

    \item \textbf{Mitigating effects}\quad 
    We propose a two‑stage strategy:  
    (i) sharpness‑aware minimization during training, which stabilizes noise‑sensitive concepts; and  
    (ii) uncertainty‑guided intervention at inference, which corrects only the most unreliable concept predictions.  
    Both components are theoretically motivated and supported by extensive empirical evidence.
\end{enumerate}
Collectively, these contributions establish a principled framework for building concept‑based models that remain reliable and transparent in the presence of noisy supervision.

\section*{Acknowledgement}
\vspace{-1em}
This work was partly supported by the Institute of Information \& communications Technology Planning \& Evaluation (IITP) grants (RS-2019-II191906, Artificial Intelligence Graduate School Program (POSTECH), RS-2022-II220959, (part2) Few-Shot learning of Causal Inference in Vision and Language for Decision Making), and the National Research Foundation of Korea (NRF) grant funded by the Korea government (MSIT) (RS-2023-00210466).

\bibliographystyle{abbrvnat}
\bibliography{main}


\newpage
\appendix
\tableofcontents
\addtocontents{toc}{\protect\setcounter{tocdepth}{2}}
\section{Theoretical Analysis for Sharpness-Aware Minimization for \cbms}
\label{app:theoretical analysis}
\begin{definition}[2-Layer Concept Bottleneck Model (\cbm)]
In a 2-layer \cbm, the input $x \in \mathbb{R}^d$ is used to predict $k$ binary concepts $c = [c_1, c_2, \dots, c_k]$, where $c_j \in \{0, 1\}$. The model consists of two layers. For each concept $c_j$, a shared first layer extracts intermediate activations, while a distinct second layer is specifically designed to predict that concept:
\begin{itemize}[leftmargin=*, noitemsep]
    \item \textbf{First layer:} $z = Wx$, where $W \in \mathbb{R}^{m \times d}$ is the weight matrix, and $z \in \mathbb{R}^m$ represents the intermediate activations.
    \item \textbf{Second layer:} $g(w_j, x) = \langle v_j, z \rangle = \langle v_j, W x \rangle$, where $v_j \in \mathbb{R}^m$ is the weight vector associated with concept $c_j$.
\end{itemize}

The predicted probability for each concept $c_j$ is computed using the sigmoid activation:
\begin{equation}
\sigma(g(w_j, x)) = \frac{1}{1 + e^{-g(w_j, x)}}.
\end{equation}
\end{definition}

\begin{definition}[Binary Cross-Entropy (BCE) Loss]
The Binary Cross-Entropy (BCE) loss for a single concept $c_j$ is defined as:
\begin{equation}
\ell(w_j, x, c_j) = -c_j \log(\sigma(g(w_j, x))) - (1 - c_j) \log(1 - \sigma(g(w_j, x))),
\end{equation}
where $g(w_j, x)$ is the model output for concept $c_j$. The total loss for all $k$ concepts is given by:
\begin{equation}
\mathcal{L}(w, x, c) = \sum_{j=1}^k \ell(w_j, x, c_j).
\end{equation}
\end{definition}

\begin{definition}[Sharpness-Aware Minimization (\sam)]
\sam optimizes the following objective to minimize the sharpness of the loss landscape:
\begin{equation}
\min_w \max_{\|\varepsilon\|_2 \leq \rho} \mathcal{L}(w + \varepsilon, x, c),
\end{equation}
where $\varepsilon = [\varepsilon^{(1)}, \varepsilon^{(2)}]$ represents perturbations applied to $W$ and $v$ respectively, and $\rho$ controls the magnitude of the perturbation.
For 2-layer \cbm, $\varepsilon_j = [\varepsilon^{(1)}, \varepsilon_j^{(2)}]$ is applied to $W$ and $v_j$, respectively.
\end{definition}

Before starting the analysis, we define the following variants of \sam:
 $$\text{1-\sam: } \nabla_{w + \varepsilon} \ell(w + \varepsilon, x, t) = t \sigma(-t f(w + \varepsilon, x)) \nabla_{w + \varepsilon} f(w + \varepsilon_i, x)$$
$$\text{Jacobian \sam: }\Delta^{\mathrm{J\text{-}SAM}} \ell(w+ \varepsilon, x, t) = t \sigma(-t f(w, x)) \nabla_{w + \varepsilon} f(w + \varepsilon, x).$$
Here, J-\sam applies the \sam perturbation only to the Jacobian term. 
\citet{SAMlabelnoise} has demonstrated that the majority of the effectiveness of \sam is also reflected in J-\sam. 
Following the previous work, we analyze the efficacy of \sam in \cbm using the J-\sam.

\begin{proposition} In \cbm with a 2-layer deep linear network $g(w_j, x) = \langle v_j, z \rangle = \langle v_j, W x \rangle$, J-\sam introduces adaptive $\ell_2$-regularization on both the intermediate activations and the final-layer weights.
\label{prop:sam}
\end{proposition}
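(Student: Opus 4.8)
The plan is to compute the J-\sam gradient in closed form for the bilinear map $f(w_j,x)=\langle v_j, Wx\rangle$ and read off the extra terms as regularizers. Because $f$ is \emph{affine} in each of $W$ and $v_j$ separately, its Jacobian is exactly linear in the perturbation, so no Taylor truncation is needed and the identity will hold exactly at the order of $\rho$ carried by the perturbation. First I would record the clean-point quantities. Writing $a = t\sigma(-tf(w,x))$ for the scalar logistic factor and $z = Wx$ for the activations, the loss gradient splits as $\nabla_W\ell = a\,v_j x^\top$ and $\nabla_{v_j}\ell = a\,z$, using $\nabla_W f = v_j x^\top$ and $\nabla_{v_j} f = z$. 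Hence the ascent direction has squared norm $\|\nabla_w\ell\|^2 = a^2(\|v_j\|^2\|x\|^2 + \|z\|^2)$, and the \sam perturbation $\tilde\varepsilon = \rho\,\nabla_w\ell/\|\nabla_w\ell\|$ becomes, componentwise, $\tilde\varepsilon^{(1)} = \rho\,\sgn(a)\,v_j x^\top/N$ and $\tilde\varepsilon_j^{(2)} = \rho\,\sgn(a)\,z/N$, where $N=\sqrt{\|v_j\|^2\|x\|^2 + \|z\|^2}$; note that $a$ cancels up to its sign.

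Next I would substitute into the J-\sam gradient $\Delta^{\mathrm{J\text{-}SAM}}\ell = a\,\nabla_{w+\varepsilon} f(w+\tilde\varepsilon, x)$, keeping the scalar $a$ frozen at the clean point. The crucial observation is a cross-coupling: since $\nabla_{v_j} f$ depends on $W$ and $\nabla_W f$ depends on $v_j$, the $W$-perturbation feeds the $v_j$-gradient and the $v_j$-perturbation feeds the $W$-gradient. Evaluating gives $\nabla_{v_j} f(w+\tilde\varepsilon) = z + \rho\,\sgn(a)\|x\|^2 v_j/N$ and $\nabla_W f(w+\tilde\varepsilon) = v_j x^\top + \rho\,\sgn(a)\,z x^\top/N$. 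Multiplying by $a$ and using $a\,\sgn(a)=|a|=\sigma(-tf)$ (since $|t|=1$) yields
\begin{align}
\Delta^{\mathrm{J\text{-}SAM}}_{v_j}\ell &= \underbrace{a\,z}_{\nabla_{v_j}\ell} + \lambda\,\|x\|^2\,v_j,\\
\Delta^{\mathrm{J\text{-}SAM}}_{W}\ell &= \underbrace{a\,v_j x^\top}_{\nabla_W\ell} + \lambda\,z x^\top,
\end{align}
with the shared \emph{adaptive} coefficient $\lambda = \rho\,\sigma(-tf(w,x))/N > 0$.

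Finally I would identify the extra terms as regularization gradients. Since $\nabla_{v_j}\big(\tfrac12\|v_j\|^2\big)=v_j$ and $\nabla_W\big(\tfrac12\|Wx\|^2\big)=\nabla_W\big(\tfrac12\|z\|^2\big)=z x^\top$, the additional contributions are exactly the gradients of a positive $\ell_2$ penalty on the final-layer weights $v_j$ (with effective weight $\lambda\|x\|^2$) and on the intermediate activations $z=Wx$ (with weight $\lambda$). Because $\lambda$ depends on the current weights and the data point through $\sigma(-tf)$ and $N$, the penalty is adaptive, which is the claim; summing over the $k$ concepts preserves the structure term by term. The main obstacle I anticipate is interpretive rather than computational: the two extra terms share one coefficient $\lambda$ but carry different data-dependent factors ($\|x\|^2$ versus $1$), so they do not descend from a single scalar penalty added to $\mathcal{L}$. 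I would therefore state the conclusion as two separate adaptive $\ell_2$-regularizers—one on activations and one on final-layer weights—and would carefully check the sign bookkeeping around $a\,\sgn(a)=|a|$ as well as the treatment of the shared first layer $W$, so that the per-concept perturbations $\varepsilon_j$ combine consistently when the total loss $\mathcal{L}=\sum_j \ell_j$ is considered.
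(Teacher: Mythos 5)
Your overall strategy---compute the J-\sam{} gradient in closed form and identify the extra terms with $\nabla_W \tfrac12\|z\|^2 = zx^\top$ and $\nabla_{v_j}\tfrac12\|v_j\|^2 = v_j$---is exactly the paper's approach. However, your execution has a genuine gap: you derive the perturbation per concept, normalizing by $N=\sqrt{\|v_j\|^2\|x\|^2+\|z\|^2}$, the norm of concept $j$'s \emph{own} gradient. SAM for the \cbm{} perturbs the whole parameter vector $(W, v_1,\dots,v_k)$ using the gradient of the \emph{total} loss $\mathcal{L}=\sum_j \ell_j$, normalized by $\|\nabla\mathcal{L}(w,x,c)\|_2$. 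Two consequences follow. First, your claim that the scalar error factor cancels ``up to its sign'' holds only when $k=1$; for the \cbm{}, $\|\nabla\mathcal{L}\|^2 = \|x\|^2\bigl\|\sum_i a_i v_i\bigr\|^2 + \|z\|^2\sum_i a_i^2$ (writing $a_i$ for the per-concept error scalar), and no such cancellation occurs, which is precisely why the resulting regularization is \emph{adaptive}---weighted by squared prediction errors. Second, and more importantly, the shared first layer receives a single perturbation $\varepsilon^{(1)} \propto \bigl(\sum_i a_i v_i\bigr)x^\top$ aggregated over \emph{all} concepts, so the extra term in the $v_j$-update is proportional to $\sum_i a_i a_j \|x\|^2 v_i$, not to $v_j$ alone. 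Your closing assertion that ``summing over the $k$ concepts preserves the structure term by term'' is therefore false for the second layer: the sum introduces cross-concept coupling terms $a_i a_j v_i$ ($i\neq j$) of indefinite sign, and only the diagonal term $a_j^2\|x\|^2 v_j$ is a bona fide $\ell_2$ penalty gradient. (The first-layer update survives summation cleanly, since each concept contributes $+a_j^2\, zx^\top$.)

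You did flag this worry yourself (``the treatment of the shared first layer $W$\dots''), but flagging is not resolving, and resolving it is the substance of extending the single-output analysis of the prior work to \cbms{}---which is what this proposition is for. In effect your computation reproduces the known $k=1$ case and then appeals to a summation argument that does not go through as stated. To repair it, redo the derivation with $\varepsilon^{(1)} = \rho\,\nabla_W\mathcal{L}/\|\nabla\mathcal{L}\|_2$ and $\varepsilon_j^{(2)} = \rho\,\nabla_{v_j}\mathcal{L}/\|\nabla\mathcal{L}\|_2$, carry the cross-concept sum explicitly in the second-layer update, and state the conclusion as the paper does: an activation penalty $\sum_j \frac{\rho\, a_j^2}{\|\nabla\mathcal{L}\|_2}\, zx^\top$ on $W$, and a weight-norm penalty on the final layer whose own-concept component is $\frac{\rho\, a_j^2 \|x\|^2}{\|\nabla\mathcal{L}\|_2}\, v_j$, accompanied by cross-concept coupling induced by the shared backbone.
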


\begin{proof}
The \sam update for the first-layer weight $W$ in \cbm is given by:
\begin{align}
    - \nabla_{W + \varepsilon^{(1)}} \mathcal{L}(w + \varepsilon, x, c) &=  \sum_{j=1}^{k} - \nabla_{W + \varepsilon^{(1)}} \ell(w_j + \varepsilon, x, c_j) \\
    &= \sum_{j=1}^{k} -(\sigma(g(w_j,x))-c_j)(v_j+\varepsilon_j^{(2)})x^\top \\
    &= \sum_{j=1}^{k} -(\sigma(g(w_j,x))-c_j)(v_j+\rho\frac{(\sigma(g(w_j,x)) -c_j)z}{\|\nabla \mathcal{L}(w,x,c)\|_2})x^\top \\
    &= \sum_{j=1}^{k} -\nabla_W \ell(w_j, x, c_j) -  \frac{\rho (\sigma(g(w_j,x))-c_j)^2}{\|\nabla \mathcal{L}(w,x,c)\|_2}zx^\top
\end{align}
where $z = Wx$ is the intermediate activation, and the \sam update for the second-layer weight $v_j$ is given by:
\begin{align}
    -\nabla_{v_j + \varepsilon_j^{(2)}} \ell(w_j + \epsilon, x, c_j) &= -(\sigma(g(w_j,x))-c_j) (W+\varepsilon^{(1)})x\\
    &= -(\sigma(g(w_j,x))-c_j)(W+\rho\frac{\sum_{i=1}^{k} (\sigma(g(w_i,x))-c_i)v_ix^\top}{\|\nabla \mathcal{L}(w,x,c)\|_2})x\\
    &= -\nabla \ell(w_j,x,c_j) - \frac{\rho (\sigma(g(w_j,x))-c_j)}{\|\nabla \mathcal{L}(w,x,c)\|_2} \sum_{i=1}^{k} (\sigma(g(w_i,x))-c_i)\|x\|^2v_i
\end{align}
\end{proof}

Proposition~\ref{prop:sam} demonstrates that \sam introduces a penalty based on the intermediate activations $zx^\top = \nabla_W \frac{1}{w} \|z\|^2$, scaled by the squared prediction error, along with factors $\rho$ and a scalar factor. 
Additionally, \sam imposes a weight norm penalty on the final-layer weights $v_j = \nabla_{v_j} \frac{1}{2} \|v_j\|^2$, which is similarly scaled by the prediction error and a scalar factor.
Following the original work, by regularizing the weight, \sam facilitates focused learning on clean data while mitigating the influence of noisy data by maintaining manageable loss levels for clean data and capping the loss growth for noisy data.


\section{Theoretical Justification for Uncertainty-Based Intervention}
\label{sec:appendix_ucp_theory}

We present a theoretical justification for selecting intervention concepts based on predictive uncertainty in \cbms. 
Specifically, we show that uncertainty-based selection approximates the \emph{susceptible} intervention set, \ie, the set of concepts whose correction yields the greatest expected improvement in task performance.

Let $\hat{c} = (\hat{c}_1, \dots, \hat{c}_k)$ denote the vector of predicted concept values produced by a \cbm trained on potentially noisy concept labels, and let $c^* = (c_1^*, \dots, c_k^*)$ denote the corresponding ground-truth concept labels.
Let $f$ be the target predictor and $\ell$ the task loss function.

\begin{definition}[Susceptibility]
The susceptibility of concept $c_i$ is defined as the expected reduction in loss achieved by correcting the prediction $\hat{c}_i$ to its ground-truth value $c_i^*$:
\begin{equation}
    \delta_i := \mathbb{E}_{(x, y)} \left[ \ell(f(\hat{c}), y) - \ell(f(\hat{c}_{-i}), y) \right],
\end{equation}
where $\hat{c}_{-i}$ denotes the concept vector $\hat{c}$ with the $i$-th entry replaced by $c_i^*$.
\end{definition}

Since $\hat{c}$ encodes the consequences of training-time noise, $\delta_i$ quantifies how the model’s noise-induced uncertainty at inference translates into potential performance gains through targeted correction.

\begin{definition}[Predictive Uncertainty]
The predictive uncertainty of concept $c_i$ is quantified via Shannon entropy~\citep{shannon1948mathematical}:
\begin{equation}
    u_i := \mathcal{H}(\hat{c}_i) = -\hat{c}_i \log \hat{c}_i - (1 - \hat{c}_i) \log (1 - \hat{c}_i).
\end{equation}
\end{definition}

\begin{definition}[Concept Subsets]
Let $\delta = (\delta_1, \dots, \delta_k)$ and $u = (u_1, \dots, u_k)$ denote the vectors of susceptibility and uncertainty across all concepts. 
We define the top-$s$ subsets:
\begin{align}
    \mathcal{S} &:= \operatorname{Top}_s(\delta) = \left\{ i \in [k] \mid \delta_i \text{ ranks among the top-$s$ in } \delta \right\}, \\
    \mathcal{U} &:= \operatorname{Top}_s(u) = \left\{ i \in [k] \mid u_i \text{ ranks among the top-$s$ in } u \right\}.
\end{align}
\end{definition}

Our objective is to characterize conditions under which the uncertainty-ranked set $\mathcal{U}$ approximates the true susceptibility-ranked set $\mathcal{S}$.

\begin{assumption}[Linear Target Predictor and First-Order Approximation]
\label{assump:linear_model}
Assume the target predictor is linear: $f(\hat{c}) = w^\top \hat{c}$ for some $w \in \mathbb{R}^k$, and that $\ell$ is differentiable. 

Applying a first-order Taylor expansion of $\ell(f(\hat{c}_{-i}), y)$ around $\hat{c}$ yields:
\begin{align}
    \ell(f(\hat{c}_{-i}), y) &\approx \ell(f(\hat{c}), y) + \nabla_{\hat{c}} \ell(f(\hat{c}), y)^\top (\hat{c}_{-i} - \hat{c}) \\
    &= \ell(f(\hat{c}), y) + \frac{\partial \ell}{\partial \hat{c}_i} (c_i^* - \hat{c}_i),
\end{align}
which implies:
\begin{equation}
    \delta_i \approx \mathbb{E} \left[ \left| \frac{\partial \ell}{\partial \hat{c}_i} \cdot (\hat{c}_i - c_i^*) \right| \right].
\end{equation}
By the chain rule:
\begin{equation}
    \frac{\partial \ell}{\partial \hat{c}_i} = \frac{\partial \ell}{\partial f} \cdot \frac{\partial f}{\partial \hat{c}_i} = \frac{\partial \ell}{\partial f} \cdot w_i,
\end{equation}
so:
\begin{equation}
    \delta_i \approx \mathbb{E} \left[ \left| w_i \cdot \frac{\partial \ell}{\partial f} \cdot (\hat{c}_i - c_i^*) \right| \right].
\end{equation}
\end{assumption}

\begin{assumption}[Lipschitz Continuity and Boundedness of Loss Gradient]
\label{assump:smooth}
Assume that the loss gradient $\frac{\partial \ell}{\partial f}$ is $L$-Lipschitz continuous:
\begin{equation}
    \left| \frac{\partial \ell}{\partial f}(f_1, y) - \frac{\partial \ell}{\partial f}(f_2, y) \right| \leq L |f_1 - f_2| \quad \text{for all } f_1, f_2 \in \mathbb{R}.
\end{equation}
Moreover, suppose that the predicted value $f(\hat{c})$ lies within a bounded interval $[a, b] \subset \mathbb{R}$. 
Then, for any reference point $f_0 \in [a, b]$, it follows from Lipschitz continuity that:
\begin{equation}
    \left| \frac{\partial \ell}{\partial f}(f(\hat{c}), y) \right| \leq \left| \frac{\partial \ell}{\partial f}(f_0, y) \right| + L \cdot |f(\hat{c}) - f_0| \leq M,
\end{equation}
for some constant $M$.
This ensures that the gradient magnitude is uniformly bounded over the domain of interest.
\end{assumption}

\paragraph{Simplification.}
To isolate the effect of uncertainty, we assume $|w_i|$ is approximately constant across all concepts. 
Under this simplification and \Cref{assump:smooth}:
\begin{equation}
    \delta_i \propto \mathbb{E}[|\hat{c}_i - c_i^*|].
\end{equation}

\begin{assumption}[Linear Relationship Between Uncertainty and Prediction Error]
\label{assump:uncertainty_error}
In binary classification, assume $\hat{c}_i \in (0,1)$ and $c_i^* \in \{0,1\}$. 
The expected absolute error is given by:
\begin{equation}
    \mathbb{E}[|\hat{c}_i - c_i^*|] = 2 \hat{c}_i (1 - \hat{c}_i).
\end{equation}
Observe that both $2\hat{c}_i(1 - \hat{c}_i)$ and the entropy $\mathcal{H}(\hat{c}_i)$ are symmetric about $\hat{c}_i = 0.5$ and attain their maximum at this point. 
Assume that the expected absolute error is approximately proportional to the predictive uncertainty:
\begin{equation}
    \mathbb{E}[|\hat{c}_i - c_i^*|] \approx \gamma \cdot u_i \quad \text{for some } \gamma > 0,
\end{equation}
where $u_i := \mathcal{H}(\hat{c}_i)$ denotes the entropy-based uncertainty of concept $c_i$.

Under this approximation, the susceptibility admits the following linear-noise model:
\begin{equation}
    \delta_i = \alpha u_i + \varepsilon_i,
    \label{eq:linear_noise_model}
\end{equation}
where $\alpha > 0$ and $\varepsilon_i$ is a zero-mean noise term with finite variance.
\end{assumption}

\begin{definition}[Kendall’s Tau Distance]
Given two rankings $\pi_\delta$ and $\pi_u$ of $k$ items, the Kendall’s Tau distance counts the number of discordant pairs:
\begin{equation}
    \tau(\pi_\delta, \pi_u) := \left| \left\{ (i, j) \mid i < j,\; (\pi_\delta(i) - \pi_\delta(j)) \cdot (\pi_u(i) - \pi_u(j)) < 0 \right\} \right|.
\end{equation}
\end{definition}

\begin{lemma}[Ranking Consistency via Kendall’s Tau]
\label{lem:kendall_tau}
Suppose $\delta_i = \alpha u_i + \varepsilon_i$, where the $\varepsilon_i$ are independent, zero-mean, and have bounded variance. 
Let $\pi_\delta$ and $\pi_u$ denote the rankings induced by $\delta$ and $u$, respectively. Then as $\sigma^2 \to 0$:
\begin{equation}
    \mathbb{E}[\tau(\pi_\delta, \pi_u)] \to 0.
\end{equation}
\end{lemma}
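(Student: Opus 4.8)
The plan is to reduce $\E[\tau(\pi_\delta,\pi_u)]$ to a sum of per-pair discordance probabilities and then control each one by the variance of the noise. I would assume that the values $u_1,\dots,u_k$ are fixed and distinct, so that the ranking $\pi_u$ is well defined and every gap $|u_i-u_j|$ is strictly positive; the expectation is taken over the noise variables $\varepsilon_1,\dots,\varepsilon_k$ only. By the definition of Kendall's Tau distance, a pair $(i,j)$ with $i<j$ contributes exactly when the ordering of $(\delta_i,\delta_j)$ disagrees with that of $(u_i,u_j)$, so that
\begin{equation}
\E[\tau(\pi_\delta,\pi_u)] = \sum_{i<j} \prob\!\left[\sign(\delta_i-\delta_j) \neq \sign(u_i-u_j)\right].
\end{equation}
Using the model $\delta_i-\delta_j = \alpha(u_i-u_j)+(\varepsilon_i-\varepsilon_j)$ together with $\alpha>0$, the sign of $\delta_i-\delta_j$ matches that of $u_i-u_j$ whenever the noise difference is smaller in magnitude than the deterministic signal $\alpha|u_i-u_j|$.

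Second, I would bound each discordance probability. A sign disagreement for the pair $(i,j)$ forces $|\varepsilon_i-\varepsilon_j| \ge \alpha|u_i-u_j|$, so the discordance event is contained in $\{|\varepsilon_i-\varepsilon_j| \ge \alpha|u_i-u_j|\}$. Since the $\varepsilon_i$ are zero-mean and independent with variance at most $\sigma^2$, we have $\Var(\varepsilon_i-\varepsilon_j)\le 2\sigma^2$, and Chebyshev's inequality gives
\begin{equation}
\prob\!\left[\sign(\delta_i-\delta_j)\neq\sign(u_i-u_j)\right] \le \frac{\Var(\varepsilon_i-\varepsilon_j)}{\alpha^2 (u_i-u_j)^2} \le \frac{2\sigma^2}{\alpha^2 (u_i-u_j)^2}.
\end{equation}
Summing over all $\binom{k}{2}$ pairs then yields
\begin{equation}
\E[\tau(\pi_\delta,\pi_u)] \le \frac{2\sigma^2}{\alpha^2}\sum_{i<j}\frac{1}{(u_i-u_j)^2} =: C\,\sigma^2,
\end{equation}
so that letting $\sigma^2\to 0$ gives $\E[\tau(\pi_\delta,\pi_u)]\to 0$, as claimed.

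The main obstacle is ensuring that the constant $C=\tfrac{2}{\alpha^2}\sum_{i<j}(u_i-u_j)^{-2}$ is genuinely finite and does not interfere with the limit. For a fixed number of concepts $k$ and fixed distinct uncertainties this is immediate, but the bound degrades as any gap $|u_i-u_j|$ shrinks, so some minimum-separation assumption (or, equivalently, a conditioning argument that treats the $u_i$ as given deterministic values) is needed to keep $C$ bounded. A secondary point to handle carefully is the treatment of ties or near-ties in $u$: if gaps are allowed to vanish the per-pair bound becomes vacuous, and one would instead argue that such pairs contribute negligibly because their susceptibility ordering is itself ambiguous and hence should not count against ranking consistency. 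I would therefore state the distinct-$u$ (or minimum-gap) assumption explicitly and emphasize that the convergence is a small-noise statement with $k$ held fixed, which is precisely the regime relevant to the uncertainty-based selection argument.
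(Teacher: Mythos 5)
Your proof follows the same route as the paper's: reduce the expected Kendall tau distance to a sum of per-pair sign-disagreement probabilities via the decomposition $\delta_i - \delta_j = \alpha(u_i - u_j) + (\varepsilon_i - \varepsilon_j)$, and show that each probability vanishes as $\sigma^2 \to 0$. The only difference is one of rigor, not of strategy: where the paper simply asserts that the noise difference ``vanishes in probability,'' you quantify the step with Chebyshev's inequality to obtain an explicit $O(\sigma^2)$ bound, and you state explicitly the distinct-$u_i$ (minimum-gap) assumption that the paper leaves implicit but genuinely needs, since a pair with $u_i = u_j$ would have discordance probability that does not vanish.
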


\begin{proof}
For any pair $(i, j)$, we can write:
\begin{equation}
    \delta_i - \delta_j = \alpha(u_i - u_j) + (\varepsilon_i - \varepsilon_j).
\end{equation}
As $\sigma^2 \to 0$, the perturbation term $\varepsilon_i - \varepsilon_j$ vanishes in probability. 
Therefore,
\begin{equation}
    \mathbb{P}\left[\operatorname{sign}(\delta_i - \delta_j) \ne \operatorname{sign}(u_i - u_j)\right] \to 0.
\end{equation}
Summing over all $\binom{k}{2}$ concept pairs establishes the result.
\end{proof}

\begin{proposition}[Approximate Recovery of the Susceptible Set]
\label{prop:consistency}
Under \Cref{assump:linear_model} and \ref{assump:smooth}, and assuming the linear-noise model~\eqref{eq:linear_noise_model}, the uncertainty-based top-$s$ set $\mathcal{U}$ recovers the susceptibility-based set $\mathcal{S}$ in the low-noise regime:
\begin{equation}
\lim_{\sigma^2 \to 0} \mathbb{P}[\mathcal{U} = \mathcal{S}] = 1.
\end{equation}
\end{proposition}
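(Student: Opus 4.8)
The plan is to reduce the set-recovery statement to the ranking-consistency result already established in \Cref{lem:kendall_tau}. The crucial observation is that exact ranking agreement implies top-$s$ set agreement: whenever the two induced rankings coincide, i.e. $\tau(\pi_\delta, \pi_u) = 0$, the $s$ highest-ranked concepts under $\delta$ and under $u$ are identical, so $\mathcal{S} = \mathcal{U}$. Hence the event $\{\tau = 0\}$ is contained in $\{\mathcal{U} = \mathcal{S}\}$, and it suffices to show $\mathbb{P}[\tau = 0] \to 1$. First I would note that $\mathcal{U}$ is deterministic, since it depends only on the fixed uncertainties $u_i$, whereas $\mathcal{S}$ is random through the noise $\varepsilon_i$ entering the linear model~\eqref{eq:linear_noise_model}.

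Next, because $\tau$ is a nonnegative integer-valued random variable, Markov's inequality gives $\mathbb{P}[\tau \geq 1] \leq \mathbb{E}[\tau]$, so that
\[
\mathbb{P}[\mathcal{U} = \mathcal{S}] \;\geq\; \mathbb{P}[\tau = 0] \;=\; 1 - \mathbb{P}[\tau \geq 1] \;\geq\; 1 - \mathbb{E}[\tau(\pi_\delta, \pi_u)].
\]
Letting $\sigma^2 \to 0$ and invoking the conclusion $\mathbb{E}[\tau] \to 0$ of \Cref{lem:kendall_tau} then closes the argument. To obtain a quantitative rate, I would instead argue directly at the decision boundary. Writing $\Delta_u := u_{(s)} - u_{(s+1)}$ for the gap between the $s$-th and $(s+1)$-th largest uncertainties, the sets fail to coincide only if some concept inside $\mathcal{U}$ is overtaken in susceptibility by one outside it. For such a pair $i \in \mathcal{U}$, $j \notin \mathcal{U}$ we have $\delta_i - \delta_j = \alpha(u_i - u_j) + (\varepsilon_i - \varepsilon_j) \geq \alpha \Delta_u + (\varepsilon_i - \varepsilon_j)$, so a union bound over the at most $s(k-s)$ boundary pairs combined with Chebyshev's inequality applied to the mean-zero, variance-$\leq 2\sigma^2$ difference $\varepsilon_i - \varepsilon_j$ yields $\mathbb{P}[\mathcal{U} \neq \mathcal{S}] \leq 2 s (k-s) \sigma^2 / (\alpha^2 \Delta_u^2) \to 0$.

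The main obstacle is a non-degeneracy condition that the statement leaves implicit: the conclusion can hold only if there is a strict separation at the boundary, $\Delta_u = u_{(s)} - u_{(s+1)} > 0$ (and, for $\mathcal{U}$ and $\pi_u$ to be unambiguously defined, that the $u_i$ are distinct). If the $s$-th and $(s+1)$-th largest uncertainties tie, arbitrarily small noise flips their order with probability bounded away from $0$, and $\mathbb{P}[\mathcal{U} = \mathcal{S}]$ need not converge to $1$. I would therefore state this boundary gap as an explicit hypothesis; under it, the direct Chebyshev bound is the cleaner route, and its $1/\Delta_u^2$ dependence transparently quantifies how the convergence slows as the gap shrinks.
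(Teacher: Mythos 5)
Your proposal is correct, and its core reduction is the same as the paper's: the paper's proof simply invokes \Cref{lem:kendall_tau} (``the rankings converge in probability'') and concludes that equality of the top-$s$ sets ``follows in the limit.'' What you add is genuinely more than the paper provides. First, you make the reduction rigorous: the paper never explains how $\mathbb{E}[\tau]\to 0$ yields $\mathbb{P}[\mathcal{U}=\mathcal{S}]\to 1$, whereas your chain $\mathbb{P}[\mathcal{U}=\mathcal{S}] \ge \mathbb{P}[\tau=0] = 1-\mathbb{P}[\tau\ge 1] \ge 1-\mathbb{E}[\tau]$, using Markov's inequality on the nonnegative integer-valued $\tau$, is exactly the missing step. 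Second, your direct boundary argument (union bound over the $s(k-s)$ pairs crossing the threshold, plus Chebyshev on $\varepsilon_i-\varepsilon_j$) is a different and stronger route: it bypasses the lemma entirely and yields the explicit rate $\mathbb{P}[\mathcal{U}\neq\mathcal{S}] \le 2s(k-s)\sigma^2/(\alpha^2\Delta_u^2)$, which the paper's qualitative argument cannot give. Third, and most importantly, you correctly identify a hypothesis the paper leaves implicit: a strict gap $\Delta_u = u_{(s)}-u_{(s+1)}>0$ (more generally, distinctness of the $u_i$ involved). This is not pedantry --- both the proposition \emph{and} \Cref{lem:kendall_tau} itself fail without it, since the lemma's proof asserts $\mathbb{P}[\operatorname{sign}(\delta_i-\delta_j)\neq\operatorname{sign}(u_i-u_j)]\to 0$, which is false when $u_i=u_j$ (there the sign of $\delta_i-\delta_j$ is the sign of $\varepsilon_i-\varepsilon_j$, a near coin flip at every noise level). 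Stating the gap condition explicitly, as you propose, is the correct fix, and your $1/\Delta_u^2$ rate makes transparent how the guarantee degrades as that gap closes.
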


\begin{proof}
By \Cref{lem:kendall_tau}, the rankings $\pi_u$ and $\pi_\delta$ converge in probability as the noise variance $\sigma^2$ tends to zero. 
Since both $\mathcal{U}$ and $\mathcal{S}$ are determined by the top-$s$ indices in their respective rankings, their equality follows in the limit.
\end{proof}

This analysis provides both intuitive and formal support for the use of uncertainty-guided interventions in \cbms.
While the assumptions (\eg, constant weight or the approximate linear relationship between entropy and prediction error) may not hold strictly in practice, empirical evidence (see \Cref{fig:correlation between uncertainty and susceptibility}) demonstrates a consistent and positive correlation between predictive uncertainty and susceptibility, validating the practical utility of this approach.


\section{Experimental Details}
\label{app:exp detail}
This section presents detailed information on datasets, \cbm training strategies, implementation settings, label noise injection, and concept selection criteria. 
Some content may overlap with the main paper.

\subsection{Datasets}
\label{app:datasets}
This study employs two real-world datasets that are widely used in prior work for evaluating the performance of \cbms under diverse conditions.
CUB~\cite{CUB} is a fine-grained bird classification dataset consisting of 5,994 training and 5,794 test images. 
Following \citet{CBM}, we utilize 112 of the 312 most frequently occurring binary attributes for concept supervision~\cite{CBM, CEM, ECBM}.
AwA2~\cite{AwA2} is a benchmark for zero-shot learning comprising 37,322 images spanning 50 animal categories, each annotated with 85 binary attributes used as concept labels~\citep{ECBM}.
For both datasets, we adopt the preprocessing strategy described in \citet{CBM}, applying random color jitter, horizontal flipping, and cropping to a resolution of $224 \times 224$ during training. 
At inference time, images are center-cropped and resized to the same resolution.

\subsection{\cbm Training Strategies}
\label{app:cbm training}
Following \citet{CBM}, we consider three training strategies for the concept predictor $g$ and the target predictor $f$:
(i) \textit{Independent}: The two components are trained separately. The target predictor $\hat{f}$ is trained using the ground-truth concepts $c$ via
\begin{equation}
    \hat{f} = \argmin_f \sum_i \ell_Y (f(c^{(i)}), y^{(i)}),
\end{equation}
while the concept predictor $\hat{g}$ is learned independently as
\begin{equation}
    \hat{g} = \argmin_g \sum_{i, j} \ell_{C_j} (g_j(x^{(i)}), c_j^{(i)}).
\end{equation}
At inference time, the final prediction is obtained by composing $\hat{f}(\hat{g}(x))$;
(ii) \textit{Sequential}: This strategy first trains the concept predictor $\hat{g}$ as above, and subsequently optimizes the target predictor $f$ using the predicted concepts $\hat{g}(x)$;
(iii) \textit{Joint}: The concept and target predictors are trained simultaneously by minimizing a combined loss:
\begin{equation}
    \hat{f}, \hat{g} = \argmin_{f, g} \sum_i \left[\ell_Y(f(g(x^{(i)})), y^{(i)}) + \sum_j \lambda \ell_{C_j}(g(x^{(i)}), c^{(i)})\right].
\end{equation}
Here, $\ell_{C_j} : \mathbb{R} \times \mathbb{R} \to \mathbb{R}_+$ denotes the loss function for the $j$-th concept, and $\ell_Y : \mathbb{R} \times \mathbb{R} \to \mathbb{R}_+$ denotes the loss for target prediction.
Our main experiments are conducted using the \textit{Independent} strategy, and comparative results for the other strategies are provided in \Cref{abl:training strategies}.

\subsection{Implementation Details}
\label{app:implementation details}
For concept prediction, we fine-tune InceptionV3~\cite{labelsmoothing}, pre-trained on ImageNet~\cite{ImageNet}, on the CUB~\cite{CUB} and AwA2~\cite{AwA2} datasets. 
The target predictor $f$ is implemented as a single-layer linear classifier, following the standard setup in \citet{CBM}. 
All experiments are conducted with three random seeds, and we report the average performance.
Training and evaluation are conducted on NVIDIA GeForce RTX 3090 (24GB), RTX A6000 (48GB), and A100 (80GB) GPUs.
The code is available at \href{https://github.com/LOG-postech/CBM-Noise}{https://github.com/LOG-postech/CBM-Noise}.

For the \textit{independent} and \textit{sequential} strategies, we use a learning rate of $10^{-2}$; 
the \textit{joint} model is trained with a learning rate of $10^{-3}$. 
All models are trained for 500 epochs using a batch size of 64 and momentum of 0.9. 
We apply a weight decay of $4 \times 10^{-5}$ and reduce the learning rate by a factor of 0.1 when the validation loss does not improve for 10 consecutive epochs. 
Early stopping is triggered after 15 epochs. 
To address class imbalance in concept labels, we adopt a weighted cross-entropy loss as in \citet{CBM}.
For models trained with the \sam optimizer, the sharpness parameter $\rho$ is selected via grid search over $\{0.01, 0.05, 0.1\}$, while all other training hyper-parameters are kept consistent with the baseline.  
The optimal $\rho$ is selected based on task accuracy.

All additional \cbm variants shown in \Cref{abl:diverse cbm variants} are trained on the CUB dataset for 300 epochs using the Adam optimizer~\citep{Adam} with a fixed learning rate of $10^{-4}$, following \citet{SCBM}. 
For the independently trained autoregressive \cbm~\citep{autoregressiveCBM}, training is scheduled such that two-thirds of the total epochs are allocated to learning the concept predictor $g$, and the remaining one-third to the target predictor $f$, consistent with \citet{SCBM}.
We employ ResNet-18~\citep{he2016deep} as the backbone for CEM, SCBM, and autoregressive \cbm, and ResNet-101 for ECBM. 
The batch size is set to 256 for CEM, SCBM, and autoregressive \cbm models, and 64 for ECBM. 
When applying \sam to these models, we perform a grid search over $\rho \in \{0.01, 0.05, 0.1\}$, while keeping all other training hyper-parameters aligned with those used for the Adam baseline.

We conduct a targeted hyperparameter search for each label noise mitigation technique evaluated in \Cref{app:label noise mitigation}, including label smoothing regularization (LSR)~\citep{labelsmoothing}, symmetric cross-entropy learning (SL)~\citep{sce}, and \sam.
For each method, we tune the corresponding parameters (see \Cref{tab:hyperparameters}), such as the smoothing factor $\epsilon$ for LSR, regularization coefficients $\alpha$ and $\beta$ for SL, and the sharpness radius $\rho$ for SAM, under varying levels of noise ($\gamma \in \{0.0, 0.1, 0.2, 0.3, 0.4\}$).
The best hyperparameters are selected based on task accuracy.

\begin{table}[h]
    \centering
    \caption{
        Hyperparameter search ranges for label noise mitigation methods. 
        Here, $g$ and $f$ refer to the concept predictor and target predictor in the \cbm framework, respectively.
    } 
    \resizebox{\linewidth}{!}{%
    \begin{tabular}{cccccccc}
        \toprule
        & & & & \multicolumn{4}{c}{\textbf{Assignment}} \\
        \cmidrule(lr){5-8}
        \textbf{Method} & \textbf{Hyper-parameter} & \textbf{Search Range} & \textbf{Model} & $\gamma=0.0$ & $\gamma=0.1$ & $\gamma=0.2$ & $\gamma=0.3$ \\
        \midrule

        & & & g & 0.0001 & 0.0001 & 0.001 & 0.0001 \\
        & \multirow{-2}{*}{initial LR} & \multirow{-2}{*}{\{0.01, 0.001, 0.0001\}} & f & 0.01 & 0.01 & 0.01 & 0.01 \\
        
        & & & g & 0.1 & 0.1 & 0.2 & 0.2 \\
        \multirow{-4}{*}{LSR} & \multirow{-2}{*}{epsilon} & \multirow{-2}{*}{\{0.1, 0.2\}} & f & 0.1 & 0.2 & 0.2 & 0.2 \\
        
        \midrule
        
        & & & g & 0.001 & 0.001 & 0.0001 & 0.01 \\
        & \multirow{-2}{*}{initial LR} & \multirow{-2}{*}{\{0.01, 0.001, 0.0001\}} & f & 0.01 & 0.01 & 0.01 & 0.001 \\
        
        & & & g & 1.0 & 0.1 & 5.0 & 0.1 \\
        & \multirow{-2}{*}{alpha} & \multirow{-2}{*}{\{0.1, 1.0, 5.0\}} & f & 5.0 & 5.0 & 0.1 & 1.0 \\
        
        & & & g & 0.1 & 0.1 & 0.1 & 1.0 \\
        \multirow{-6}{*}{SL} & \multirow{-2}{*}{beta} & \multirow{-2}{*}{\{0.1, 1.0\}} & f & 0.1 & 0.1 & 1.0 & 1.0 \\
        
        \midrule
        
        & & & g & 0.0001 & 0.0001 & 0.0001 & 0.0001 \\
        & \multirow{-2}{*}{initial LR} & \multirow{-2}{*}{\{0.01, 0.001, 0.0001\}} & f & 0.01 & 0.01 & 0.01 & 0.001 \\
        
        & & & g & 0.2 & 0.2 & 0.1 & 0.2 \\
        \multirow{-4}{*}{SAM} & \multirow{-2}{*}{rho} & \multirow{-2}{*}{\{0.1, 0.2\}} & f & 0.2 & 0.2 & 0.1 & 0.1 \\
    
        \bottomrule
    \end{tabular}
    }
    \label{tab:hyperparameters}
\end{table}

\subsection{Label Noise Injection}
\label{app:label noise setting}
To simulate real-world annotation errors, we introduce synthetic label noise following two widely used paradigms: symmetric noise and asymmetric noise~\cite{NoiseSetting1, NoiseSetting2, NoiseSetting3, pairwise, NoiseSetting4, NoiseSetting5}.
Additionally, we consider grouped noise for concept corruption. 
These settings enable a systematic evaluation of \cbm robustness under varying noise conditions.

\begin{enumerate}[leftmargin=*, noitemsep, topsep=-\parskip, label=\roman*.]
    \item \textbf{Symmetric noise.} 
    Each class label is randomly flipped to one of the remaining $N - 1$ classes with uniform probability. 
    Specifically, for a dataset with $N$ classes, the probability of mislabeling a sample as any incorrect class is $\frac{1}{N-1}$. 
    This simulates class-independent, uniformly distributed noise. 
    An analogous strategy is applied to concepts, where each concept label is flipped uniformly to a different one.

    \item \textbf{Asymmetric noise.} 
    Labels are deterministically flipped to adjacent classes in a cyclic manner. 
    Specifically, each class label $i$ is reassigned to $(i + 1) \bmod N$, simulating structured noise patterns often observed in ordinal or semantically neighboring categories.  
    For concept labels, a similar procedure is applied: the label of the $i$-th concept is flipped, and the label of the $\bigl((i + 1) \bmod K\bigr)$-th concept is also perturbed, thereby modeling concept-level misannotations between adjacent semantic dimensions.

    \item \textbf{Grouped noise.} Concept labels are randomly corrupted within predefined semantically coherent groups (e.g., ``wing color''), representing annotation inconsistencies typically arising in domain-specific settings.
\end{enumerate}

Symmetric noise captures general annotation uncertainty, asymmetric noise models systematic labeling biases, and grouped noise reflects domain-driven confusions within related concept groups. 
Together, these diverse noise settings constitute a comprehensive testbed for assessing the robustness of concept-based models. Experimental results under these conditions are presented in \Cref{abl:diverse noise}.

\subsection{Concept Selection Criteria}
\label{app:selection criteria}
This section provides a detailed explanation of the concept selection criteria, extensively analyzed in previous work~\citep{Intervention3}. 
We present three criteria (Random, UCP, CCTP) addressed in the main paper, followed by additional criteria (LCP, ECTP, EUDTP) discussed in~\Cref{app:interventions}.

\begin{enumerate}[leftmargin=*, noitemsep, topsep=-\parskip, label=\roman*.]
    \item \textbf{Random.} Concepts are selected uniformly at random as a baseline method, following \citet{CBM}. 
    Formally, each concept is assigned a random score: $s_i \sim \mathcal{U}(0, 1)$. 
    This serves as a reference for evaluating other selection criteria.
    
    \item \textbf{Uncertainty of concept prediction (UCP).} Concepts with the highest prediction uncertainty are prioritized~\citep{UCP1, UCP2}. 
    Scores are assigned using entropy: $s_i = \mathcal{H}(\hat{c}_i)$, where $\mathcal{H}$ denotes the entropy function. 
    For binary concepts, this simplifies to $s_i = 1 / |\hat{c}_i - 0.5|$. 
    This method selects concepts whose uncertainty potentially impairs target prediction accuracy.

    \item \textbf{Contribution on concept on target prediction (CCTP).} This criterion selects concepts based on their contributions to target predictions. 
    Contributions are quantified as: $s_i = \sum_{j=1}^M |\hat{c}_i \frac{\partial f_j}{\partial \hat{c}_i}|$, where $f_j$ is the output for the $j$-th target class, and $M$ is the total number of classes. 
    This approach is inspired by interpretability methods for neural networks~\citep{selvaraju2017grad}.

    \item \textbf{Loss on concept prediction (LCP).} Concepts are selected based on the prediction loss compared to ground truth, defined as $s_i = |\hat{c}_i - c_i|$. 
    A lower prediction error typically correlates with improved target prediction. 
    However, this method is impractical during inference since ground-truth labels are not available at test time.

    \item \textbf{Expected change in target prediction (ECTP).} This criterion prioritizes concepts whose correction results in the greatest expected change in the target predictive distribution. 
    Scores are defined as $s_i = (1-\hat{c}_i)D_{\mathrm{KL}}(\hat{y}_{\hat{c}_i=0}\|\hat{y}) + \hat{c}_i D_{\mathrm{KL}}(\hat{y}_{\hat{c}_i=1}\|\hat{y})$ where $D_{\mathrm{KL}}$ is the Kullback-Leibler divergence, and $\hat{y}_{\hat{c}_i=0}, \hat{y}_{\hat{c}_i=1}$ denote target predictions after interventions. 
    The intuition is to intervene on concepts whose rectification significantly alters the target prediction~\citep{settles2007multiple}.

    \item \textbf{Expected uncertainty decrease in target prediction (EUDTP).} Concepts that lead to the largest expected reduction in the entropy of the target predictive distribution upon intervention are selected. 
    Formally, scores are defined as $s_i = (1-\hat{c}_i)\cH(\hat{y}_{\hat{c}_i=0}) + \hat{c}_i\cH(\hat{y}_{\hat{c}_i=1}) - \cH(\hat{y})$.
    This method favors concepts that substantially reduce target prediction uncertainty when corrected~\citep{guo2007optimistic}.
\end{enumerate}


\begin{figure}[t]
    \centering
    \begin{subfigure}{0.16\linewidth}
        \includegraphics[width=\linewidth]{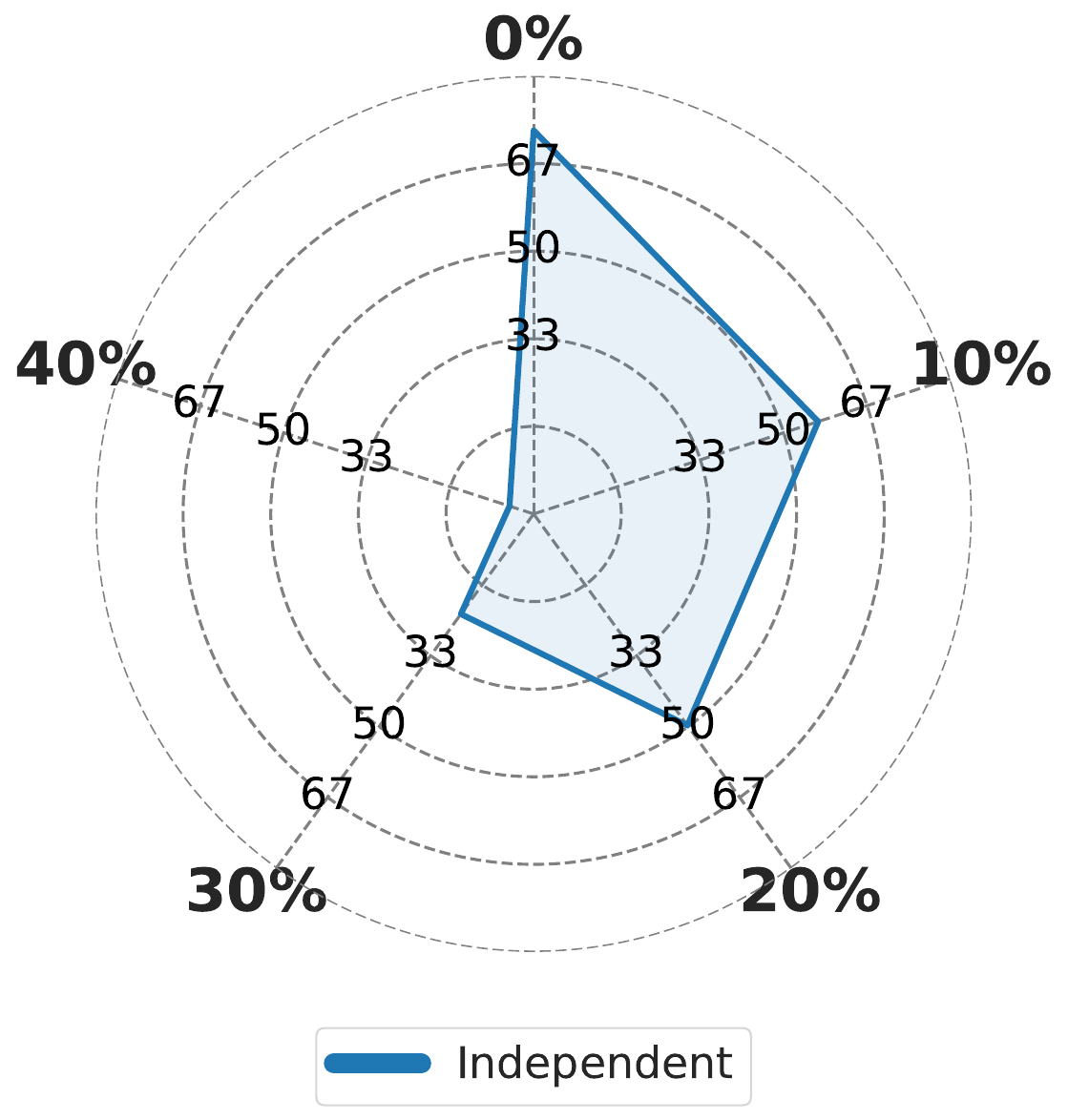}
    \end{subfigure}
    \hfill
    \begin{subfigure}{0.16\linewidth}
        \includegraphics[width=\linewidth]{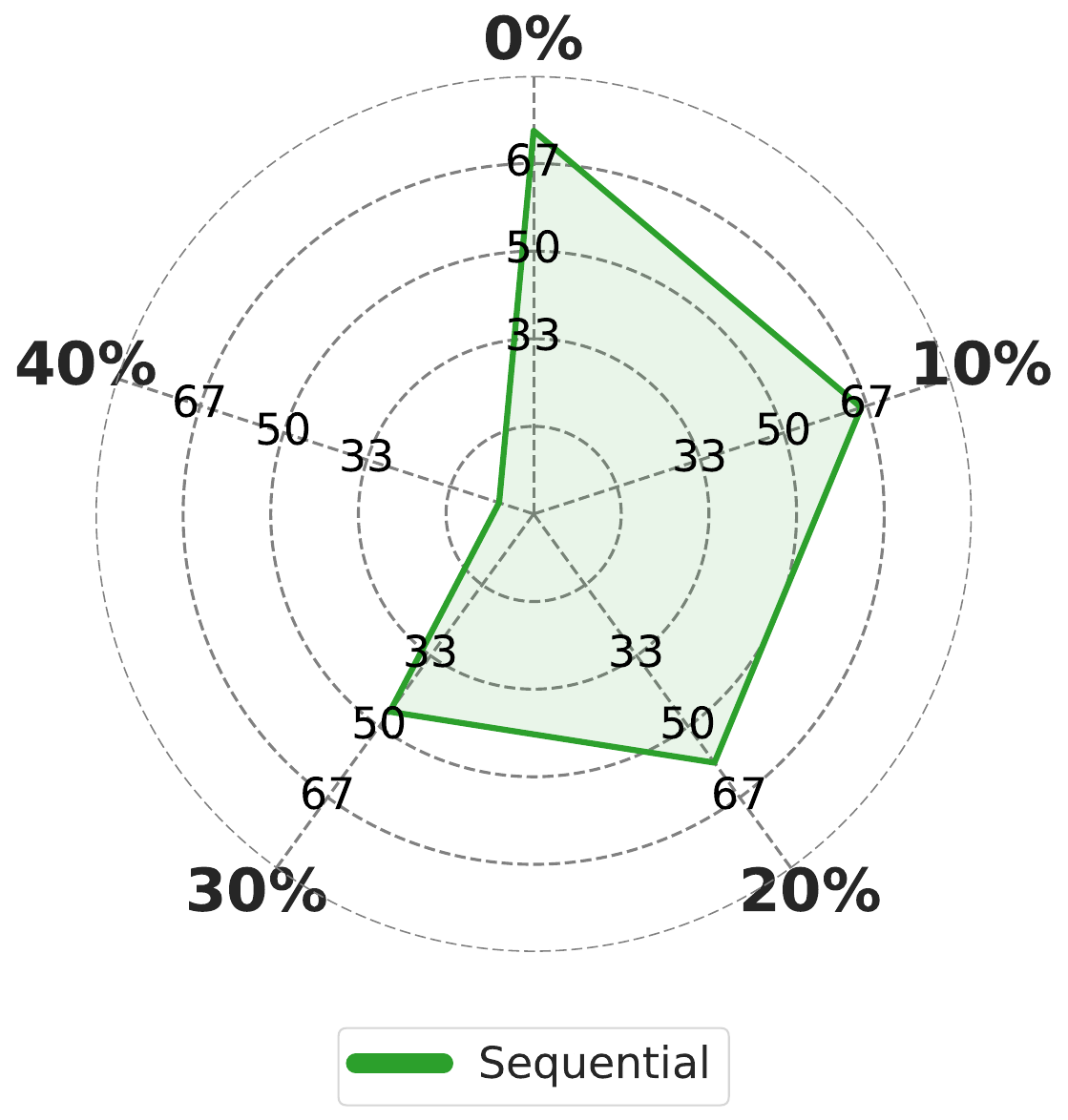}
    \end{subfigure}
    \hfill
    \begin{subfigure}{0.16\linewidth}
        \includegraphics[width=\linewidth]{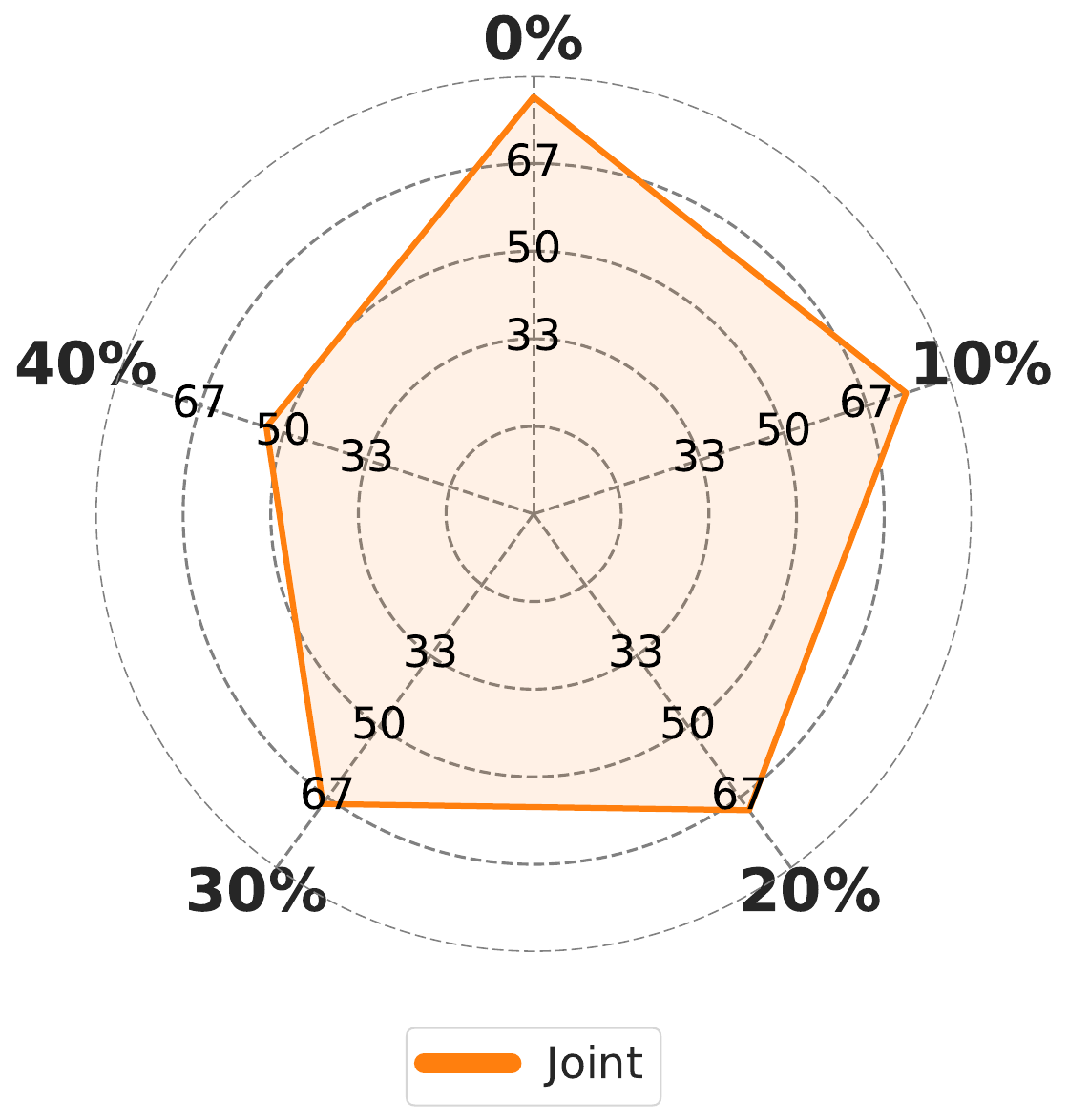}
    \end{subfigure}
    \hfill
    \begin{subfigure}{0.16\linewidth}
        \includegraphics[width=\linewidth]{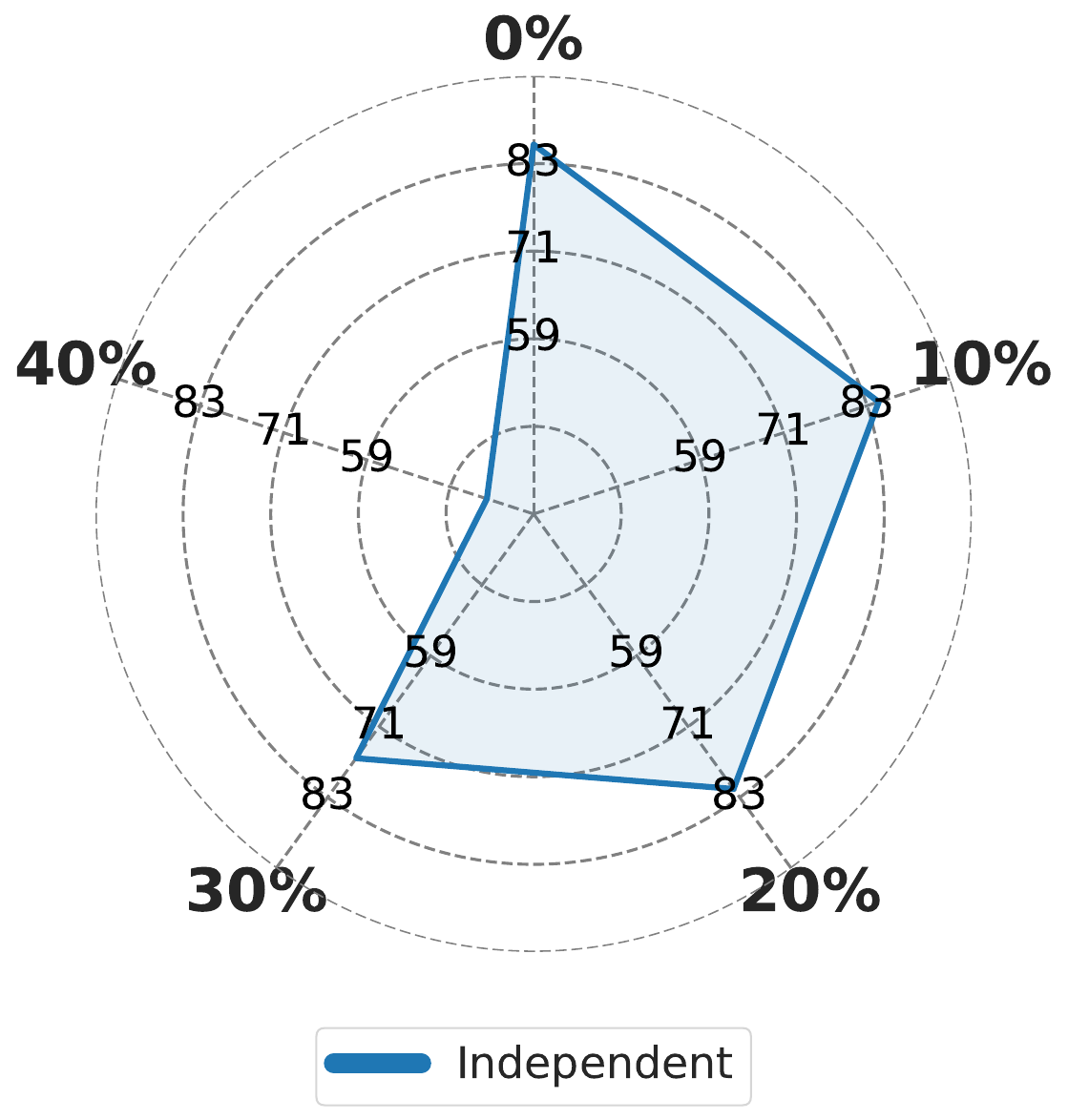}
    \end{subfigure}
    \hfill
    \begin{subfigure}{0.16\linewidth}
        \includegraphics[width=\linewidth]{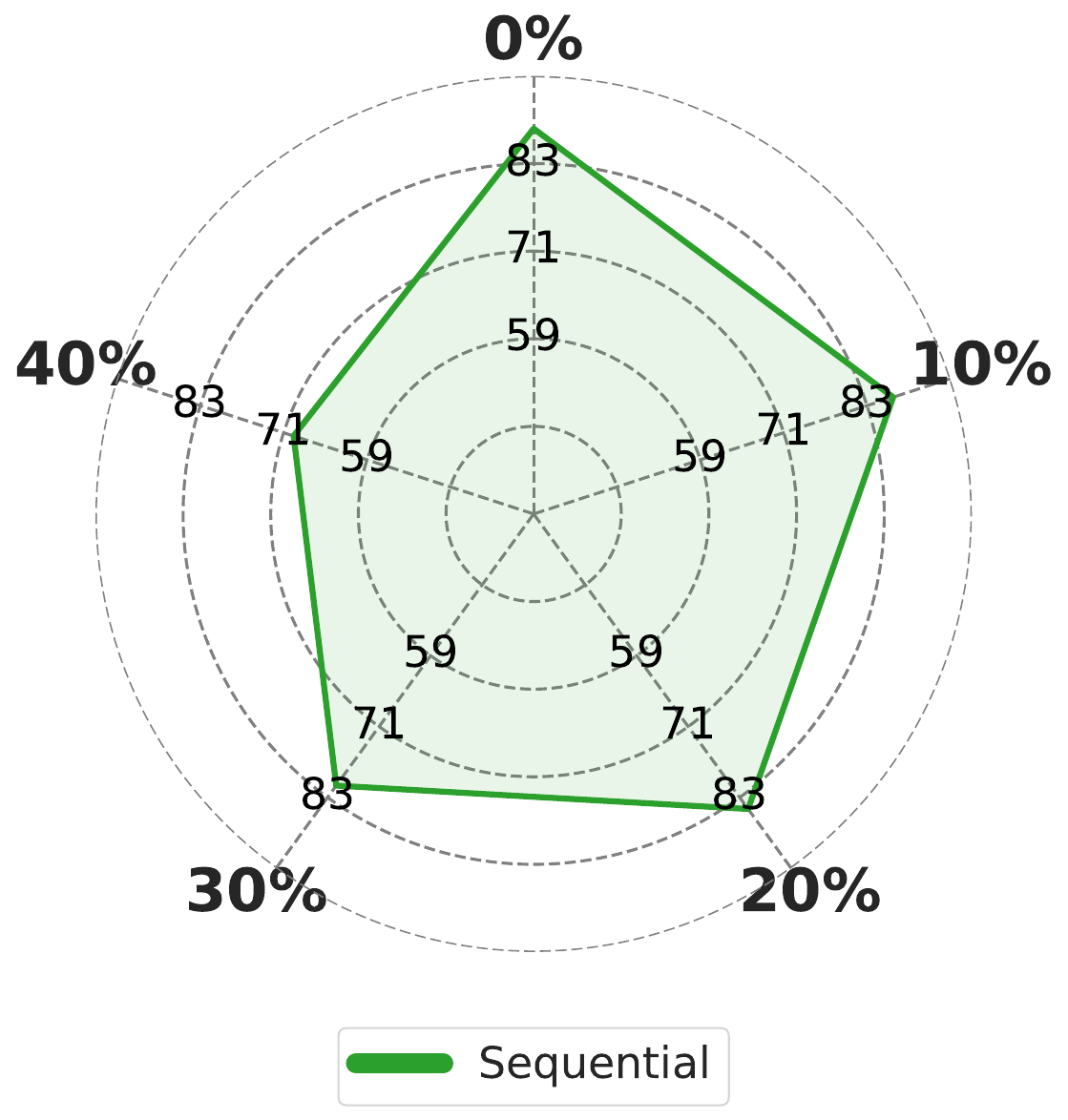}
    \end{subfigure}
    \hfill
    \begin{subfigure}{0.16\linewidth}
        \includegraphics[width=\linewidth]{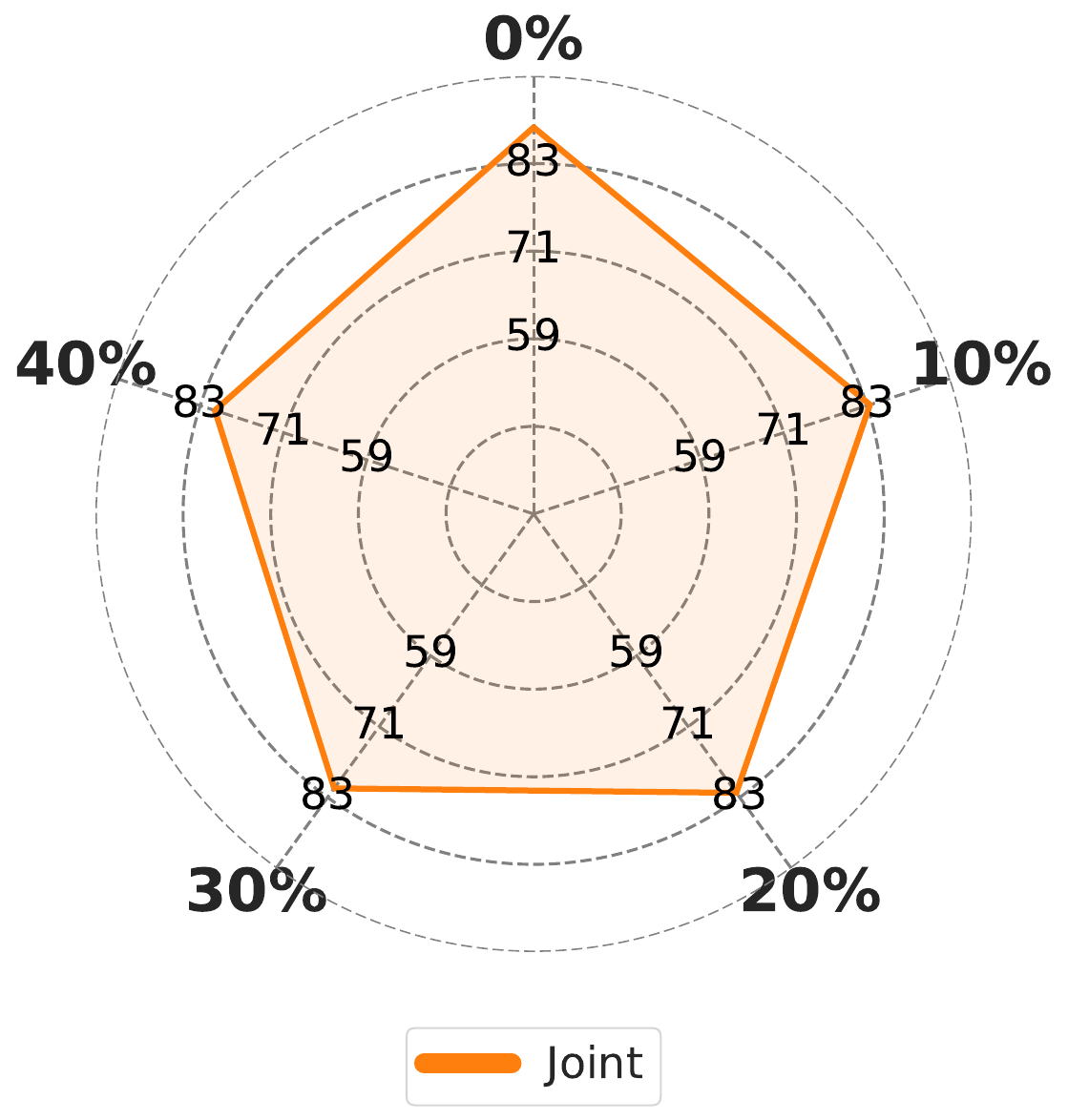}
    \end{subfigure}
    \caption{
        Task accuracy of \cbms trained with different strategies on the noisy CUB (left three) and AwA2 (right three) datasets.
        The radar charts illustrate task performance for the \textcolor{dblue}{Independent (\textsc{Ind})}, \textcolor{dgreen}{Sequential (\textsc{Seq})}, and \textcolor{dorange}{Joint (\textsc{Joi})} training paradigms under varying noise levels.
        }
    \label{fig:e2e vs cbm}
\end{figure}
\begin{table}[t]
    \centering
    \caption{
        Concept prediction accuracy of CBMs on noisy CUB and AwA2 datasets.
        Concept accuracies below 75\% are \textcolor{red}{\textbf{highlighted}}, indicating significantly reduced interpretability under noise.
    }
    \resizebox{\linewidth}{!}{%
        \centering
        \begin{sc}
        \begin{tabular}{c cccccccccc}
        \toprule
         & \multicolumn{5}{c}{CUB} & \multicolumn{5}{c}{AwA2} \\
         \cmidrule(lr){2-6}\cmidrule(lr){7-11}
         Strategy & $\gamma = 0.0$ & $\gamma = 0.1$ & $\gamma = 0.2$ & $\gamma = 0.3$ & $\gamma = 0.4$ & $\gamma = 0.0$ & $\gamma = 0.1$ & $\gamma = 0.2$ & $\gamma = 0.3$ & $\gamma = 0.4$ \\
        \midrule
        \textcolor{dblue}{\texttt{Ind}} & 96.5\std{0.0} & 93.8\std{0.0} & 91.6\std{0.0} & 89.1\std{0.1} & 85.4\std{0.2} & 78.5\std{0.8} & 78.4\std{0.8} & 78.1\std{0.7} & 77.3\std{0.3} & 75.3\std{0.8} \\
        \rowcolor{tabgray} \textcolor{dgreen}{\texttt{Seq}} & 96.5\std{0.0} & 93.8\std{0.0} & 91.6\std{0.0} & 89.1\std{0.1} & 85.4\std{0.2} & 78.5\std{0.8} & 78.4\std{0.8} & 78.1\std{0.7} & 77.3\std{0.3} & 75.3\std{0.8} \\
        \textcolor{dpurple}{\texttt{Joi}} & 92.4\std{0.7} & 85.9\std{0.5} & 78.4\std{0.6} & \textcolor{red}{\textbf{67.6\std{1.2}}} & \textcolor{red}{\textbf{57.3\std{0.3}}} & 77.8\std{0.5} & \textcolor{red}{\textbf{74.2\std{0.4}}} & \textcolor{red}{\textbf{70.1\std{0.8}}} & \textcolor{red}{\textbf{65.4\std{0.3}}} & \textcolor{red}{\textbf{57.4\std{0.2}}} \\
        \bottomrule
        \end{tabular}
        \end{sc}
    }
    \label{tab:strategy concept acc}
\end{table}

\section{Additional Experiments}
\label{app:further experiments}

\subsection{Effect of Training Strategies under Noise}
\label{abl:training strategies}
\cbms can be optimized with three training paradigms: \textit{independent}, \textit{sequential}, and \textit{joint} (see \Cref{app:cbm training} for details).  
We evaluate their robustness by reporting task‑level and concept‑level accuracy across different noise ratios $\gamma$.  
\Cref{fig:e2e vs cbm} presents task accuracy across training strategies, with accompanying radar plots visualizing noise tolerance; larger and more regular polygons reflect greater robustness to noise.

Every training paradigm suffers noticeable degradation as label corruption intensifies.
On AwA2, for instance, the joint model starts with the best clean accuracy but still drops from 88.8\% to 81.7\% when 40\% of the labels are noisy.
The independent and sequential variants fare even worse and largely collapse on CUB under the same noise level.
\Cref{tab:strategy concept acc} further reveals declines at the concept level. 
Although the joint paradigm better preserves task‑level performance, its concept accuracy plunges below 75\% once the noise ratio rises above roughly 30\% on CUB and exceeds about 10\% on AwA2.
Because the joint objective simultaneously minimizes task and concept losses, it compromises concept fidelity as label quality declines.
These findings corroborate earlier evidence of a performance–interpretability tradeoff~\citep{Intrepretable-Trade-off} and underscore the need for training methods that preserve the robustness of \cbm under noisy supervision.


\begin{figure}[t]
    \centering
    \begin{subfigure}{0.225\linewidth}
        \centering
        \includegraphics[width=0.95\linewidth]{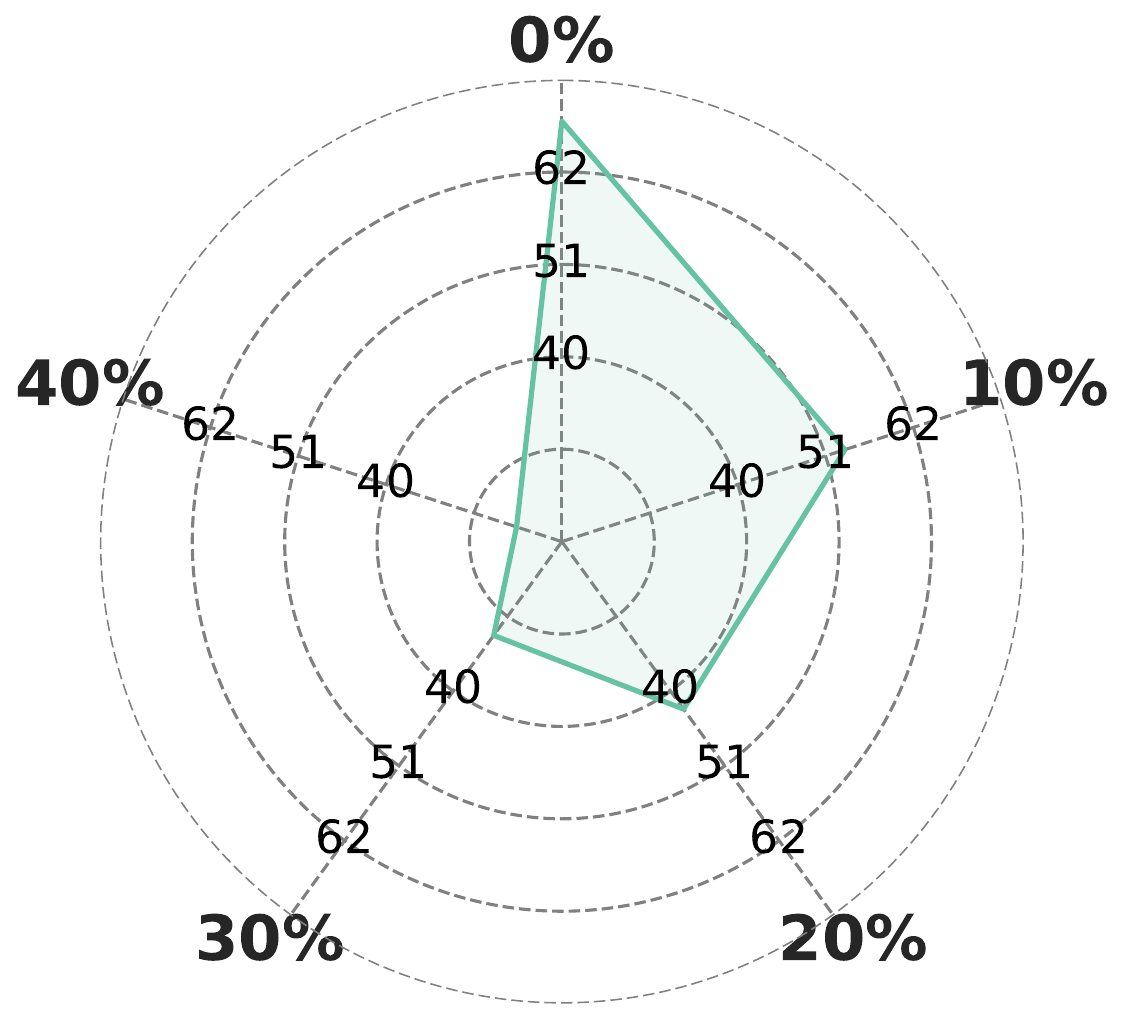}
        \subcaption{CEM}
    \end{subfigure}
    \begin{subfigure}{0.225\linewidth}
        \centering
        \includegraphics[width=0.95\linewidth]{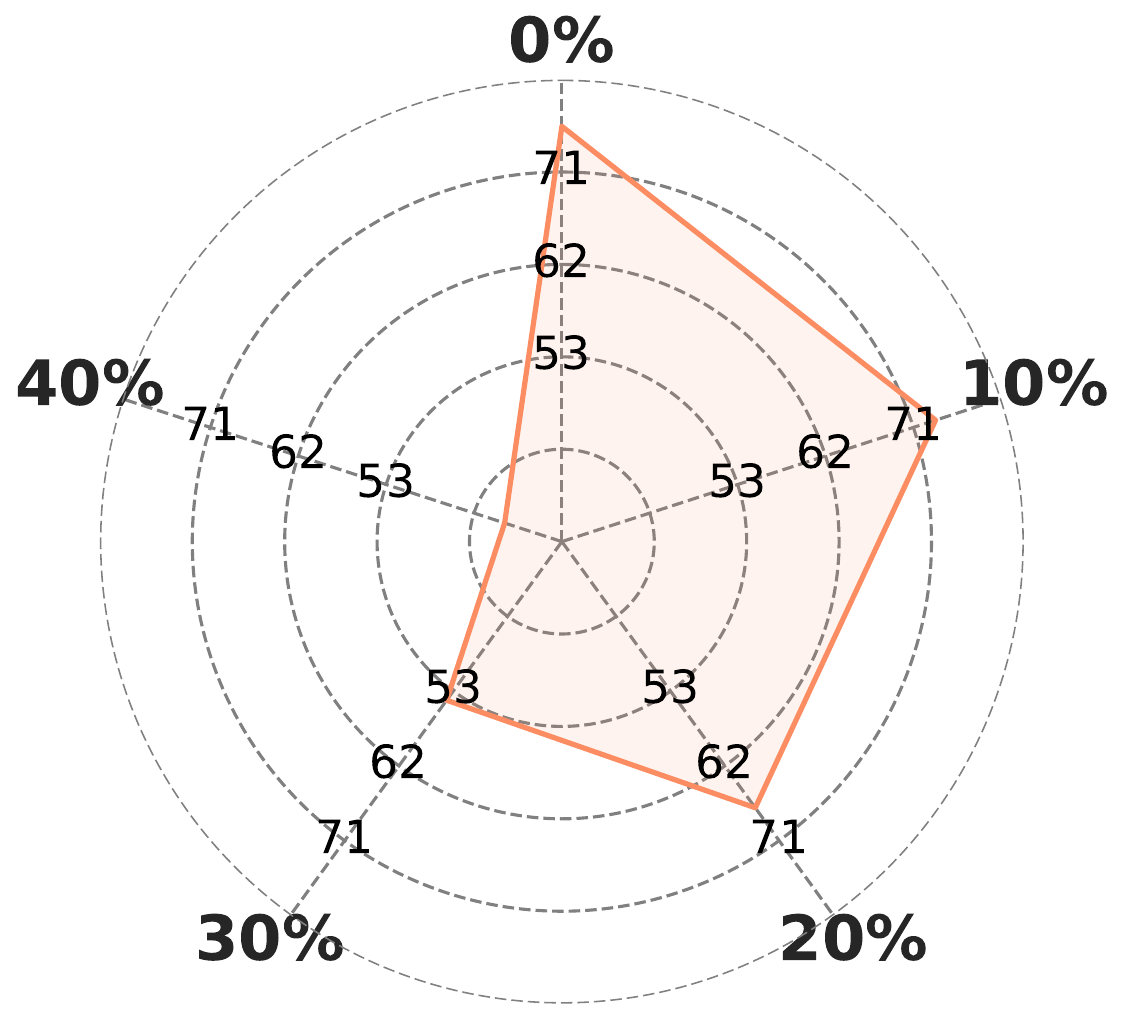}
        \subcaption{ECBM}
    \end{subfigure}
    \begin{subfigure}{0.225\linewidth}
        \centering
        \includegraphics[width=0.95\linewidth]{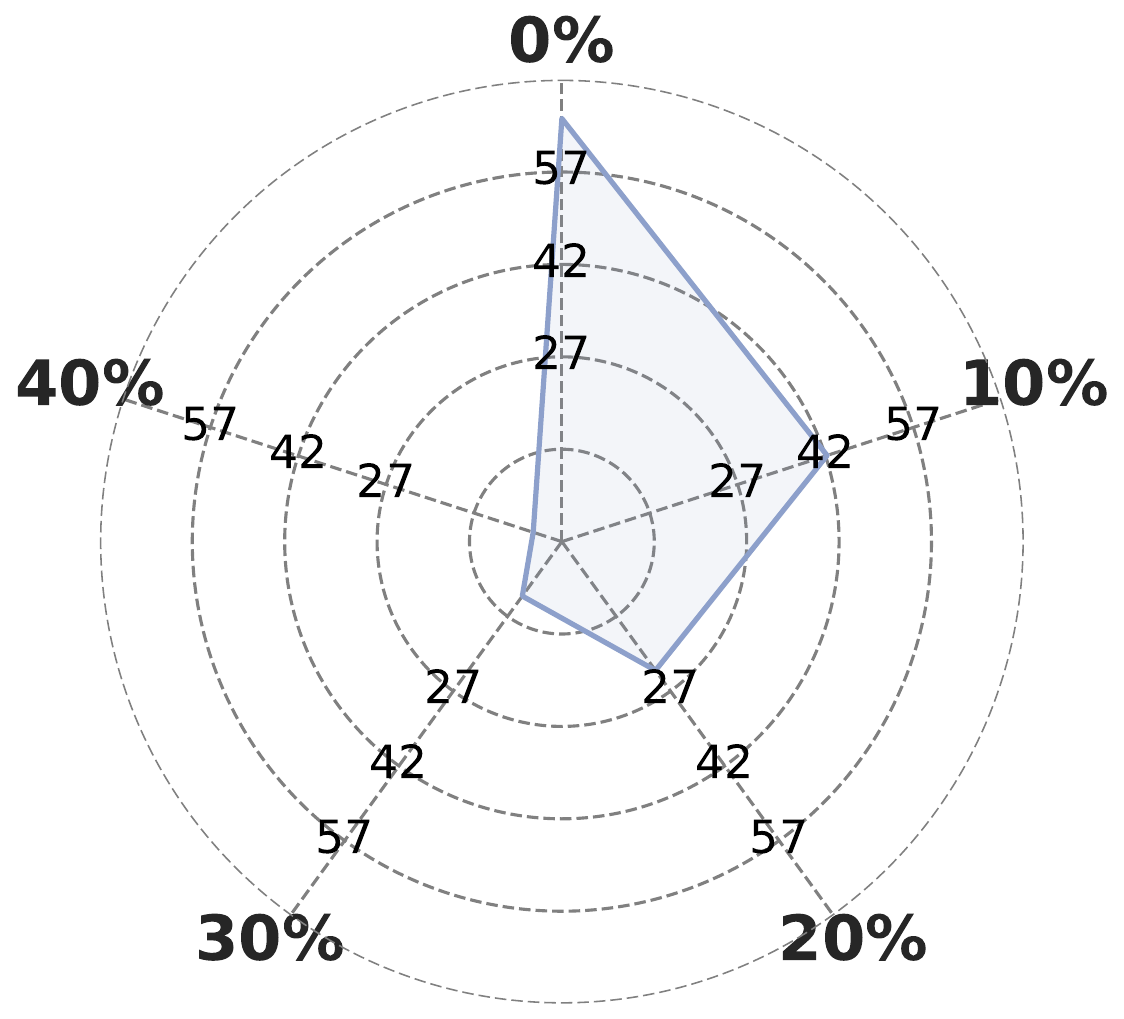}
        \subcaption{Autoregressive CBM}
    \end{subfigure}
    \begin{subfigure}{0.225\linewidth}
        \centering
        \includegraphics[width=0.95\linewidth]{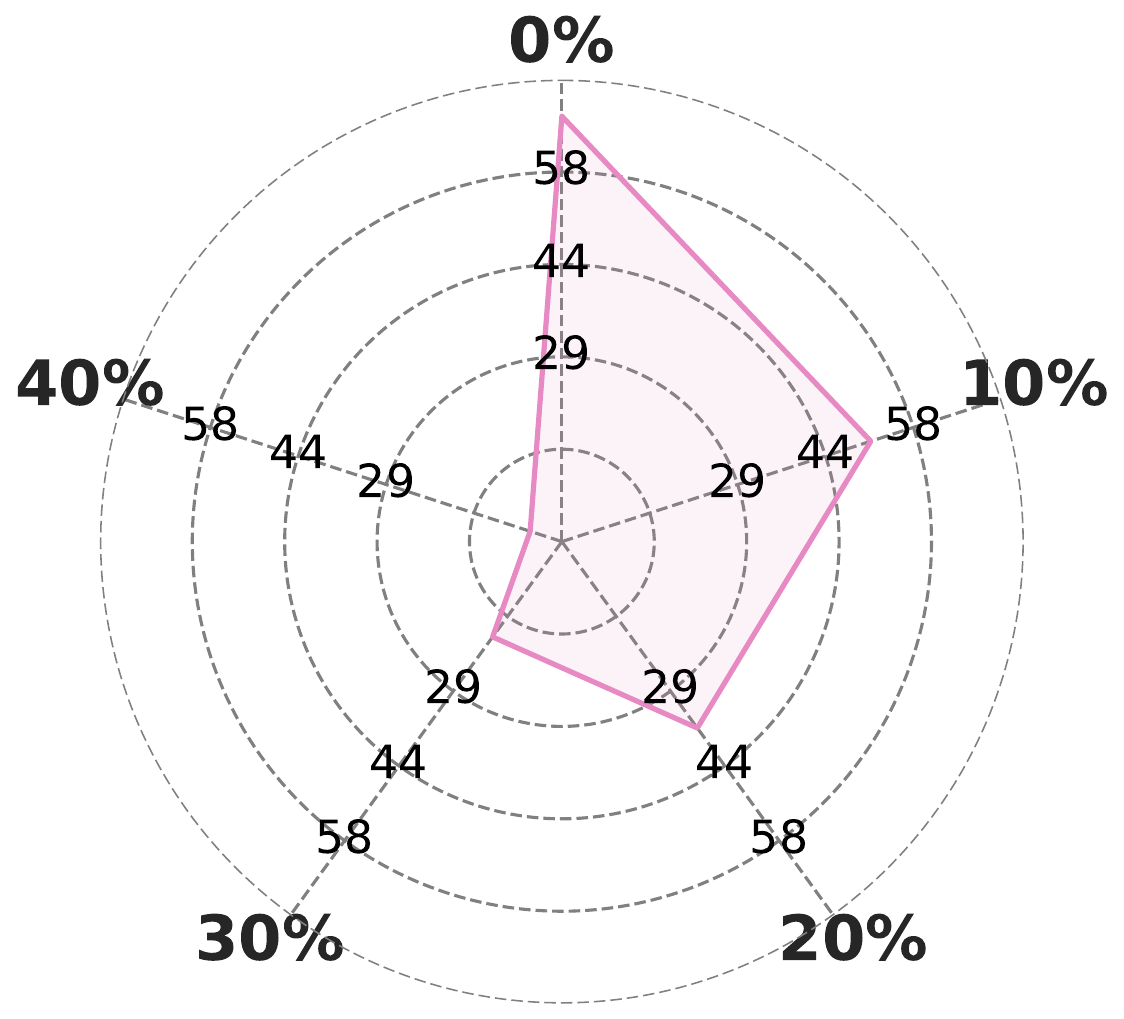}
        \subcaption{SCBM}
    \end{subfigure}
    \caption{
        Task accuracy of \textcolor{color_cem}{CEM}~\cite{CEM}, \textcolor{color_ecbm}{ECBM}~\cite{ECBM}, \textcolor{color_ar}{Autoregressive CBM}~\cite{autoregressiveCBM}, and \textcolor{color_scbm}{SCBM}~\cite{SCBM} under varying noise levels on the CUB dataset. Task accuracy consistently drops as noise increases.
    }
    \label{fig:CBM Variants vs End-to-End}
\end{figure}
\begin{table}[t]
    \centering
    \caption{
        Concept prediction accuracy of \cbm variants on the noisy CUB dataset under \base and \sam optimization. 
        Concept prediction accuracy declines with increasing noise in all cases.
    }
    \resizebox{\linewidth}{!}{%
        \centering
        \begin{sc}
        \begin{tabular}{l ccccc ccccc}
        \toprule
         & \multicolumn{5}{c}{\base} & \multicolumn{5}{c}{\sam} \\
         \cmidrule(lr){2-6} \cmidrule(lr){7-11}
         Method & $\gamma = 0.0$ & $\gamma = 0.1$ & $\gamma = 0.2$ & $\gamma = 0.3$ & $\gamma = 0.4$ & $\gamma = 0.0$ & $\gamma = 0.1$ & $\gamma = 0.2$ & $\gamma = 0.3$ & $\gamma = 0.4$ \\
        \midrule
        \textcolor{color_cem}{CEM} & 94.7\std{0.0} & 87.6\std{0.2} & 79.2\std{0.1} & 69.8\std{0.1} & 60.0\std{0.4} & 95.1\std{0.0} & 88.3\std{0.0} & 80.0\std{0.2} & 70.5\std{0.1} & 60.2\std{0.3} \\
        \rowcolor{tabgray} \textcolor{color_ecbm}{ECBM} & 96.4\std{0.2} & 95.7\std{0.2} & 93.5\std{0.3} & 90.2\std{0.4} & 85.7\std{0.5} & 96.9\std{0.1} & 95.5\std{0.5} & 94.1\std{0.0} & 90.8\std{0.2} & 86.4\std{0.2} \\
        \textcolor{color_ar}{AR-CBM} & 95.1\std{0.0} & 87.4\std{0.0} & 79.1\std{0.2} & 69.8\std{0.3} & 60.0\std{0.2} & 95.5\std{0.0} & 88.2\std{0.1} & 80.1\std{0.2} & 70.8\std{0.2} & 60.6\std{0.2} \\
        \rowcolor{tabgray} \textcolor{color_scbm}{SCBM} & 94.6\std{0.0} & 89.7\std{0.0} & 83.2\std{0.0} & 73.6\std{0.3} & 62.1\std{0.2} & 94.7\std{0.1} & 90.2\std{0.0} & 84.2\std{0.1} & 75.7\std{2.4} & 66.5\std{6.7} \\
        \bottomrule
        \end{tabular}
        \end{sc}
    }
    \label{tab:diverse cbms concept acc}
\end{table}

\subsection{Robustness of CBM Variants under Noisy Supervision}
\label{abl:diverse cbm variants}

We also investigate the effects of noise on several CBM variants, including CEM~\citep{CEM}, ECBM~\citep{ECBM}, autoregressive CBM~\citep{autoregressiveCBM}, and SCBM~\citep{SCBM}, trained on the CUB dataset with noise levels $\gamma \in [0.0, 0.4]$. 
\Cref{fig:CBM Variants vs End-to-End} presents task accuracy, while \Cref{tab:diverse cbms concept acc} reports concept prediction accuracy under both the standard Adam optimizer (\base) and sharpness-aware minimization (\sam).

All CBM variants exhibit notable degradation in both task and concept accuracy as noise increases. 
Among them, ECBM achieves the highest concept accuracy across most settings, likely due to its energy-based formulation. 
However, its performance also deteriorates substantially under high noise levels, reaffirming the vulnerability of current \cbm designs to noisy concept supervision.
Training these variants with \sam provides modest but consistent improvements across all settings. 
These gains suggest that \sam offers a generalizable enhancement to CBM robustness. 
Nonetheless, the overall sensitivity of all variants highlights the ongoing need for more resilient training strategies. 
In this regard, \sam represents a promising direction for improving both performance and interpretability under noisy supervision.


\subsection{Additional Evidence of Disrupted Representation Alignment}
\label{app:weight impact}
\begin{figure}[t]
    \centering
    \begin{subfigure}{0.225\linewidth}
        \centering
        \includegraphics[width=\linewidth]{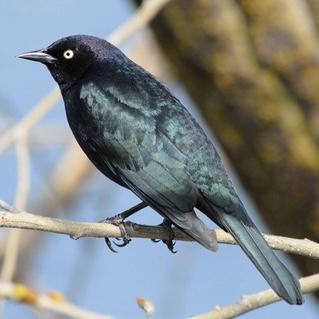}
        \vspace{0.25em}
        \subcaption{Brewer Blackbird}
        \label{subfig: Brewer blackbird}
    \end{subfigure}
    \hfill
    \begin{subfigure}{0.435\linewidth}
        \centering
        \includegraphics[width=\linewidth]{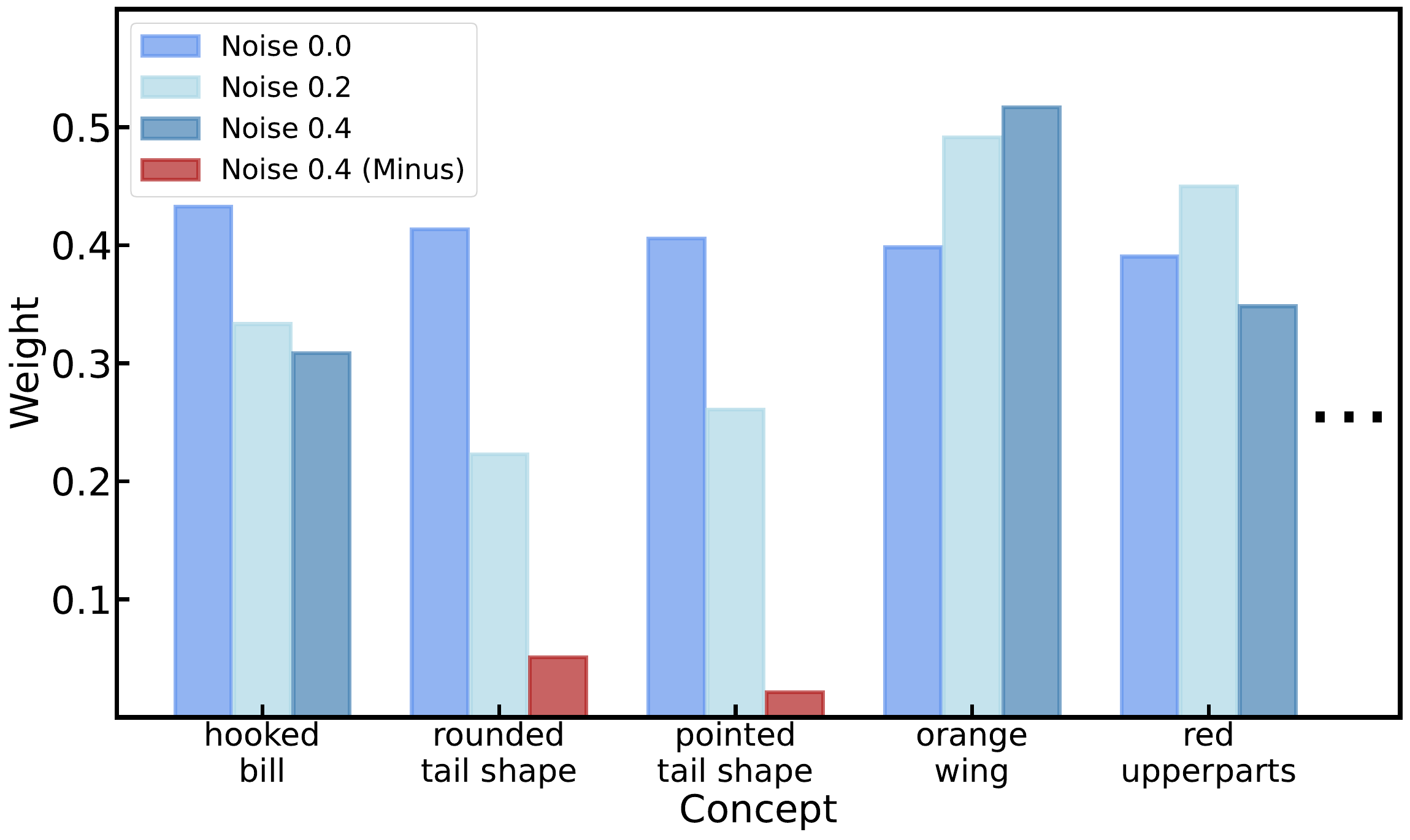}
        \subcaption{Importance shift in key concepts}
        \label{subfig:concept weight app}
    \end{subfigure}
    \hfill
    \begin{subfigure}{0.3\linewidth}
        \centering
        \includegraphics[width=0.9\linewidth]{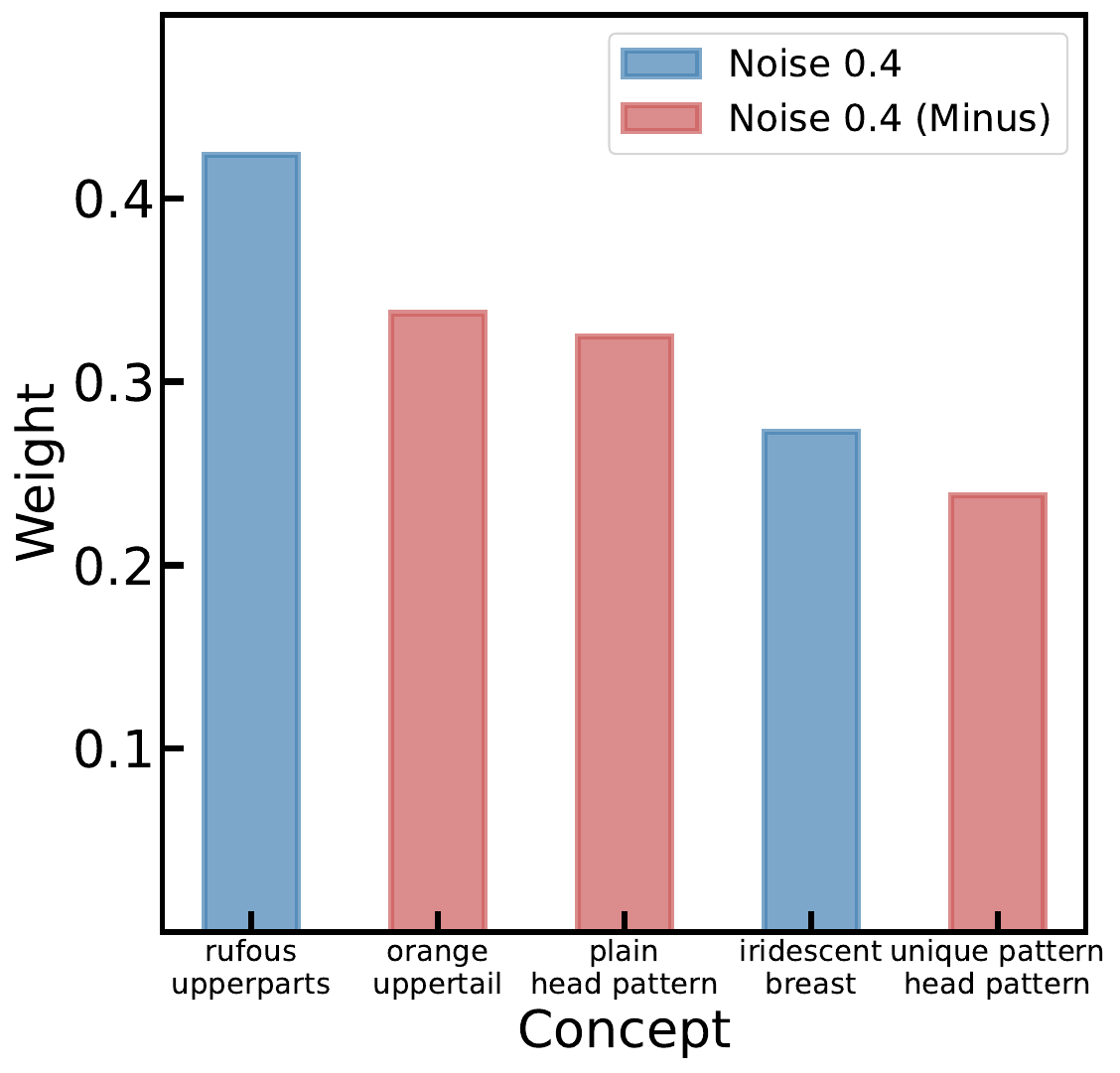}
        \subcaption{Key concepts under 40\% noise}
        \label{subfig:concept weight 40 appendix}
    \end{subfigure}
    \caption{
        Effect of noise on representation alignment. 
        (b) Top 5 most influential (\ie, key) concepts for \textsc{Brewer Blackbird} in a clean setting and tracks how their influence shifts as noise increases. 
        (c) Top 5 influential concepts under 40\% noise. 
        In both figures, bars in \textcolor{dblue}{blue} denote positive weights, while \textcolor{dred}{red} indicate negative weights, illustrating how noise reshapes concept importance.
    }
    \label{fig:concept impact appendix}
\end{figure}
In \Cref{subsec:concept-class relationship}, we illustrated how noise disrupts the representation alignment for the class \texttt{Le Conte Sparrow}. 
Here, we provide additional qualitative results for the class \texttt{Brewer Blackbird}, which demonstrates a comparable degradation pattern.  

As shown in \Cref{fig:concept impact appendix}, at a moderate noise level (\ie, 20\%), the relative ordering of predictive dimensions remains quite well preserved.  
In contrast, at higher noise levels (\ie, 40\%), the saliency of informative dimensions deteriorates significantly, while irrelevant or spurious concepts become disproportionately influential.  
These findings further substantiate our conclusion that noise induces representational misalignment, and when this misalignment coincides with susceptible concepts, it becomes a key driver of instability in \cbms under noisy supervision.


\begin{wrapfigure}{r}{0.5\linewidth}
    \centering
    \vspace{-1em}
    \begin{subfigure}{0.49\linewidth}
        \centering
        \includegraphics[width=0.95\linewidth]{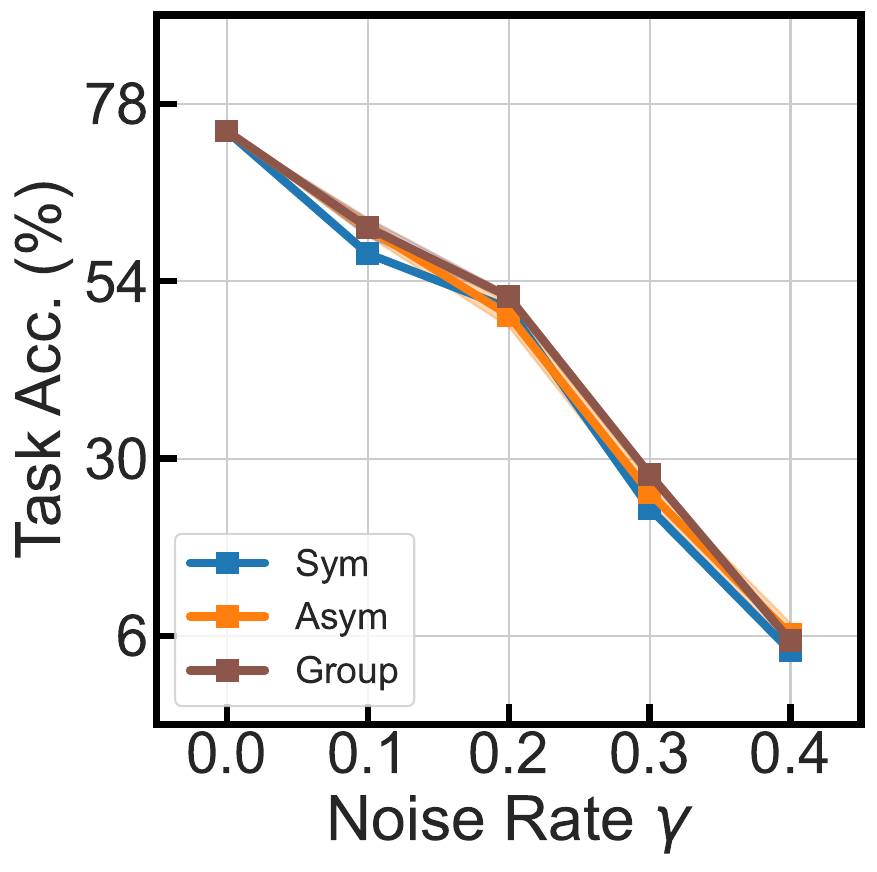}
        \subcaption{Task accuracy}
    \end{subfigure}
    \begin{subfigure}{0.49\linewidth}
        \centering
        \includegraphics[width=0.95\linewidth]{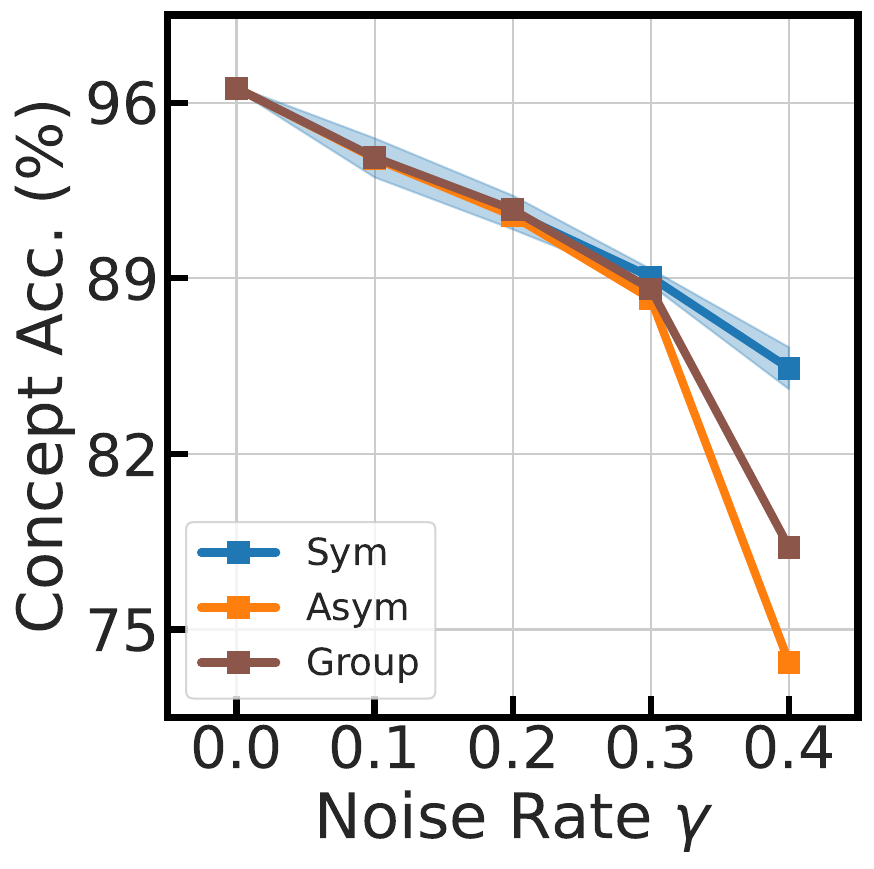}
        \subcaption{Concept accuracy}
    \end{subfigure}
    \caption{
       Performance degradation under symmetric, asymmetric~\cite{pairwise} and grouped noise on CUB dataset across different noise levels.
        }
    \label{fig:diverse noise result}
    \vspace{-1.25em}
\end{wrapfigure}

\subsection{Robustness under Diverse Noise Scenarios}
\label{abl:diverse noise}
We further investigate CBM performance under realistic noise scenarios, specifically asymmetric~\citep{pairwise} and grouped noise, using the CUB dataset. 
Asymmetric noise cyclically mislabels categories (\eg, confusing similar animal species), while grouped noise randomly corrupts labels within similar concept groups (\eg, wing color) as discussed in~\Cref{app:label noise setting}.

As shown in \Cref{fig:diverse noise result}, \cbms experience substantial performance degradation under both noise types, broadly consistent with trends observed under symmetric noise. 
While both asymmetric and grouped noise lead to slightly more pronounced declines in concept accuracy at higher corruption levels (\eg, 40\%), the overall performance patterns remain comparable.
These findings indicate that \cbms are generally vulnerable to various forms of label noise, reinforcing the need for robustness strategies that extend beyond simple symmetric corruption and account for more realistic noise scenarios.

Furthermore, beyond introducing noise independently, we also considered more practical scenarios where concept and target noise can be correlated.
To examine this case, we simulated a correlated noise model and evaluated the relationship between susceptibility and uncertainty.
Specifically, we first corrupted the target labels, and concept noise was injected only when the target label was flipped, thereby inducing dependence between concept and target corruption.
As illustrated in~\Cref{fig:realistic noise pearson}, the positive relationship between uncertainty and susceptibility remains evident for susceptible concepts.
This trend is consistent across different noise levels, suggesting that our analysis and mitigation approach remain effective under more realistic correlated noise conditions.

\begin{figure}[t]
    \centering
    \begin{subfigure}{0.32\linewidth}
        \includegraphics[width=\linewidth]{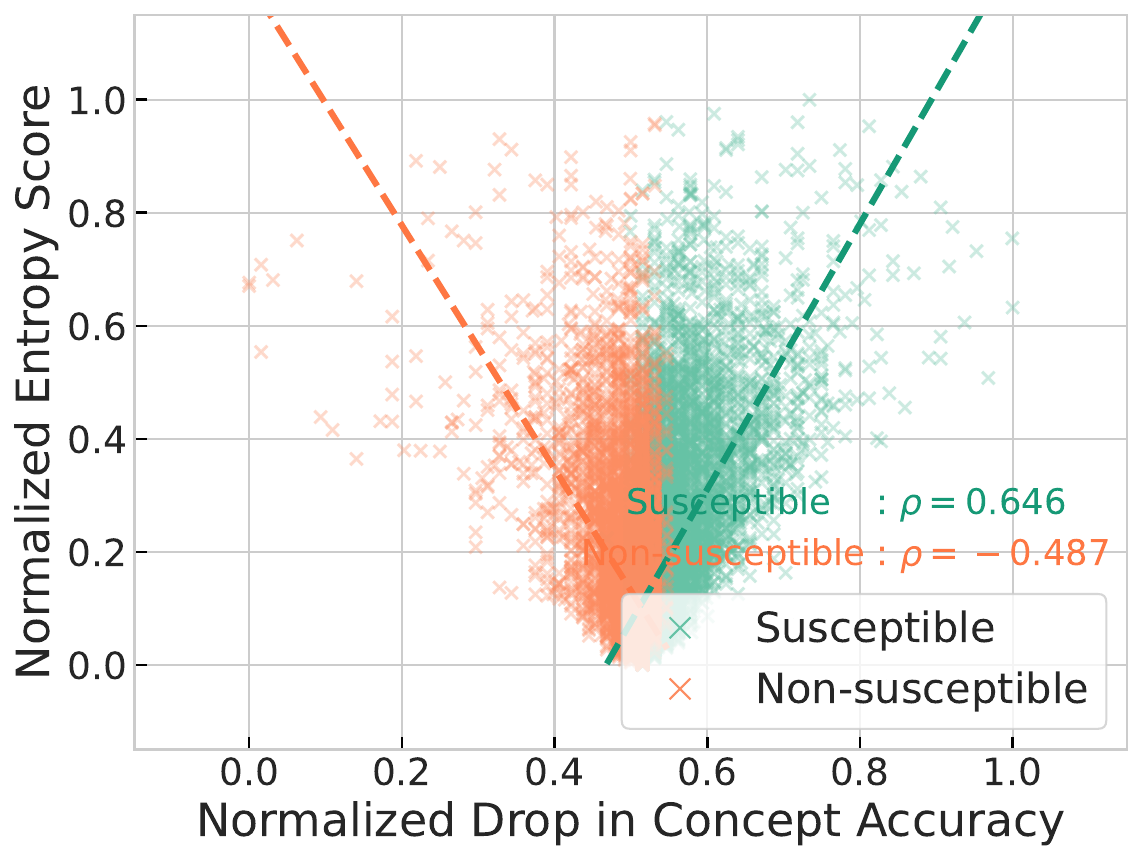}
        \subcaption{$\gamma = 0.1$}
    \end{subfigure}
    \begin{subfigure}{0.32\linewidth}
        \includegraphics[width=\linewidth]{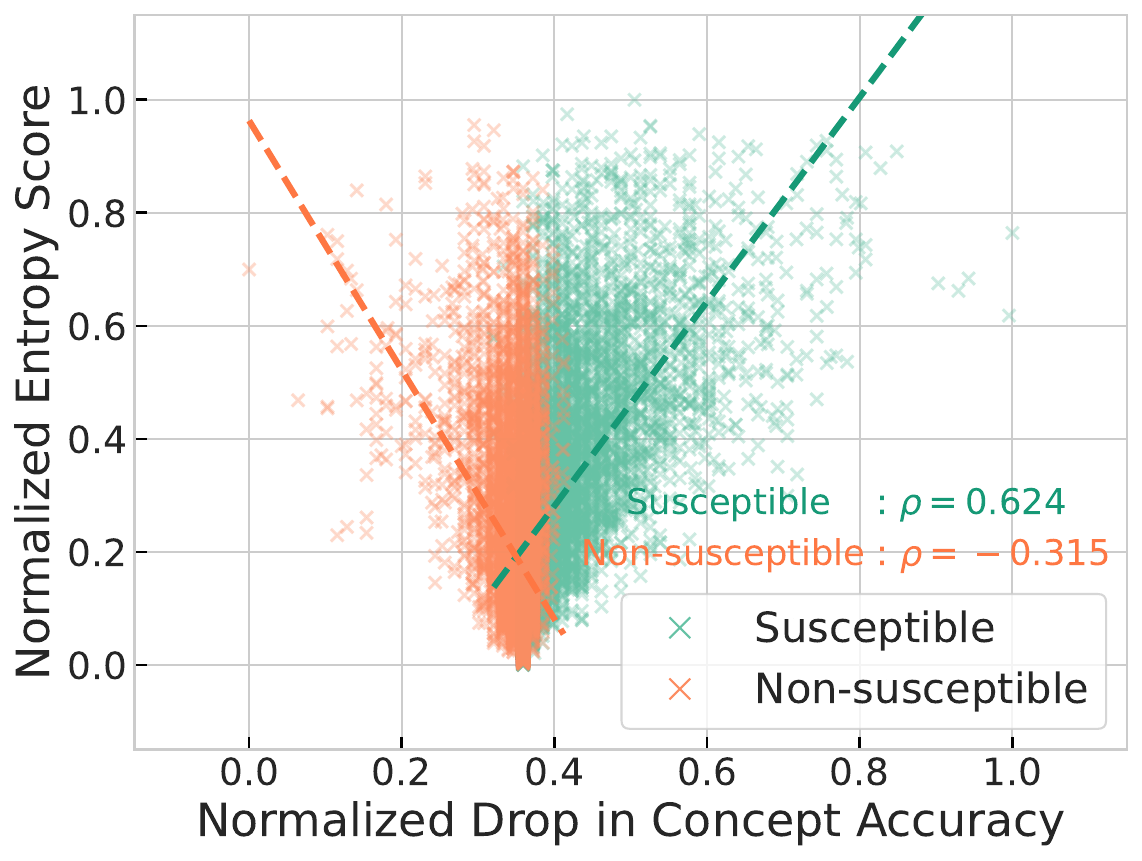}
        \subcaption{$\gamma = 0.2$}
    \end{subfigure}
    \begin{subfigure}{0.32\linewidth}
        \includegraphics[width=\linewidth]{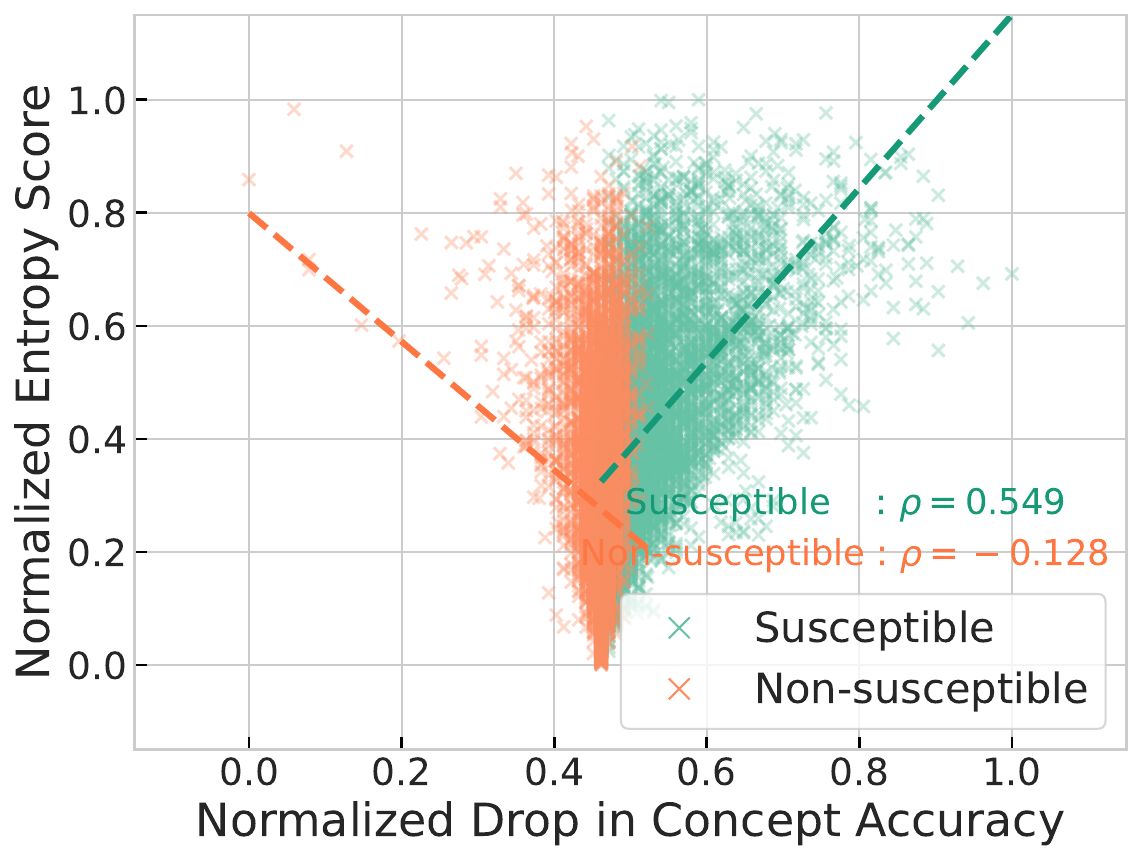}
        \subcaption{$\gamma = 0.3$}
    \end{subfigure}
    \caption{
        Correlation between predictive uncertainty and susceptibility on CUB under a realistic conditional noise model. 
        The target label is corrupted first, and concept labels are corrupted only when the target has been perturbed. 
        Pearson correlation coefficients ($\rho$) are reported separately for susceptible and non-susceptible sets, showing a positive correlation in the former.
        }
    \label{fig:realistic noise pearson}
\end{figure}


\subsection{Comparative Study of Noise Mitigation Methods}
\label{app:label noise mitigation}
To validate the effectiveness of \sam in mitigating noise, we compare it with two representative baselines: label smoothing regularization (LSR)~\citep{labelsmoothing} and symmetric cross-entropy learning (SL)~\citep{sce}. 
LSR addresses overconfidence by softening class labels, while SL incorporates both standard and reverse cross-entropy losses to suppress the impact of noisy annotations.

\begin{wraptable}{r}{0.55\linewidth}
    \centering
    \vspace{-1.25em}
    \caption{
        Comparison of concept and task accuracies across diverse label-noise mitigation methods on CUB.
    }
    \label{tab:diverse label-noise mitigation methods}
    \resizebox{\linewidth}{!}{%
        \centering
        \begin{sc}
        \begin{tabular}{clcccc}
        \toprule
        & & \multicolumn{4}{c}{Noise Ratio}\\
        \cmidrule(lr){3-6}
        Method & Metric & $\gamma = 0.0$ & $\gamma = 0.1$ & $\gamma = 0.2$ & $\gamma = 0.3$ \\ 
        \midrule
        
        & Concept Acc & 95.9\std{0.1} & 91.3\std{0.0} & 86.5\std{0.2} & 70.7\std{7.7} \\
        \multirow{-2}{*}{LSR} & Task Acc & 72.8\std{0.4} & 60.4\std{0.4} & 46.8\std{1.6} & 15.0\std{12.9} \\
        
        \midrule

        & Concept Acc & 95.0\std{0.1} & 91.5\std{0.1} & 86.6\std{0.2} & 81.6\std{0.2} \\
        \multirow{-2}{*}{SL} & Task Acc & 70.6\std{1.3} & 60.5\std{0.4} & 48.0\std{1.7} & \textbf{27.0}\std{7.5} \\
        
        \midrule
        
        & Concept Acc & \textbf{97.3}\std{0.0} & \textbf{95.1}\std{0.1} & \textbf{92.7}\std{0.1} & \textbf{90.0}\std{0.1} \\
        \multirow{-2}{*}{\sam} & Task Acc & \textbf{79.7}\std{0.1} & \textbf{67.7}\std{0.6} & \textbf{50.4}\std{2.0} & 26.6\std{1.2} \\
        \bottomrule
        \end{tabular}
        \end{sc}
    }
    \vspace{-1em}
\end{wraptable}

\Cref{tab:diverse label-noise mitigation methods} summarizes task and concept accuracy across noise levels. 
Across all conditions, \sam consistently outperforms LSR and SL. Under clean supervision, \sam achieves a task accuracy of 79.7\%, surpassing LSR (72.8\%) and SL (70.6\%). 
Its advantages are more pronounced under noise: at 20\% and 30\% corruption, \sam outperforms the second-best method by 6.1\% and 8.4\% in concept accuracy, respectively. 
These results demonstrate the robustness of \sam in noisy settings.
While our evaluation focuses on selected methods, other techniques (\eg, co-teaching~\citep{co-learning} or MentorNet~\citep{MentorNet}) offer promising alternatives through sample selection and teacher-student training. 
Integrating such techniques with \cbms represents a valuable direction for future work. Overall, these results position \sam as an effective and scalable approach for robust concept-based learning, as further supported by our theoretical insights in~\Cref{app:theoretical analysis}.


\subsection{Evaluating Concept Noise in LLM-Generated Annotations}
\label{subsec:llm based concept}
\begin{wraptable}{r}{0.5\linewidth}
    \centering
    \vspace{-1.25em}
    \caption{Performance comparison between LLM-generated and ground-truth concept annotations.
    }
    \label{tab:LLM}
    \resizebox{\linewidth}{!}{%
        \centering
        \begin{sc}
        \begin{small}
        \begin{tabular}{ccc}
            \toprule
            Concept & Annotation Sim. & Task Acc. \\ \midrule
            LLM-Based & 21.8 & 68.5 \\
            Ground-Truth & - & 74.3 \\
            \bottomrule
        \end{tabular}
        \end{small}
        \end{sc}
    }
    \vspace{-1em}
\end{wraptable}
Recent studies have explored using large language models (LLMs) to automatically generate concept sets for \cbms~\cite{labo, Labe-FreeCBM, vlm-label-cbm}.  
We identify two main approaches to assess the noisiness of such annotations:  
(i) evaluating whether LLMs can define concepts accurately by comparing them to expert-curated datasets, and  
(ii) assessing whether LLMs can reliably annotate individual samples using a fixed expert-defined concept set.
Here, we focus on the second approach.  
Using the concept vocabulary from the CUB dataset, we prompt an LLM (\ie, ChatGPT~\cite{ChatGPT}) to annotate each target class. 

As shown in \Cref{tab:LLM}, the resulting annotations agree with human-labeled ground truth only 21.8\% of the time, revealing a significant misalignment.  
This highlights a key issue, \ie, general-purpose LLMs often produce annotations that diverge from expert standards, introducing a new source of noise.
To assess the effect of this noise, we trained a \cbm on the LLM-annotated dataset.  
This model achieved 68.5\% accuracy, a 5.8\% drop from the baseline trained on expert annotations.  
Unlike traditional concept noise, which often stems from human inconsistency, LLM-induced noise may arise from the the limited domain grounding and generalization of model.
These findings expose the current limitations of LLM-generated concept supervision and reinforce the broader challenge of label noise in building robust and interpretable \cbms, particularly in high-stakes domains.


\subsection{Effect of Target Predictor Complexity}
\begin{wraptable}{r}{0.55\linewidth}
    \centering
    \vspace{-1.25em}
    \caption{Performance comparison of the $f$ model under different noise types and levels of non-linearity.}
    \label{tab:non-linearity}
    \resizebox{\linewidth}{!}{%
        \centering
        \begin{sc}
        \begin{small}
        \begin{tabular}{ccccc}
            \toprule
            && \multicolumn{3}{c}{Noise} \\
            \cmidrule{3-5}
            Optimizer & Linearity & $\gamma = 0.0$ & $\gamma = 0.2$ & $\gamma = 0.4$ \\ 
            \midrule
            \multirow{2}{*}{\base} & Linear & 74.3\std{0.3} & 50.3\std{0.7} & 4.0\std{0.7} \\
            & Non-Linear & 73.5\std{0.2} & 47.4\std{2.1} & 3.6\std{0.3} \\ 
            \midrule
            \multirow{2}{*}{\sam} & Linear & 79.0\std{0.8} & 54.2\std{0.7} & 5.0\std{1.4} \\
            & Non-Linear & 78.3\std{0.5} & 51.6\std{1.8} & 4.1\std{0.3} \\
            \bottomrule
        \end{tabular}
        \end{small}
        \end{sc}
    }
    \vspace{-1em}
\end{wraptable}
Here, we examine how the structure of the target predictor $f$ affects robustness under noisy conditions.  
A central premise in \cbms is that a linear predictor promotes interpretability; however, this simplicity may come at the cost of reduced expressiveness, particularly in tasks requiring more complex decision boundaries. This trade-off makes the choice of $f$ a critical design factor.
To explore this, we replace the standard linear predictor with a shallow non-linear alternative (\ie, a two-layer feedforward network) and train both models under identical conditions.  

As shown in \Cref{tab:non-linearity}, the non-linear variant consistently underperforms its linear counterpart, with performance degrading more severely as noise increases.  
These results suggest that non-linear predictors are more susceptible to overfitting noisy concept representations, thereby undermining generalization.
Due to the concept bottleneck imposed by \cbms, increasing the complexity of $f$ does not inherently enhance predictive accuracy; rather, it can exacerbate the propagation of noise from the concept layer to the output.
Overall, these findings indicate that under noisy conditions, simpler target predictors may offer greater robustness.  
However, this conclusion is drawn from a preliminary comparison between a linear and a two-layer model.  
Exploring a broader range of non-linear architectures and incorporating regularization techniques may offer valuable insights into the trade-off between expressiveness and robustness.


\subsection{Impact of Noise on Intervention Effectiveness}
\label{app:interventions}
As outlined in~\Cref{app:selection criteria}, we evaluate six concept selection strategies for intervention: Random, UCP, CCTP, LCP, ECTP, and EUDTP. 
While the main paper focuses on the first three, here we provide an extended analysis of the remaining strategies (\eg, LCP, ECTP, and EUDTP) for a more comprehensive evaluation.

\begin{figure}[t]
    \centering
        \begin{subfigure}{0.16\linewidth}
            \centering 
            \includegraphics[width=0.95\linewidth]{Images/interventions/cub/intervention_curve_cub_rand_SGD.pdf}
        \end{subfigure}
        \begin{subfigure}{0.16\linewidth}
            \centering 
            \includegraphics[width=0.95\linewidth]{Images/interventions/cub/intervention_curve_cub_ucp_SGD.pdf}
        \end{subfigure}
        \begin{subfigure}{0.16\linewidth}
            \centering 
            \includegraphics[width=0.95\linewidth]{Images/interventions/cub/intervention_curve_cub_cctp_SGD.pdf}
        \end{subfigure}
        \begin{subfigure}{0.16\linewidth}
            \centering 
            \includegraphics[width=0.95\linewidth]{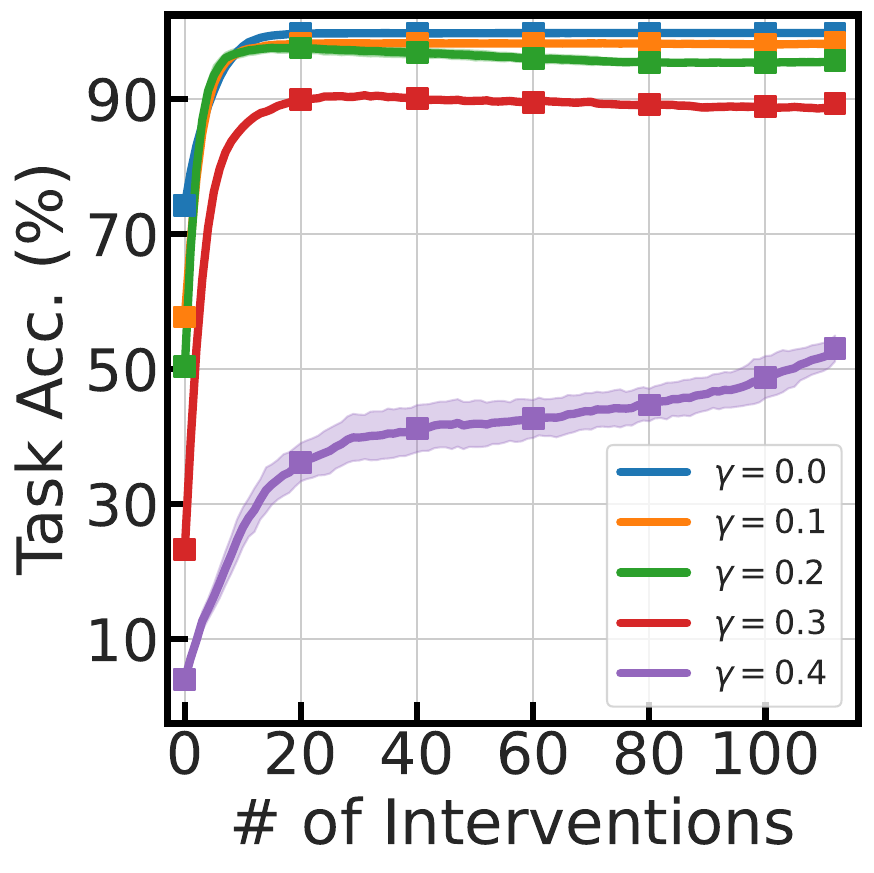}
        \end{subfigure}
        \begin{subfigure}{0.16\linewidth}
            \centering 
            \includegraphics[width=0.95\linewidth]{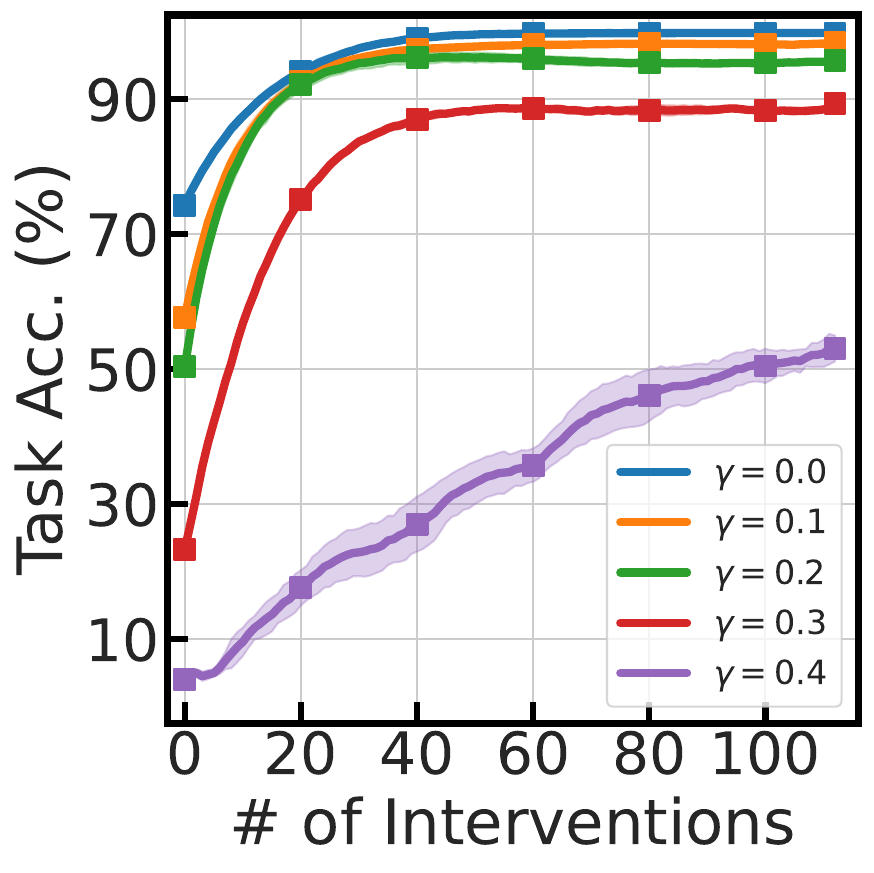}
        \end{subfigure}
        \begin{subfigure}{0.16\linewidth}
            \centering 
            \includegraphics[width=0.95\linewidth]{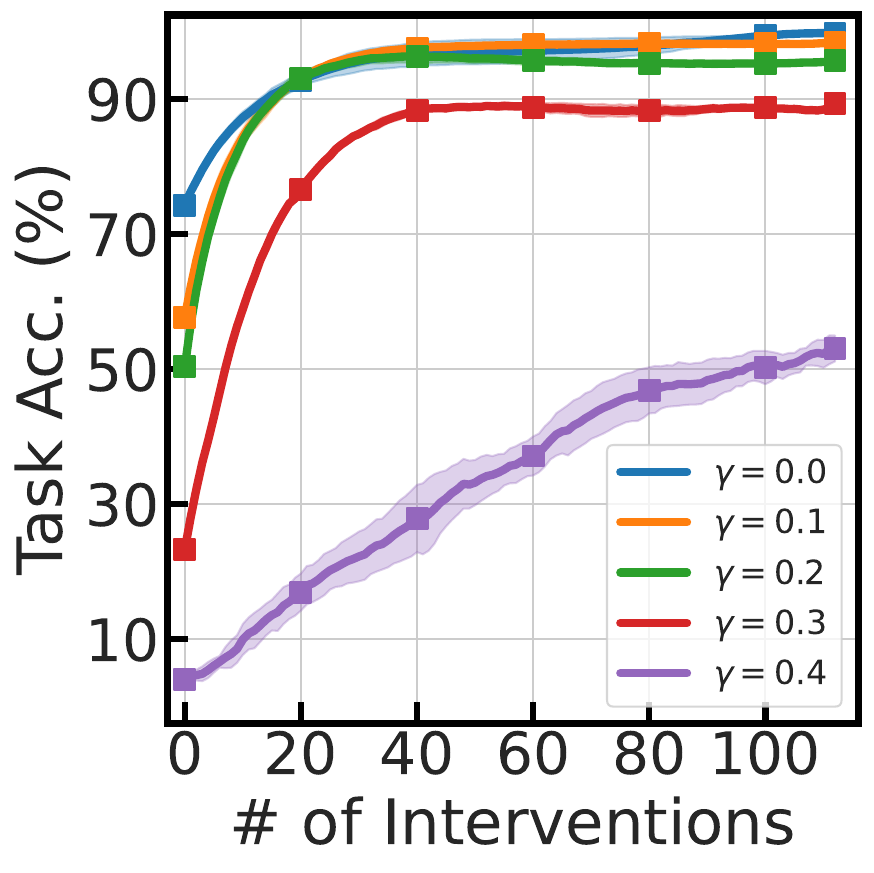}
        \end{subfigure}
        
        \begin{subfigure}{0.16\linewidth}
            \centering 
            \includegraphics[width=0.95\linewidth]{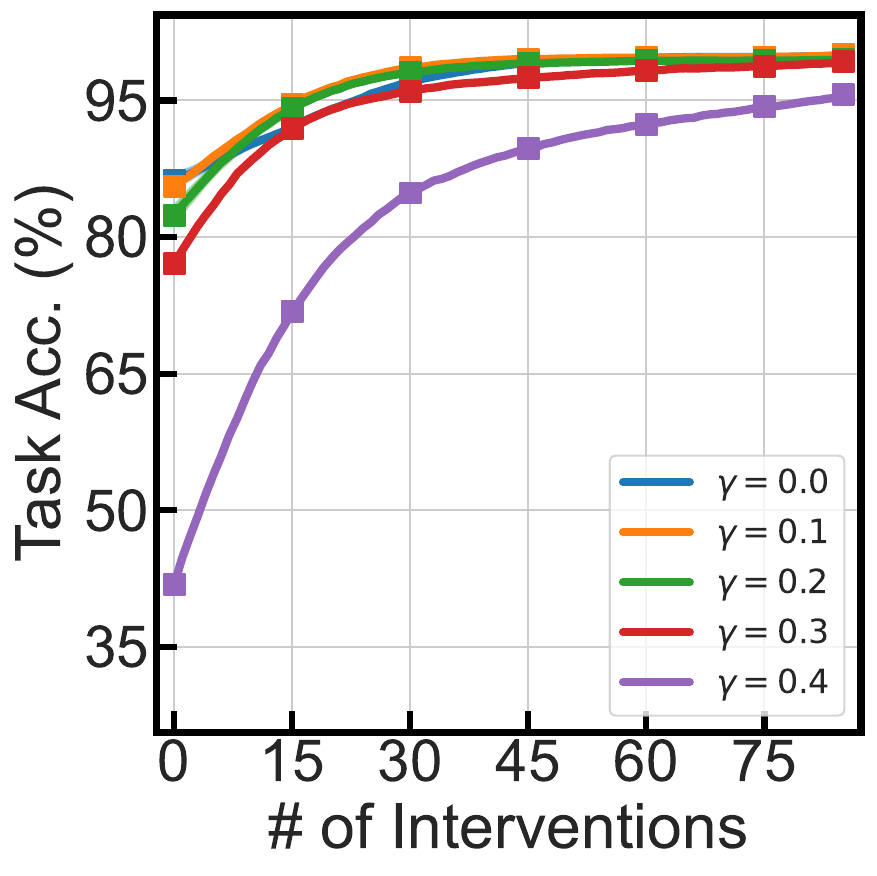}
            \subcaption{Random}
        \end{subfigure}
        \begin{subfigure}{0.16\linewidth}
            \centering 
            \includegraphics[width=0.95\linewidth]{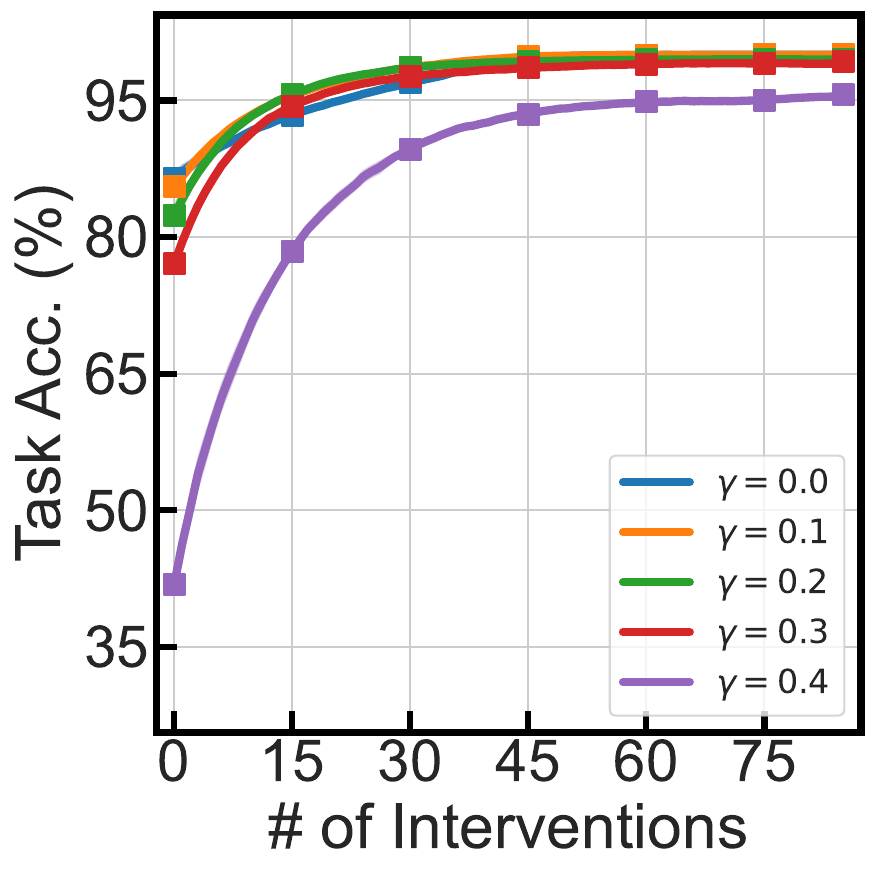}
            \subcaption{UCP}
        \end{subfigure}
        \begin{subfigure}{0.16\linewidth}
            \centering 
            \includegraphics[width=0.95\linewidth]{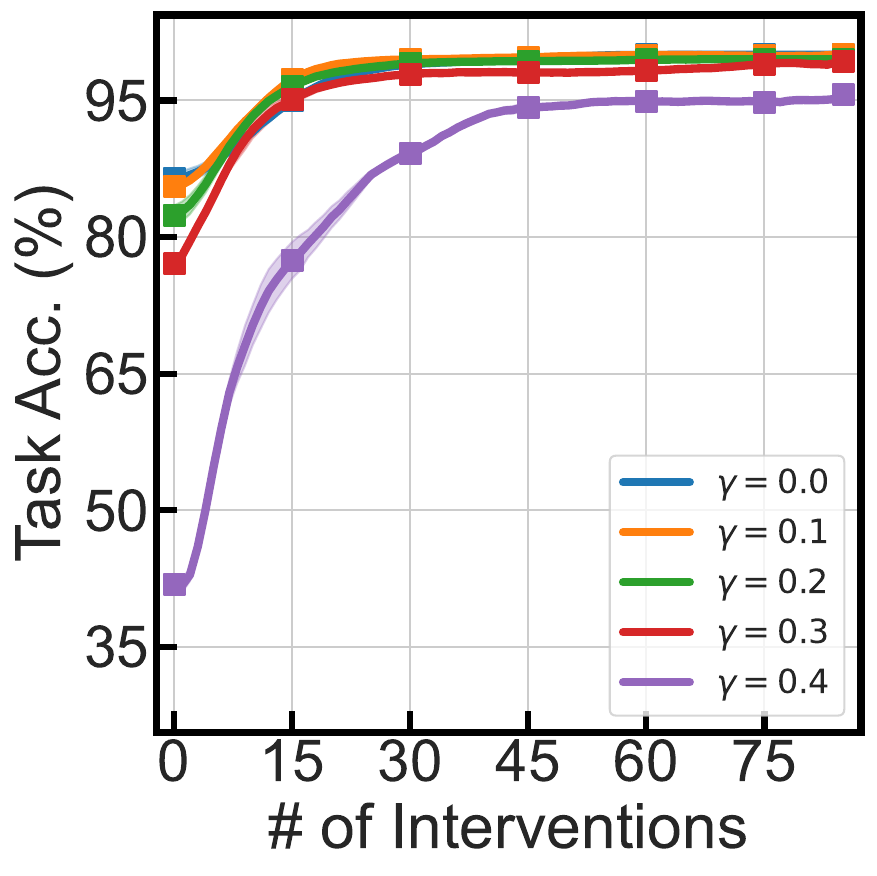}
            \subcaption{CCTP}
        \end{subfigure}
        \begin{subfigure}{0.16\linewidth}
            \centering 
            \includegraphics[width=0.95\linewidth]{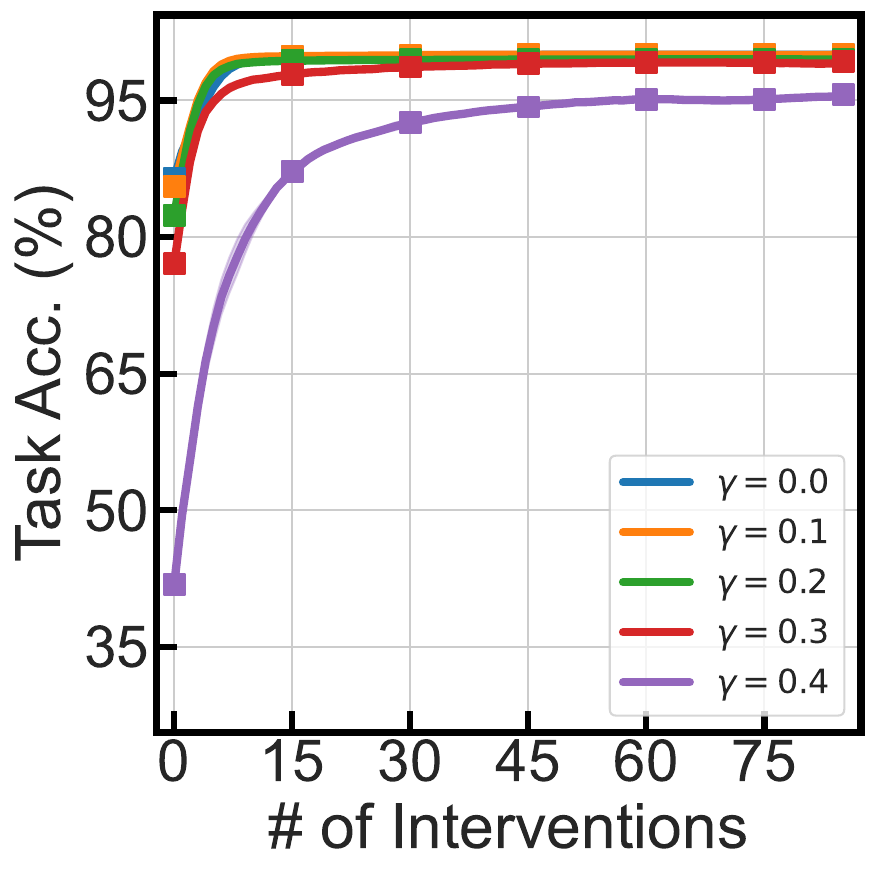}
            \subcaption{LCP}
        \end{subfigure}
        \begin{subfigure}{0.16\linewidth}
            \centering 
            \includegraphics[width=0.95\linewidth]{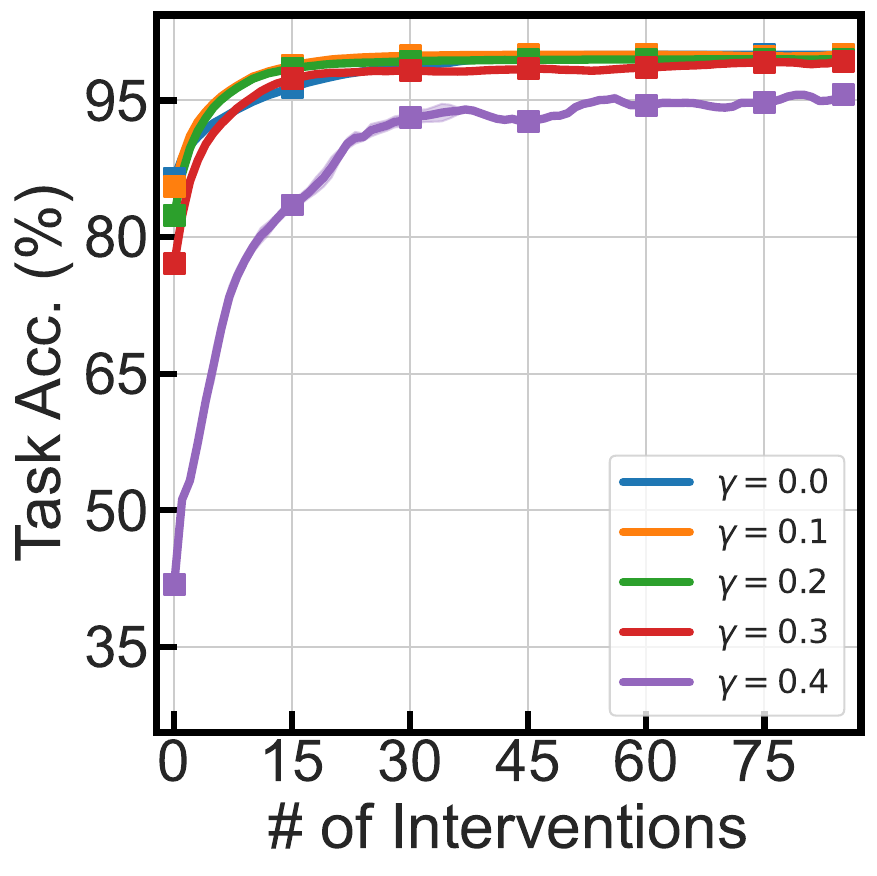}
            \subcaption{ECTP}
        \end{subfigure}
        \begin{subfigure}{0.16\linewidth}
            \centering 
            \includegraphics[width=0.95\linewidth]{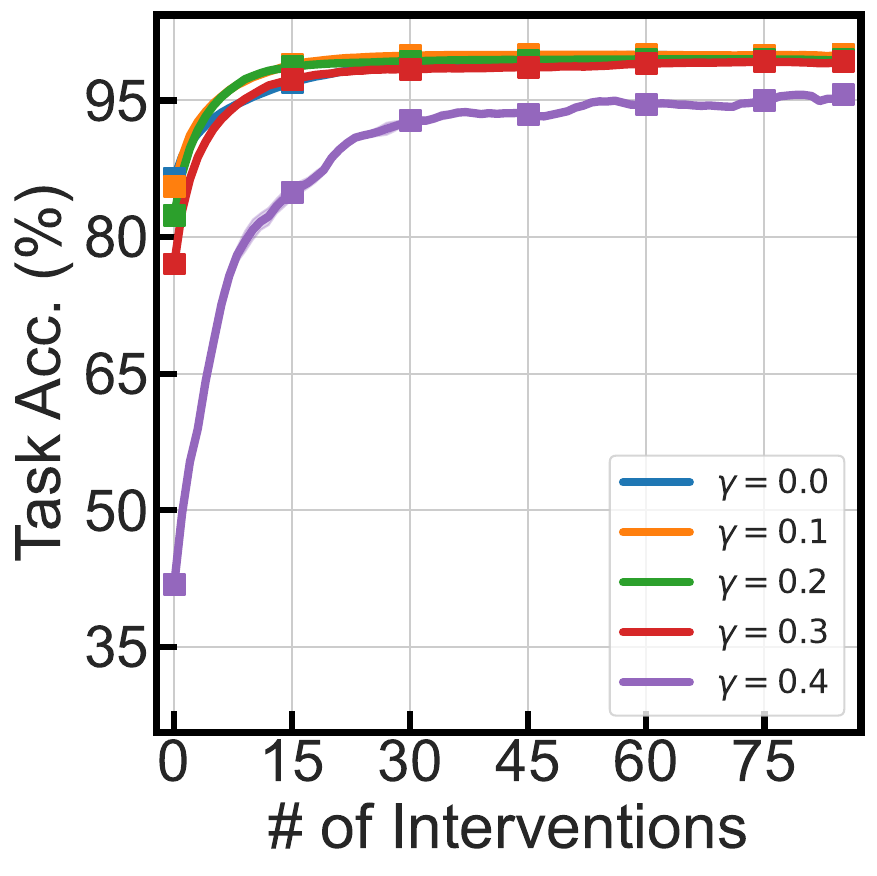}
            \subcaption{EUDTP}
        \end{subfigure}
    \caption{
        Impact of noise on the intervention effectiveness of \cbms using Random, UCP, LCP, CCTP, ECTP, EUDTP strategies under varying noise levels.
        The top row represents the CUB dataset, and the bottom row shows results for AwA2 dataset.
    }
    \label{app:intervention}
\end{figure}

\Cref{app:intervention} presents task accuracy improvements achieved through concept-level interventions on the CUB and AwA2 datasets under varying levels of noise.  
Across both datasets, we observe a consistent pattern: the effectiveness of interventions diminishes as the noise level increases, irrespective of the intervention selection strategy.  
These findings corroborate our main results and further demonstrate that noise significantly compromises the reliability of concept-based interventions.
Moreover, for all intervention methods, the upper bound on task accuracy (which corresponds to interventions applied to all concepts) systematically decreases with increasing noise levels.  
This downward trend underscores the importance of incorporating robustness-aware selection criteria when designing intervention strategies for noisy environments.


\subsection{Comparison of Prediction Accuracy between \sam and \base Optimizer}
\begin{table}[t]
    \centering
    \caption{
        Comparison of concept and task accuracy between \base and \sam on the CUB and AwA2.
    }
    \begin{sc}
    \resizebox{\linewidth}{!}{%
        \centering
        \begin{tabular}{cclccccc}
        \toprule
        & & & \multicolumn{4}{c}{Noise} \\
        \cmidrule(lr){4-7}
        Training Strategy & Optimizer & Metric & $\gamma = 0.1$ & $\gamma = 0.2$ & $\gamma = 0.3$ & $\gamma = 0.4$ & $\Delta$\\ 
        
        \midrule
        \rowcolor{tabgray} \multicolumn{8}{c}{CUB~\cite{CUB} Dataset} \\
        \midrule

        & & Concept Acc & 93.8\std{0.1} & 91.6\std{0.0} & 89.1\std{0.0} & 85.4\std{0.0} \\
        & \multirow{-2}{*}{\base} & Task Acc & 57.7\std{2.0} & 50.3\std{0.7} & 23.3\std{1.2} & 4.0\std{0.7} \\
        & & Concept Acc & 94.6\std{0.1} & 92.5\std{0.1} & 89.7\std{0.1} & 86.3\std{0.1} & \textcolor{red}{\textbf{+0.8}} \\
        \multirow{-4}{*}{Independent} & \multirow{-2}{*}{\sam} & Task Acc & 61.8\std{1.8} & 54.2\std{0.7} & 28.5\std{1.4} & 5.0\std{1.4}& \textcolor{red}{\textbf{+3.6}}\\
        
        \midrule

        & & Concept Acc & 93.8\std{0.0} & 91.6\std{0.0} & 89.1\std{0.0} & 85.4\std{0.1}\\
        & \multirow{-2}{*}{\base} & Task Acc & 66.6\std{0.4} & 59.3\std{0.6} & 47.0\std{1.7} & 6.1\std{2.6} \\
        & & Concept Acc & 94.6\std{0.1} & 92.5\std{0.1} & 89.7\std{0.1} & 86.3\std{0.1} & \textcolor{red}{\textbf{+0.8}}\\
        \multirow{-4}{*}{Sequential} & \multirow{-2}{*}{\sam} & Task Acc & 70.5\std{0.6} & 63.5\std{0.9} & 50.1\std{1.1} & 10.7\std{6.0} & \textcolor{red}{\textbf{+4.0}}\\

        \midrule
    
        & & Concept Acc &85.9\std{0.5} &78.4\std{0.6} &67.6\std{1.2} &57.3\std{0.3} & \\
        & \multirow{-2}{*}{\base} & Task Acc &75.2\std{0.3} &69.2\std{0.5} &59.8\std{0.3} &50.1\std{0.5} &\\
        & & Concept Acc &86.0\std{0.2} &78.5\std{0.1} &68.0\std{0.8} &57.9\std{0.3} &\textcolor{red}{\textbf{+0.3}} \\
        \multirow{-4}{*}{Joint} & \multirow{-2}{*}{\sam} & Task Acc &76.1\std{0.4} &69.9\std{0.6} &60.8\std{0.4} &50.6\std{1.5} &\textcolor{red}{\textbf{+0.8}} \\

        \midrule
        \rowcolor{tabgray} \multicolumn{8}{c}{AwA2~\cite{AwA2} Dataset} \\
        \midrule
        
        & & Concept Acc & 78.4\std{0.8} & 78.1\std{0.7} & 77.3\std{0.3} & 75.3\std{0.8}\\
        & \multirow{-2}{*}{Base} & Task Acc & 85.5\std{0.3} & 82.3\std{1.4} & 77.1\std{0.5} & 41.9\std{1.0} \\
        & & Concept Acc & 78.6\std{0.7} & 78.5\std{0.5} & 77.9\std{0.7} & 75.9\std{1.3} & \textcolor{red}{\textbf{+0.5}}\\
        \multirow{-4}{*}{Independent} & \multirow{-2}{*}{SAM} & Task Acc & 88.1\std{0.1} & 85.7\std{0.4} & 78.6\std{2.8} & 46.5\std{1.4}& \textcolor{red}{\textbf{+3.0}}\\
        
        \midrule

        & & Concept Acc & 78.4\std{0.8} & 78.1\std{0.7} & 77.3\std{0.3} & 75.3\std{0.8}\\
        & \multirow{-2}{*}{Base} & Task Acc & 87.6\std{0.3} & 85.8\std{0.3} & 81.8\std{1.1} & 70.1\std{3.9} \\
        & & Concept Acc & 78.6\std{0.7} & 78.5\std{0.5} & 77.9\std{0.7} & 75.9\std{1.3} & \textcolor{red}{\textbf{+0.5}}\\
        \multirow{-4}{*}{Sequential} & \multirow{-2}{*}{SAM} & Task Acc & 89.5\std{0.5} & 88.0\std{0.5} & 82.6\std{3.1} & 69.6\std{6.3} & \textcolor{red}{\textbf{+1.1}}\\

        \midrule
        
        & & Concept Acc &74.2\std{0.4} &70.1\std{0.8} &65.4\std{0.3} &57.4\std{0.2} & \\
        & \multirow{-2}{*}{Base} & Task Acc &84.2\std{0.1} &83.0\std{0.3} &82.2\std{0.1} &81.7\std{0.3} &\\
        & & Concept Acc &74.9\std{0.4} &72.7\std{0.6} &67.7\std{0.7} &58.9\std{0.9} &\textcolor{red}{\textbf{+1.8}} \\
        \multirow{-4}{*}{Joint} & \multirow{-2}{*}{SAM} & Task Acc &89.0\std{0.1} &88.4\std{0.2} &87.3\std{0.2} & 86.6\std{0.3} &\textcolor{red}{\textbf{+5.1}} \\
        \bottomrule
        \end{tabular}
    }
    \end{sc}
    \label{app:base and sam}
\end{table}

\Cref{app:base and sam} reports the overall task and concept prediction accuracy of \cbms trained with \sam and the baseline (\base) on the CUB and AwA2 datasets under various noise settings ($\gamma \in \{0.0, 0.2, 0.4\}$).  
Across all noise levels and types, \sam consistently outperforms \base in both concept and task accuracy, demonstrating its robustness to noise.

These results suggest that \sam is more effective than \base at mitigating the impact of noisy labels, maintaining higher accuracy across different \cbm training paradigms.  
On the CUB dataset, for example, \sam yields notable gains for both independent and sequential models, achieving average improvements of $0.8\%$ and $3.6\%$ in concept and task accuracy for independent, and $0.8\%$ and $4.0\%$ for sequential, respectively.  
On AwA2, the largest improvements are observed in the joint setting, with increases of $1.8\%$ in concept accuracy and $5.1\%$ in task accuracy.
Overall, these findings confirm that \sam enhances \cbm robustness across datasets and training strategies, effectively reducing the adverse effects of noise irrespective of dataset characteristics.


\subsection{Robustness of \cbms across Backbone Architectures} 
\begin{table}[h]
    \centering
    \caption{
        Impact of noise on different \cbm backbones. 
        \cbms implemented with ResNet-18 and ViT-B/16 exhibit notable performance degradation under increasing levels of noise. 
        The application of \sam consistently alleviates this degradation across all noise settings. 
    }
    \resizebox{\textwidth}{!}{%
        \centering
        \begin{sc}
        \begin{small}
        \begin{tabular}{clccccccc}
        \toprule
        & & \multicolumn{3}{c}{\base} & \multicolumn{3}{c}{\sam} & \\
        \cmidrule(lr){3-5} \cmidrule(lr){6-8}
        Backbone & Metric & $\gamma = 0.0$ & $\gamma = 0.2$ & $\gamma = 0.4$ & $\gamma = 0.0$ & $\gamma = 0.2$ & $\gamma = 0.4$ & $\Delta$\\ 
        \midrule
        & Concept Acc & 95.23 & 90.40 & 81.28 & 95.98 & 92.70 & 81.78 & \textcolor{red}{\textbf{+1.19}}\\
        \multirow{-2}{*}{ResNet-18} & Task Acc & 69.14 & 49.14 & 0.90 & 73.32 & 60.55 & 0.52 & \textcolor{red}{\textbf{+5.07}}\\
        \midrule
        & Concept Acc & 96.04 & 89.06 & 82.76 & 96.74 & 90.95 & 85.84 & \textcolor{red}{\textbf{+1.89}}\\
        \multirow{-2}{*}{ViT-B/16} & Task Acc & 73.66 & 31.05 & 1.69 & 77.87 & 47.26 & 3.19 & \textcolor{red}{\textbf{+7.31}}\\
        \bottomrule
        \end{tabular}
        \end{small}
        \end{sc}
        \label{tab:backbones}
    }
\end{table}

To assess the impact of backbone architecture on the robustness of \cbms under noise, we conduct experiments using two distinct architectures: ResNet-18~\cite{he2016deep} and ViT-B/16~\cite{ViT}. 
These models are trained on the CUB dataset with varying levels of noise ($\gamma \in \{0.1, 0.2, 0.3, 0.4\}$). 
We also compare the performance of the standard SGD (\ie, \base) optimizer with the \sam optimizer~\cite{SAM}.

\Cref{tab:backbones} presents the task and concept prediction accuracies for each configuration. 
Our findings indicate that, regardless of the backbone architecture, \cbms experience significant performance degradation as noise increases. 
However, models trained with the SAM optimizer consistently exhibit improved robustness compared to those trained with SGD. 
Specifically, SAM yields average task accuracy improvements of 5.07\% for ResNet-18 and 7.31\% for ViT-B/16 under noisy conditions.

These results align with the observations of \citet{chen2022when}, who demonstrated that ViTs are prone to converging to sharp local minima, making them more sensitive to optimization challenges. 
The application of \sam, which promotes flatter loss landscapes, effectively mitigates this issue, leading to enhanced generalization and robustness. 
Consequently, while the choice of backbone architecture influences the degree of sensitivity to noise, the adoption of \sam emerges as a critical factor in enhancing the resilience of \cbms to noisy annotations.

\subsection{Additional Evidence that Susceptibility Approximates Uncertainty}
\label{subsec:susceptibility approximates uncertainty}
In~\Cref{subsec:SAM ucp}, we demonstrated on the CUB dataset that susceptibility can be effectively approximated by uncertainty.
Here, we provide additional results on the AwA2 dataset, which exhibit similar trends.

As shown in~\Cref{fig:susceptibiltiy in AwA2}, a positive correlation between uncertainty and susceptibility consistently appears across datasets.
While the AwA2 dataset shows a mild negative correlation within the non-susceptible set, this does not contradict the strong positive relationship observed among susceptible concepts.
These findings support that our analysis captures a general and dataset-independent pattern.

\begin{figure}[h]
    \centering
    \begin{subfigure}{0.32\linewidth}
        \includegraphics[width=\linewidth]{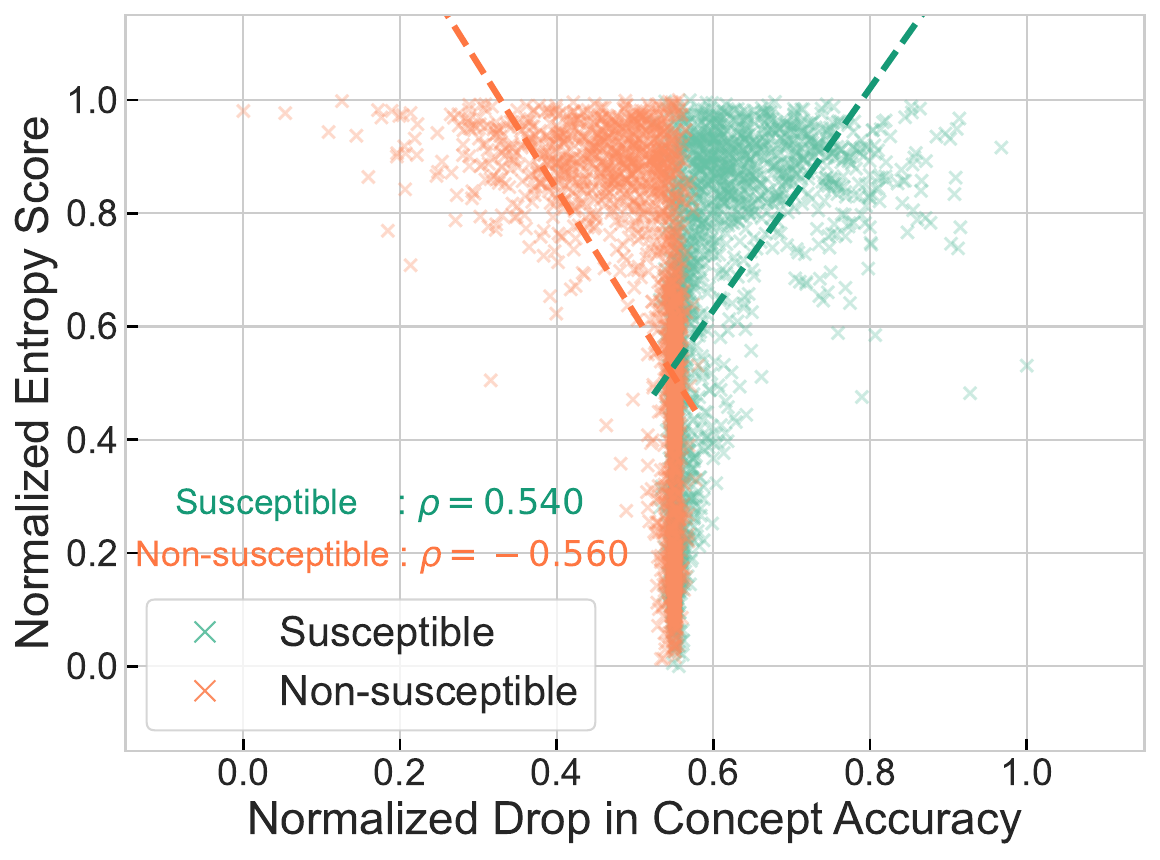}
        \subcaption{$\gamma = 0.1$}
    \end{subfigure}
    \begin{subfigure}{0.32\linewidth}
        \includegraphics[width=\linewidth]{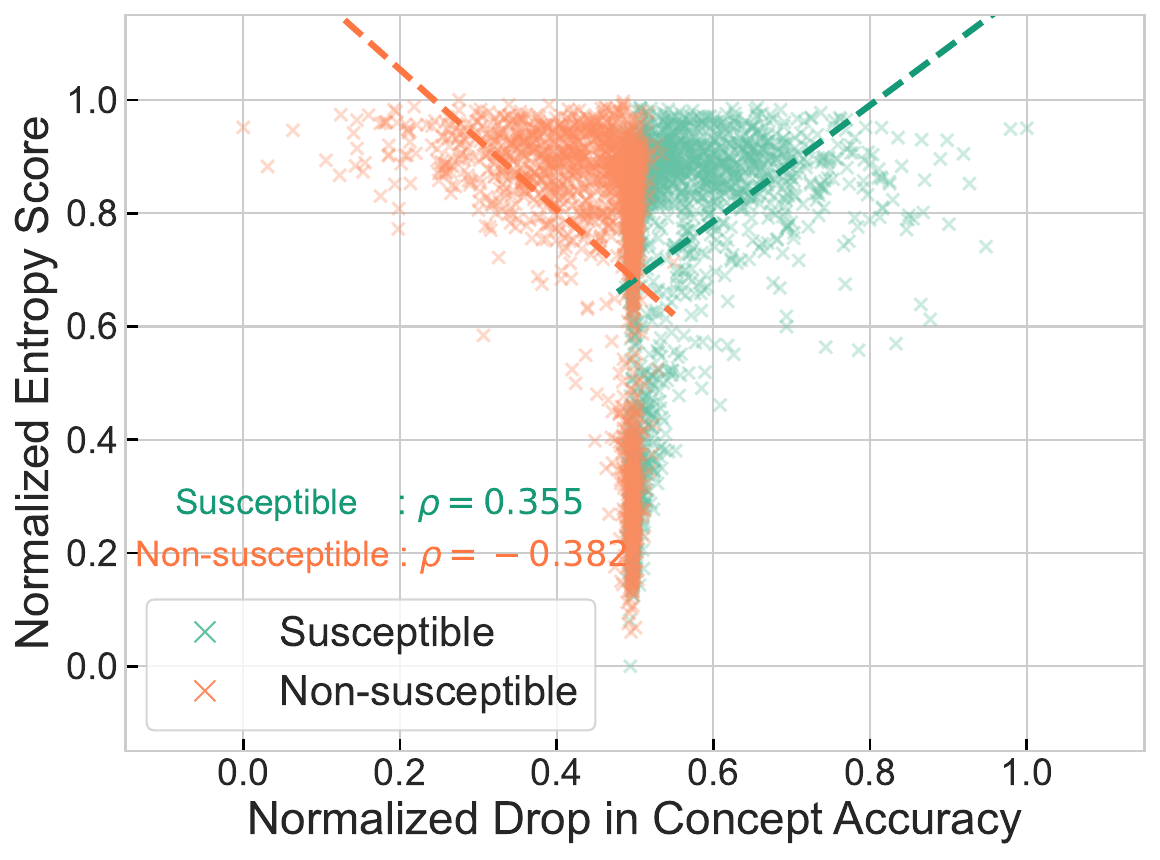}
        \subcaption{$\gamma = 0.2$}
    \end{subfigure}
    \begin{subfigure}{0.32\linewidth}
        \includegraphics[width=\linewidth]{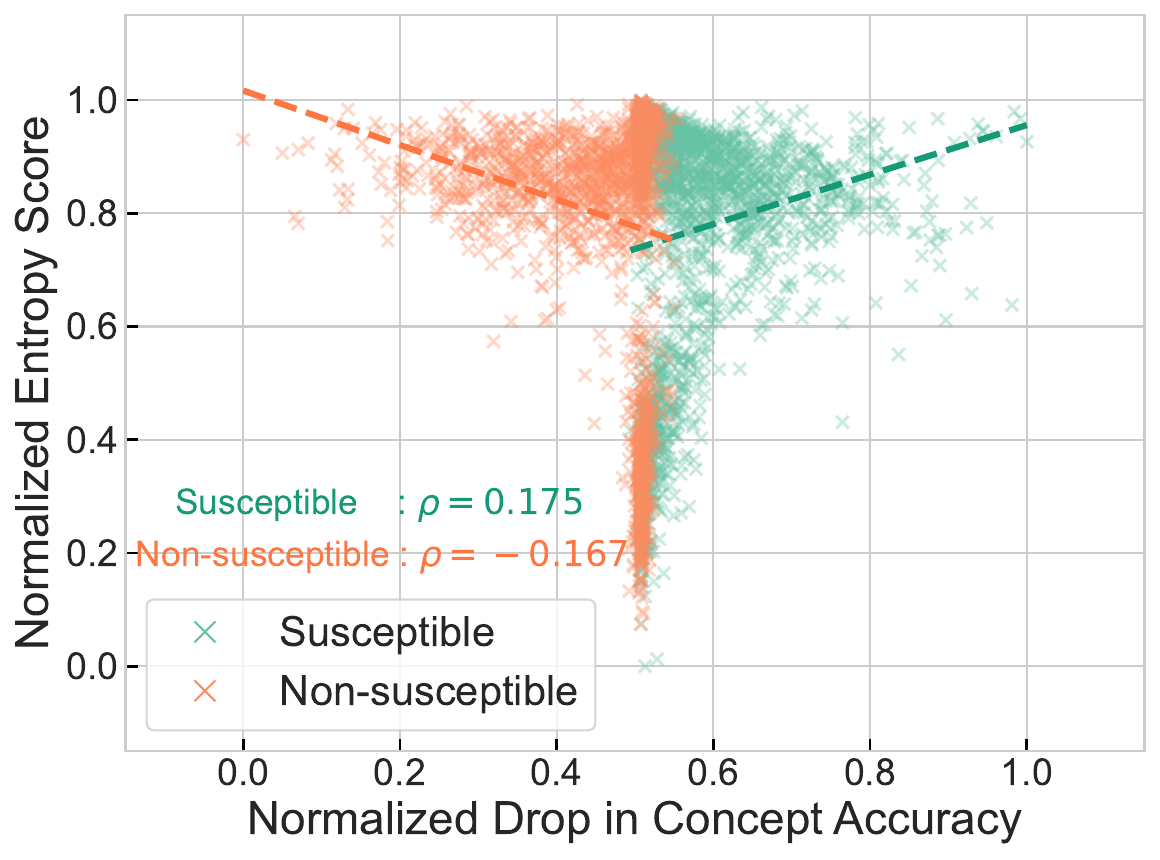}
        \subcaption{$\gamma = 0.3$}
    \end{subfigure}
    \caption{
        Correlation between uncertainty and susceptibility on AwA2. 
        As with the CUB dataset, Pearson correlation coefficients ($\rho$) are reported separately for susceptible and non-susceptible sets, showing a positive correlation in the former. 
        A negative correlation appears in the non-susceptible subset but does not alter the positive relationship observed for the susceptible subset.
        }
    \label{fig:susceptibiltiy in AwA2}
\end{figure}


\subsection{Concept-Level Qualitative Results}
\label{app:qualitative results}
In \Cref{subsec:interpretability}, we presented qualitative t-SNE~\citep{tsne} visualizations to illustrate how noise deteriorates the interpretability of learned concept representations.
Here, we provide additional qualitative results focusing on a subset of representative concepts related to bird morphology and coloration. 
Specifically, we examine the spatial clustering patterns of concept predictions for various bill shapes (\eg, \texttt{Curved Bill}, \texttt{Spatulate Bill}, and \texttt{All Purpose Bill}) as well as wing colors (\eg, \texttt{Blue Wings} and \texttt{Orange Wings}).
These concepts were selected based on their semantic diversity and prevalence in the dataset. 
The resulting visualizations further confirm that increased noise disrupts the cluster structure of concept embeddings, reducing the ability of model to maintain separable and interpretable representations.

\begin{figure}[t]
    \centering
    \begin{subfigure}{0.9\linewidth}
        \centering 
        \includegraphics[width=0.32\linewidth]{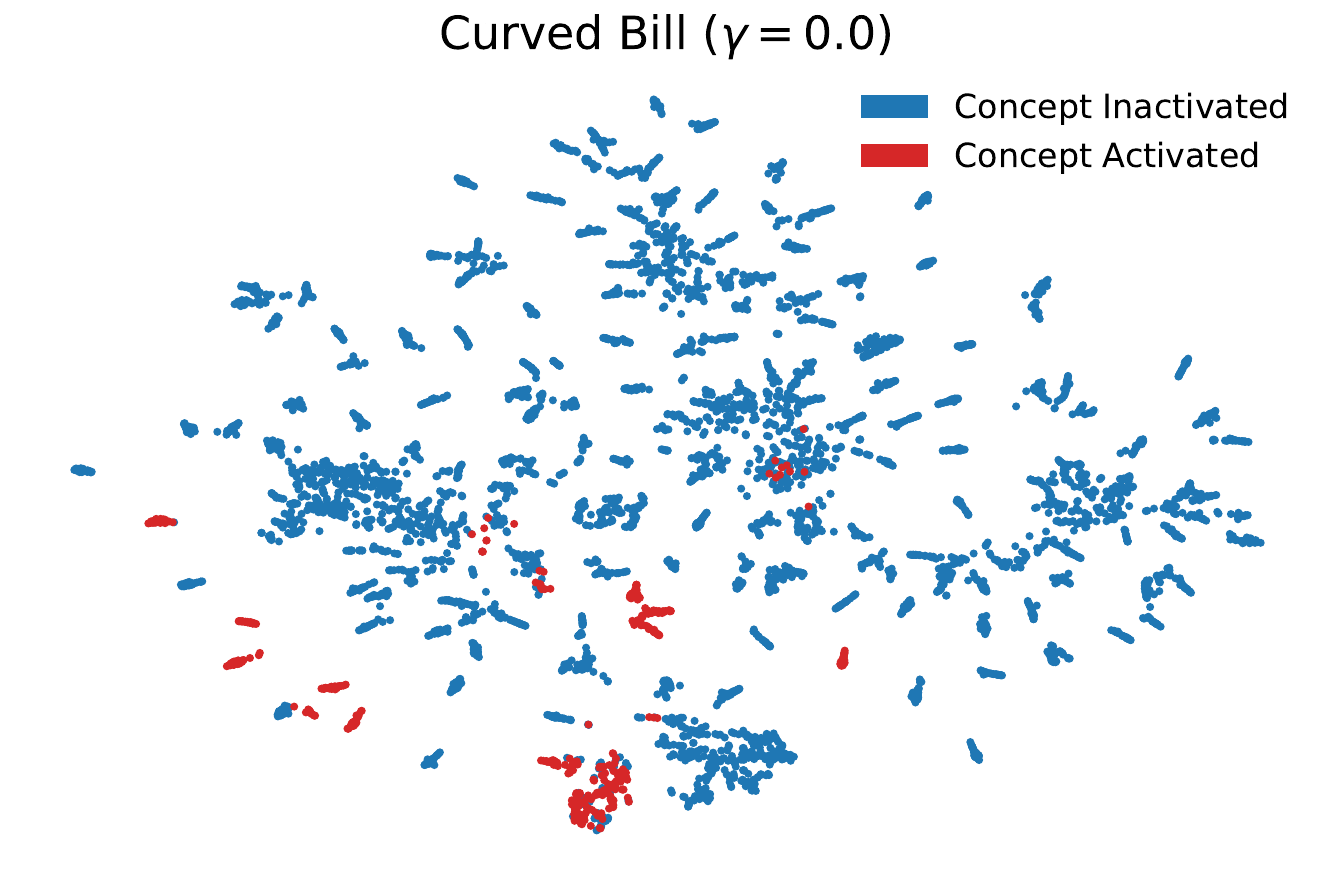}
        \includegraphics[width=0.32\linewidth]{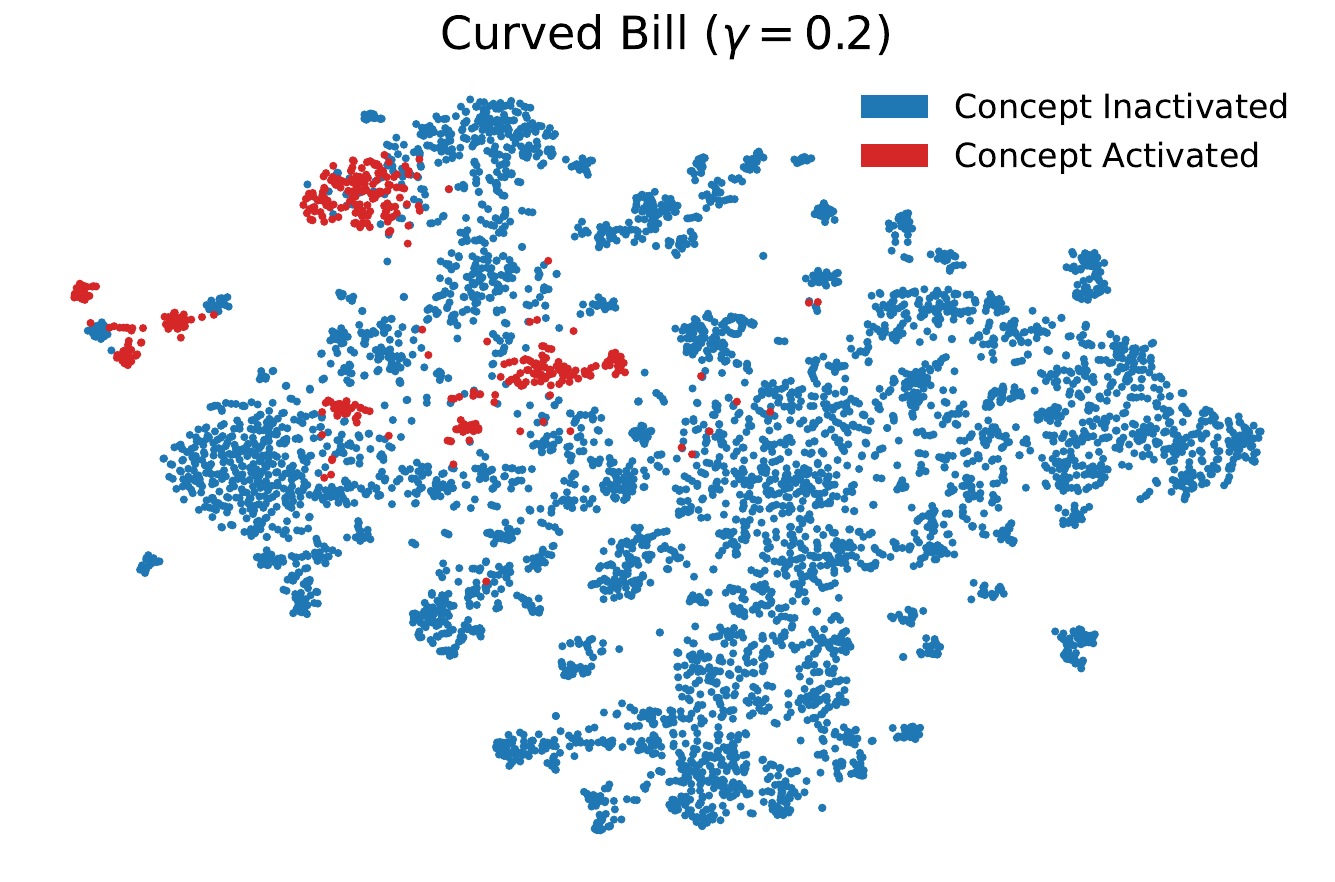}
        \includegraphics[width=0.32\linewidth]{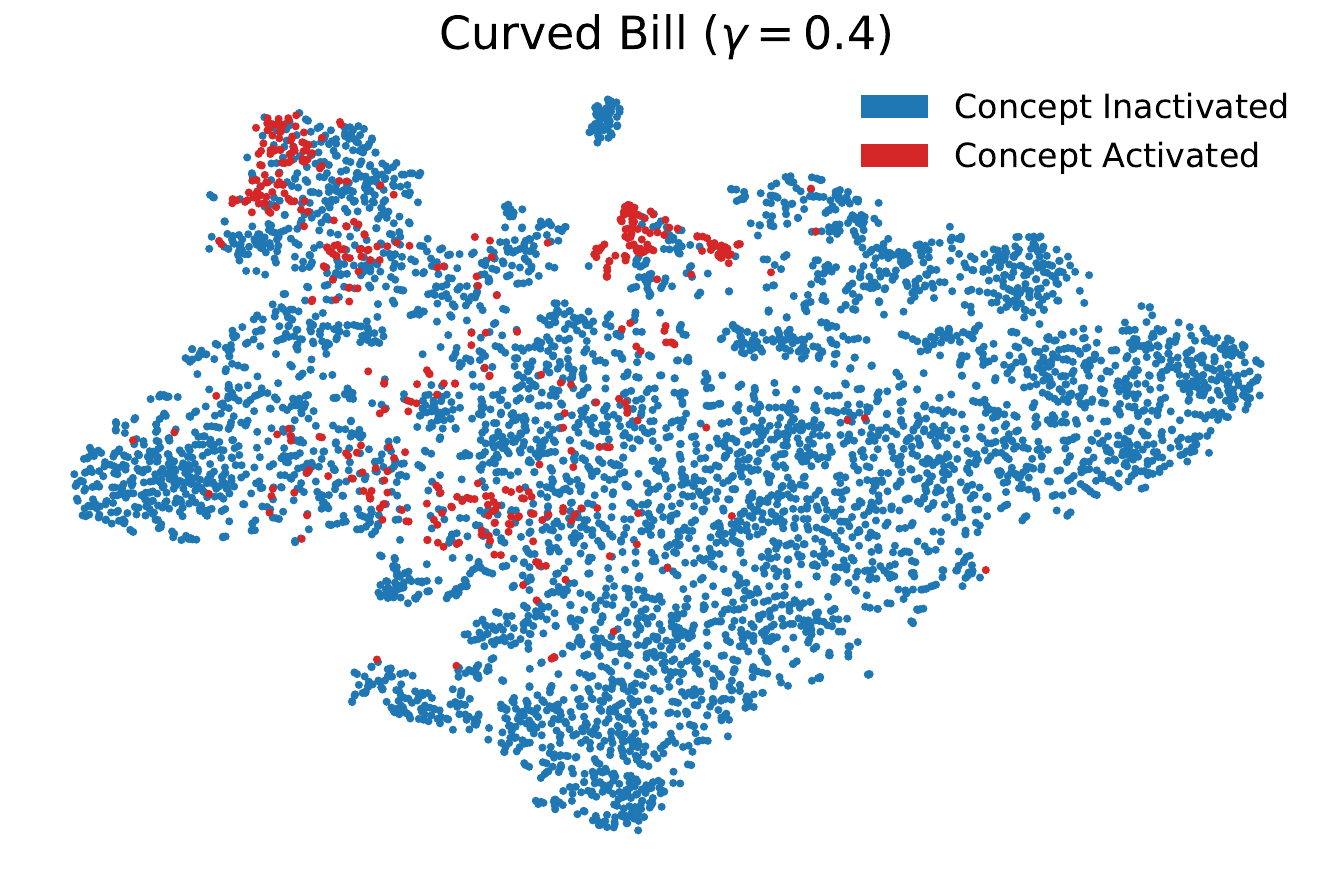}
        \subcaption{\texttt{Curved Bill}}
    \end{subfigure}

    \begin{subfigure}{0.9\linewidth}
        \centering 
        \includegraphics[width=0.32\linewidth]{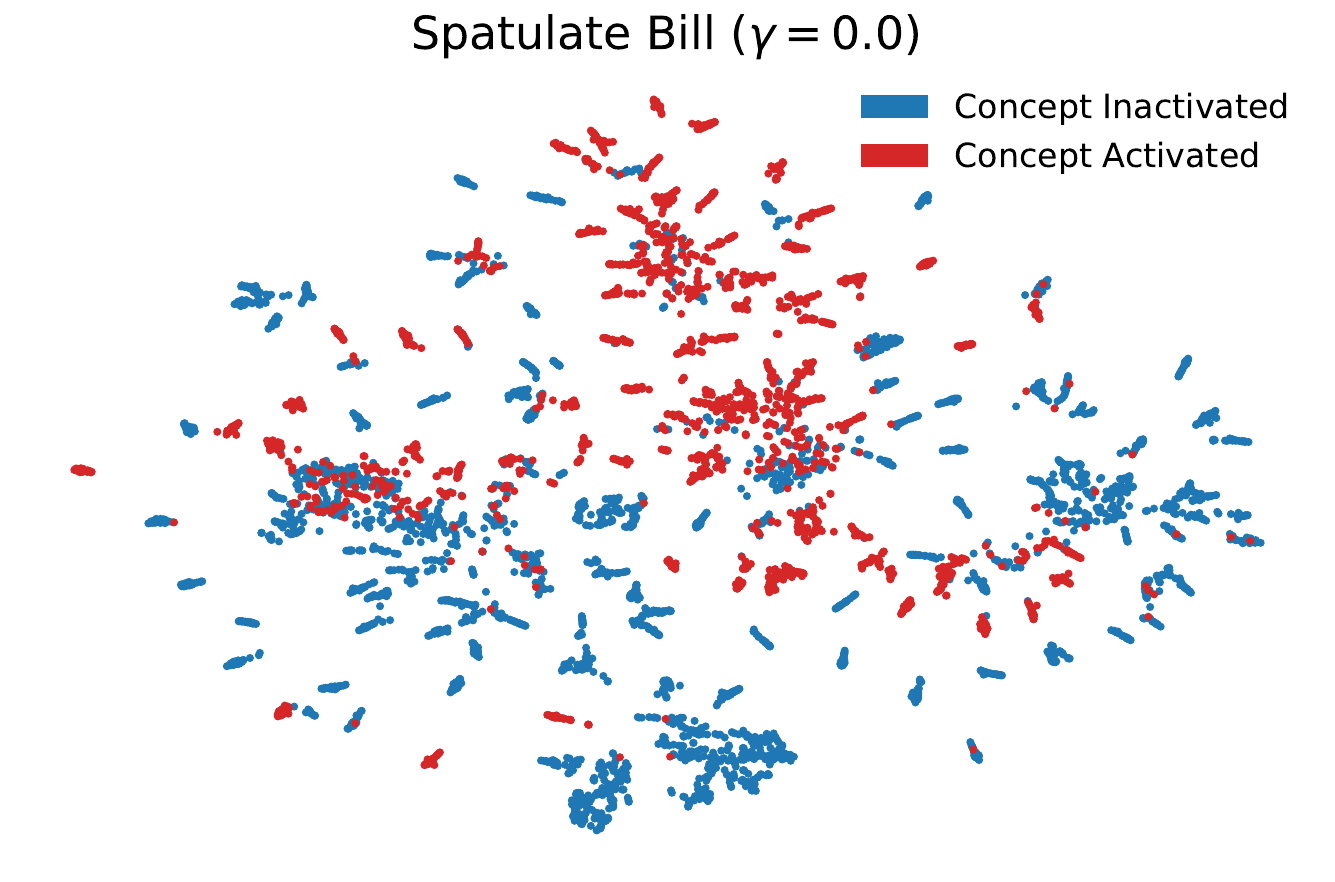}
        \includegraphics[width=0.32\linewidth]{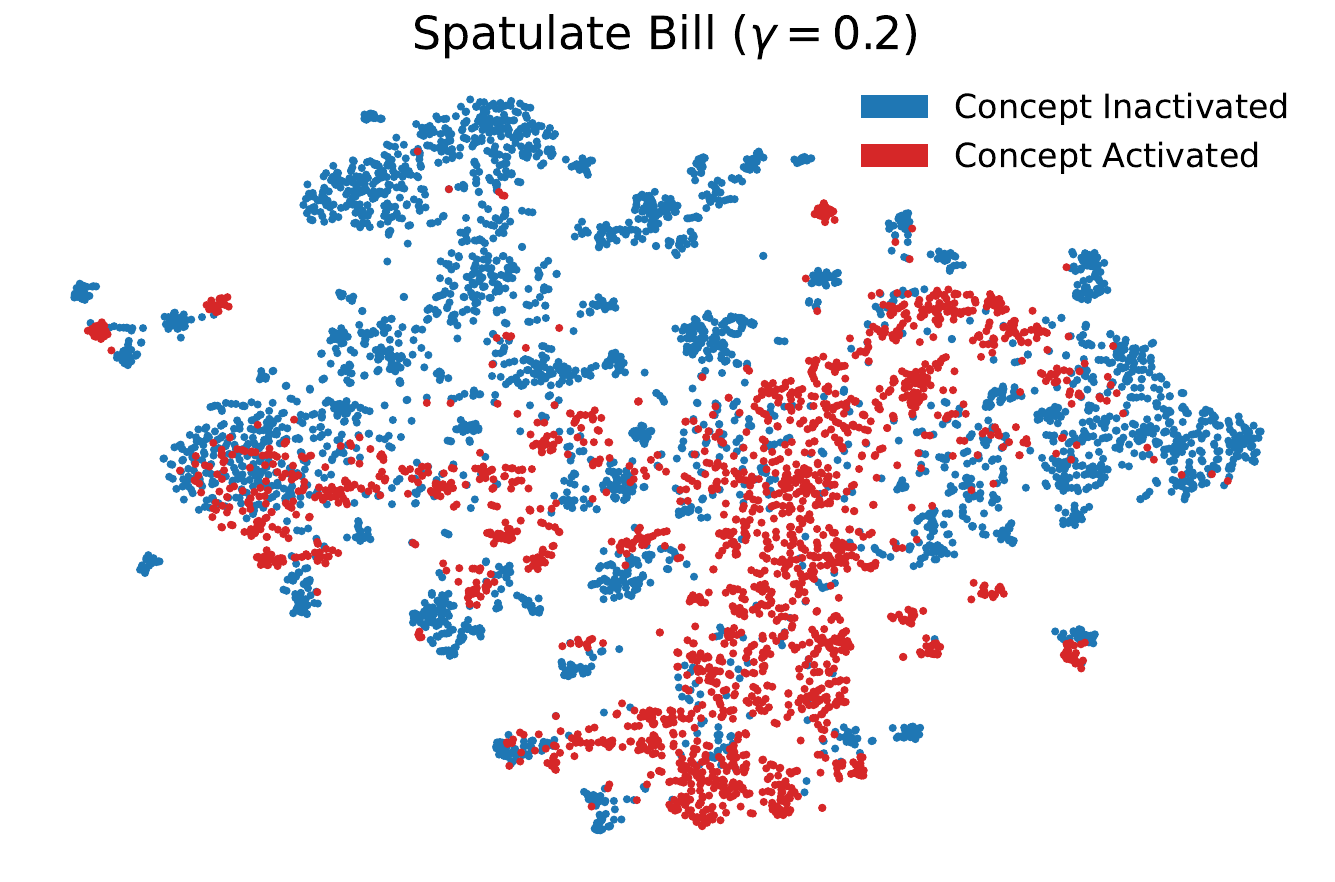}
        \includegraphics[width=0.32\linewidth]{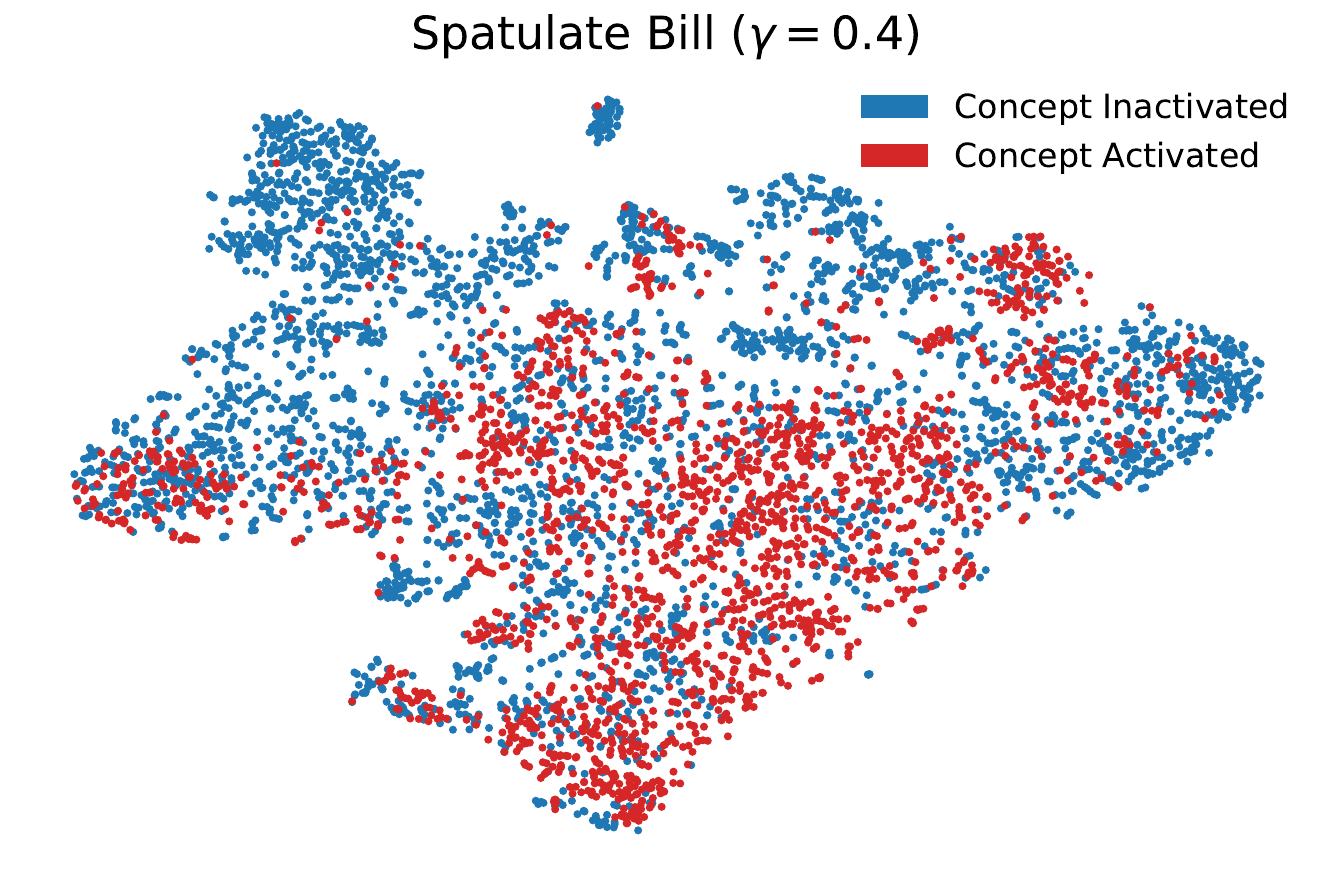}
        \subcaption{\texttt{Spatulate Bill}}
    \end{subfigure}

    \begin{subfigure}{0.9\linewidth}
        \centering 
        \includegraphics[width=0.32\linewidth]{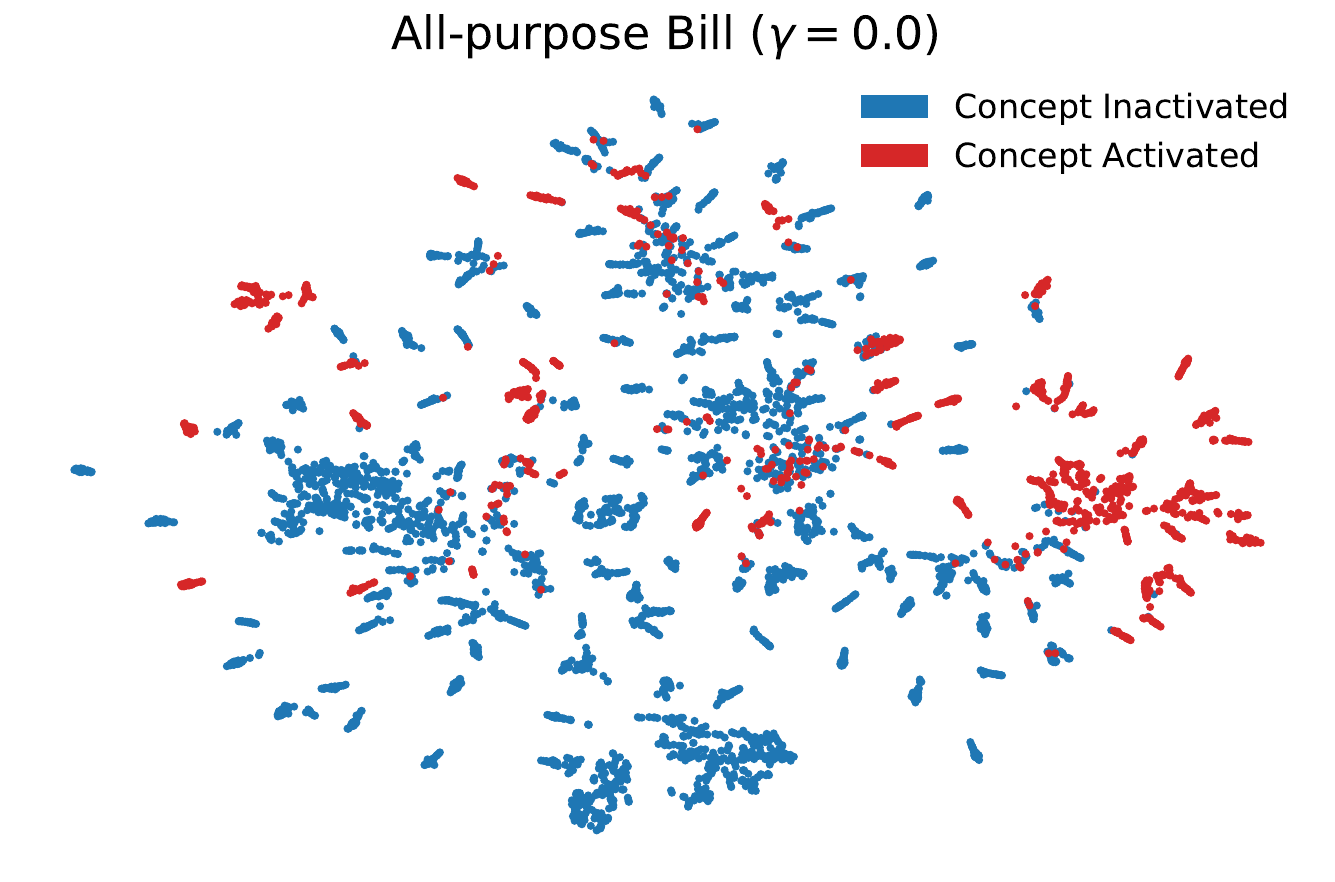}
        \includegraphics[width=0.32\linewidth]{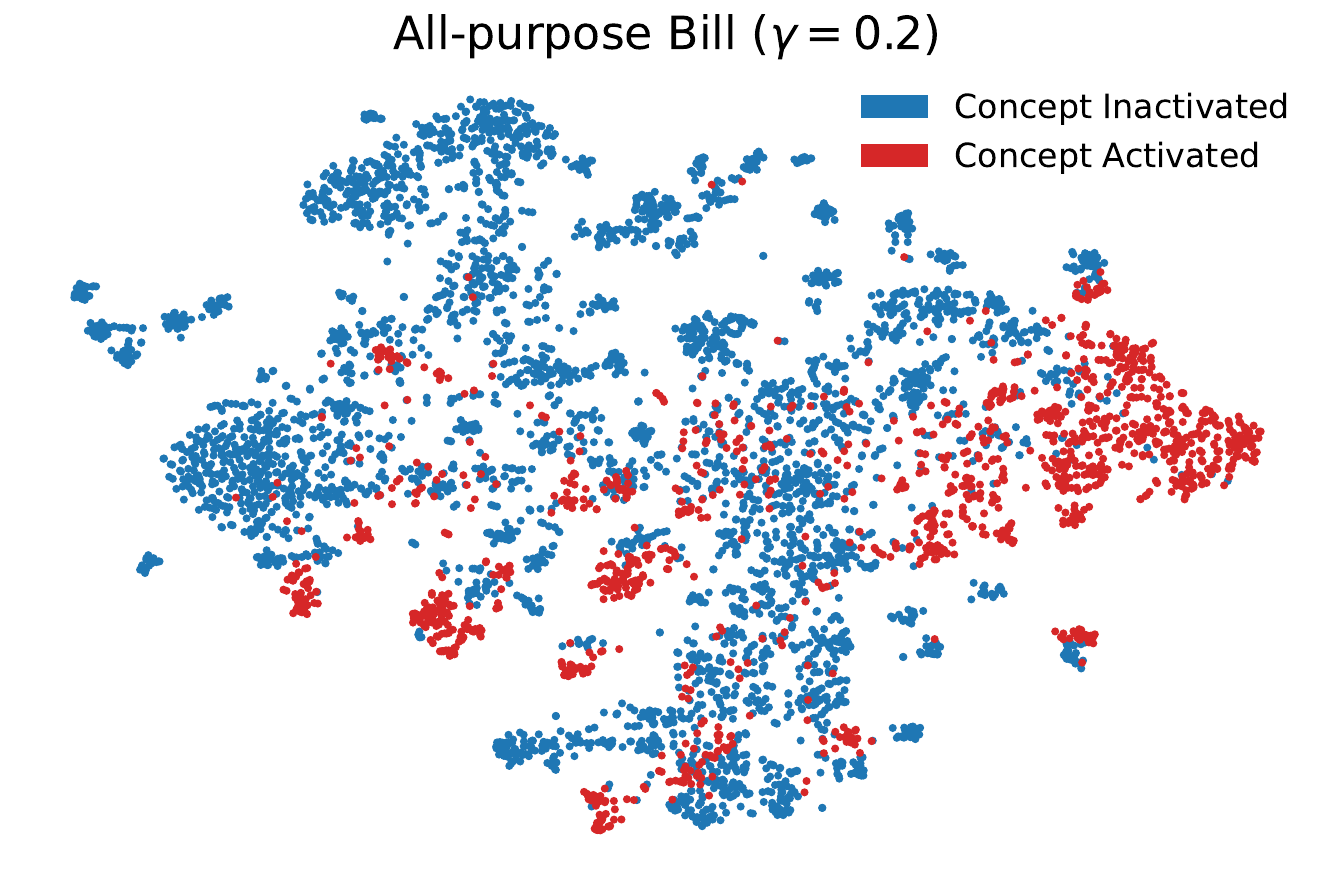}
        \includegraphics[width=0.32\linewidth]{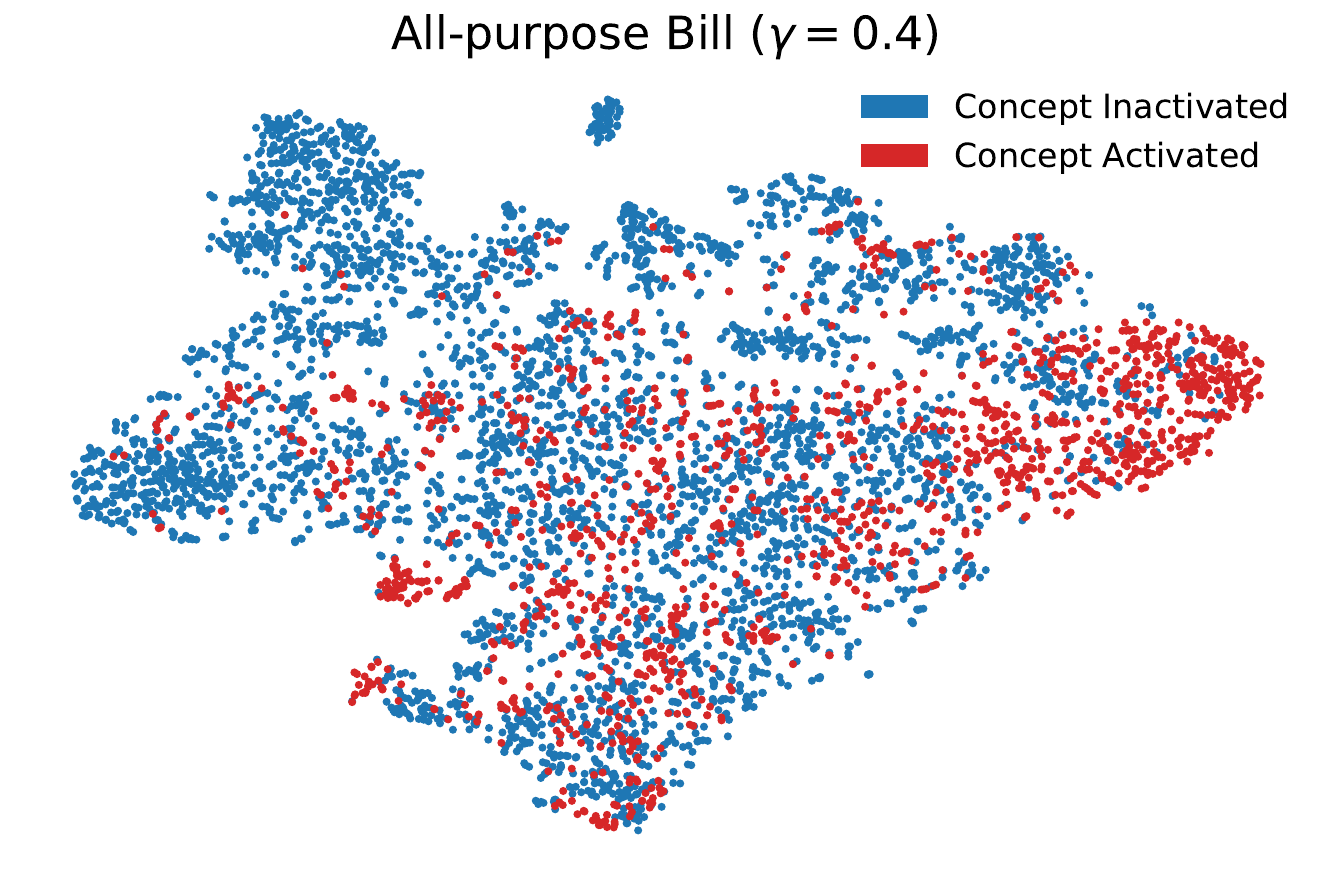}
        \subcaption{\texttt{All-purpose Bill}}
    \end{subfigure}

    \begin{subfigure}{0.9\linewidth}
        \centering 
        \includegraphics[width=0.32\linewidth]{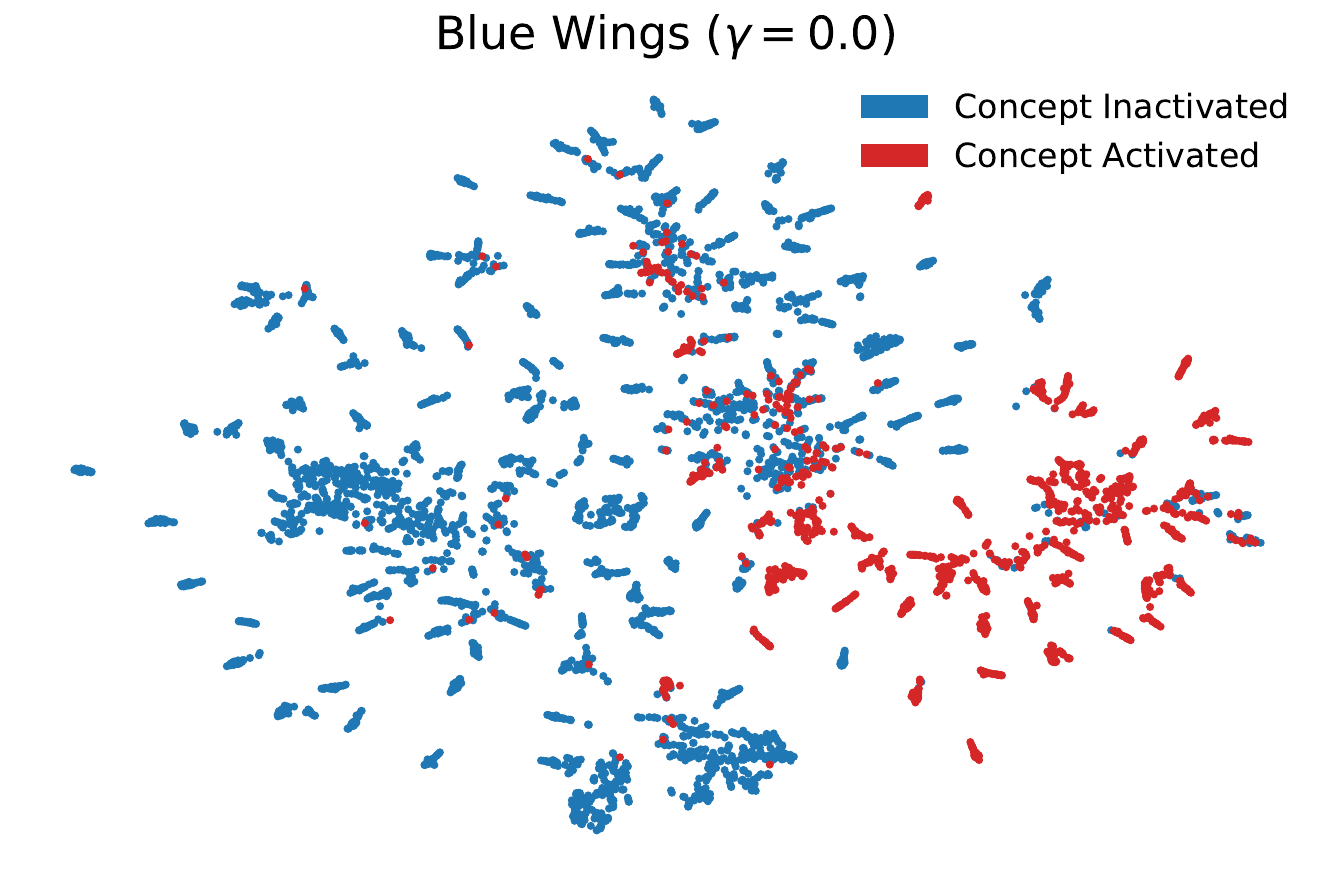}
        \includegraphics[width=0.32\linewidth]{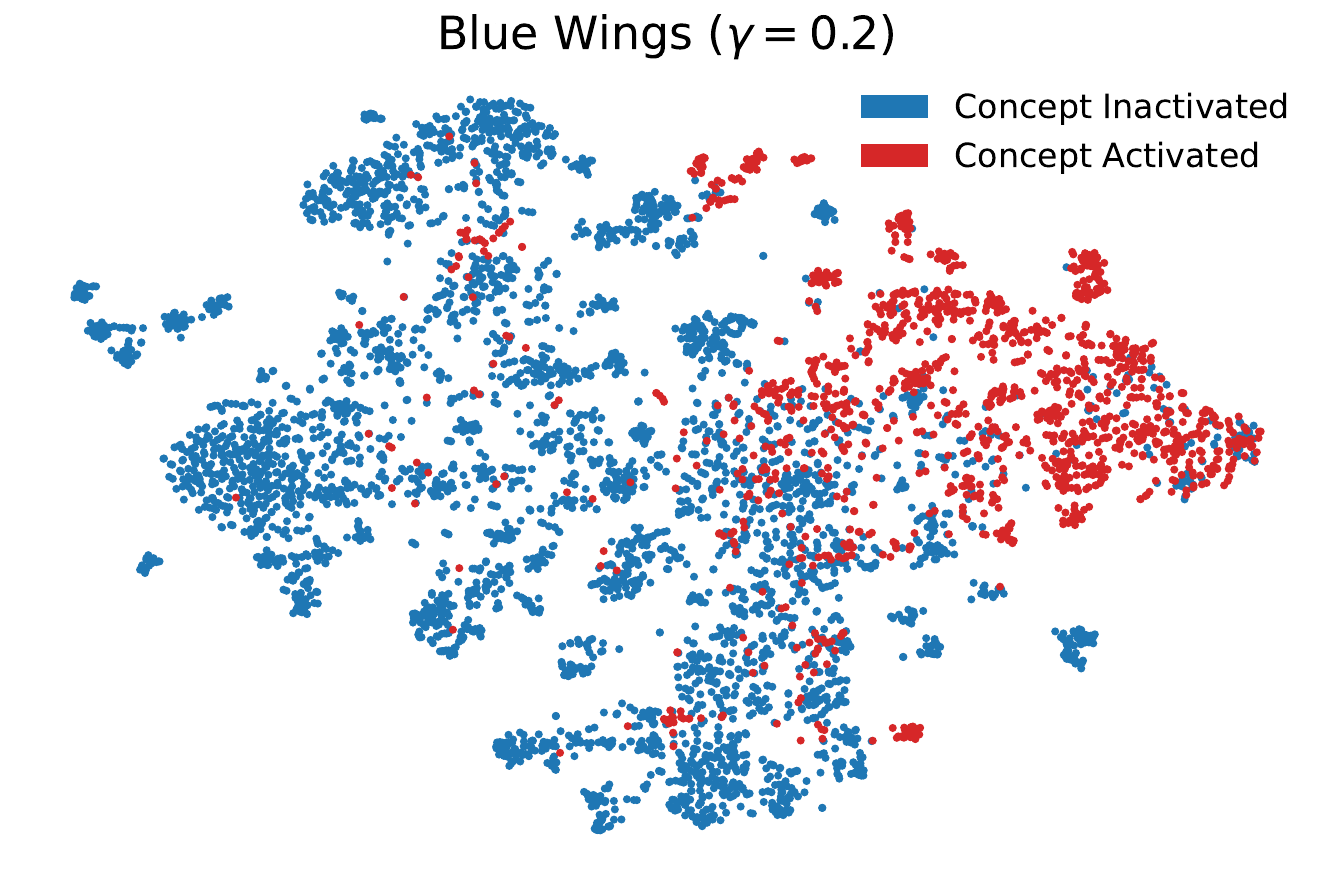}
        \includegraphics[width=0.32\linewidth]{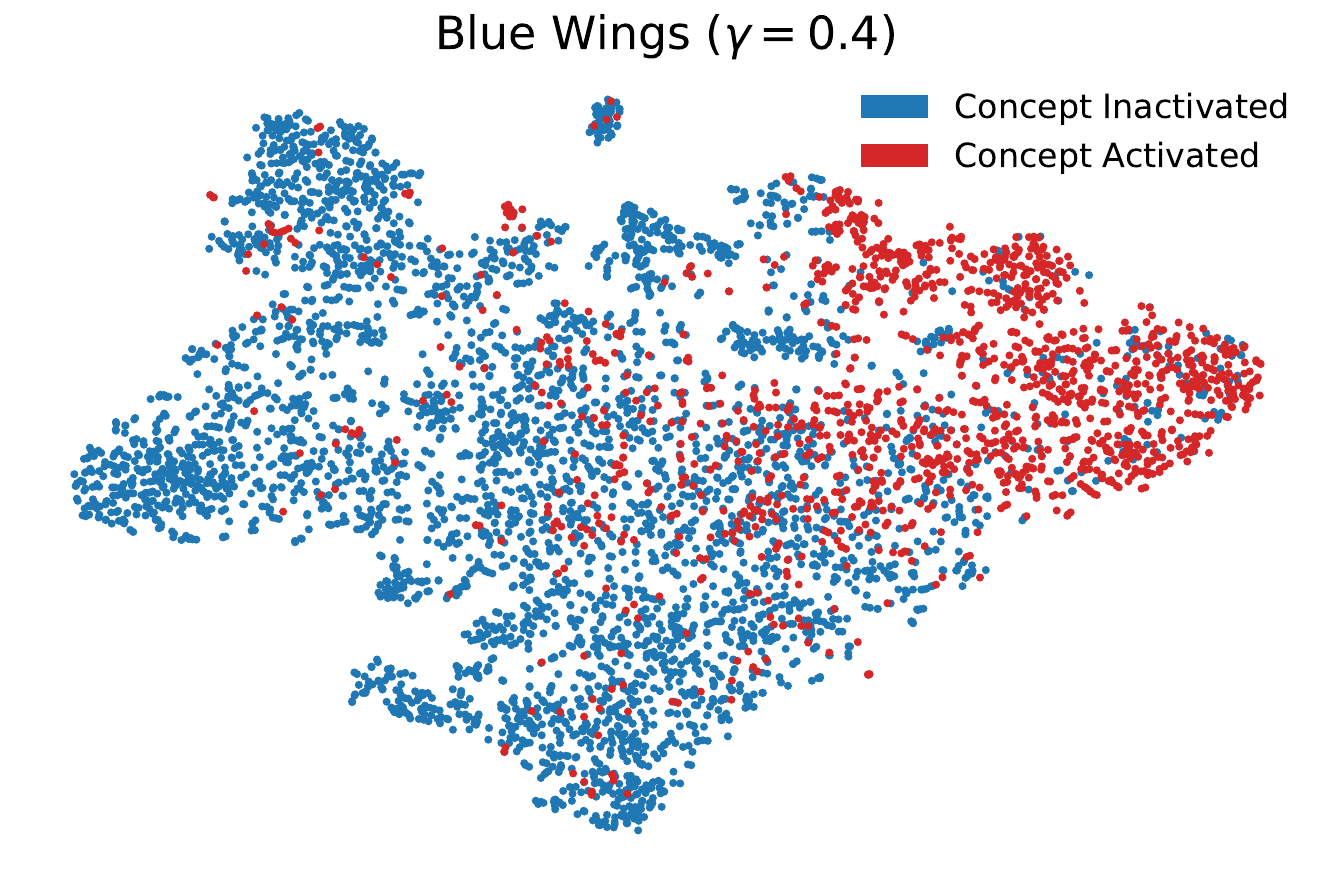}
        \subcaption{\texttt{Blue Wings}}
    \end{subfigure}
    
    \begin{subfigure}{0.9\linewidth}
        \centering 
        \includegraphics[width=0.32\linewidth]{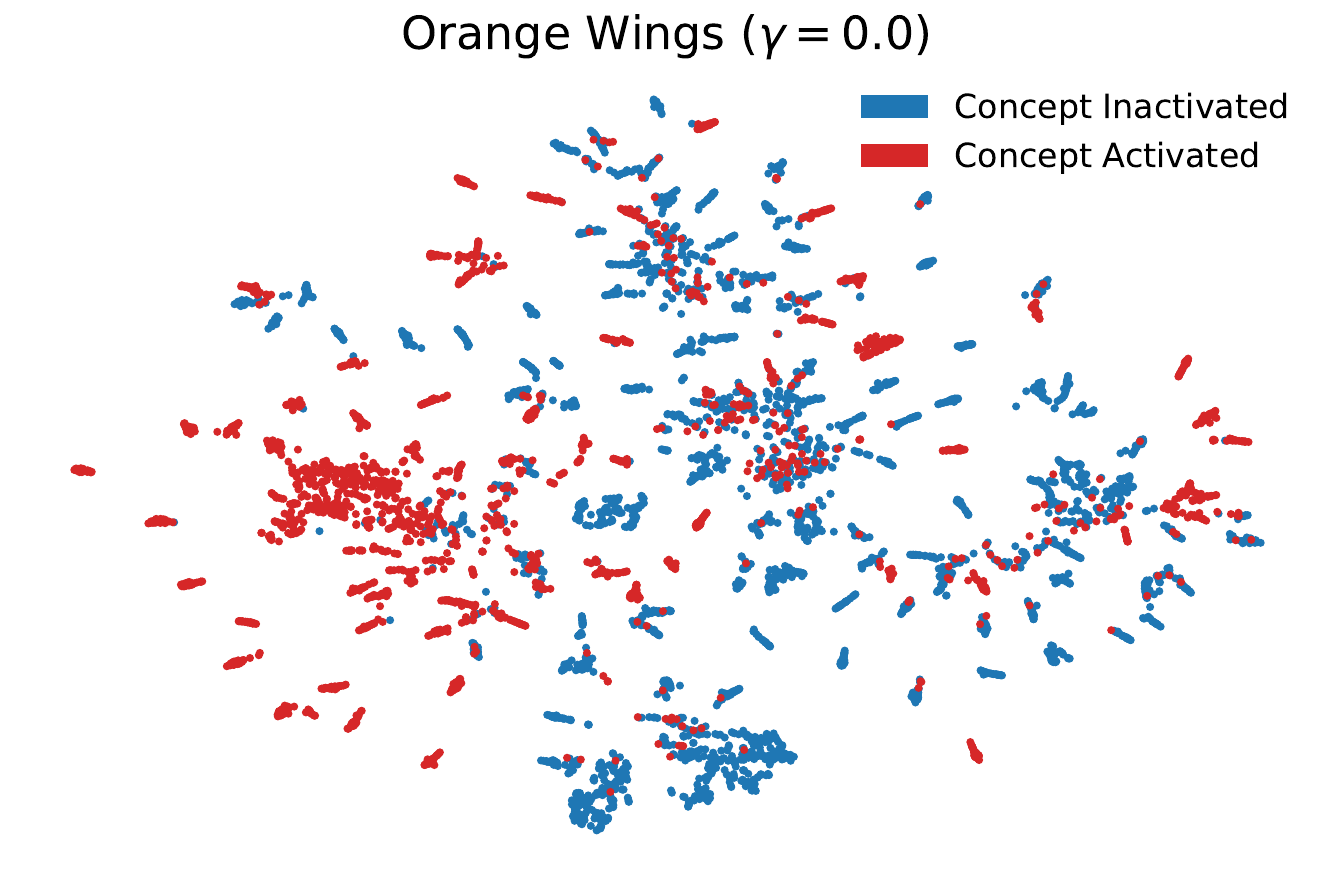}
        \includegraphics[width=0.32\linewidth]{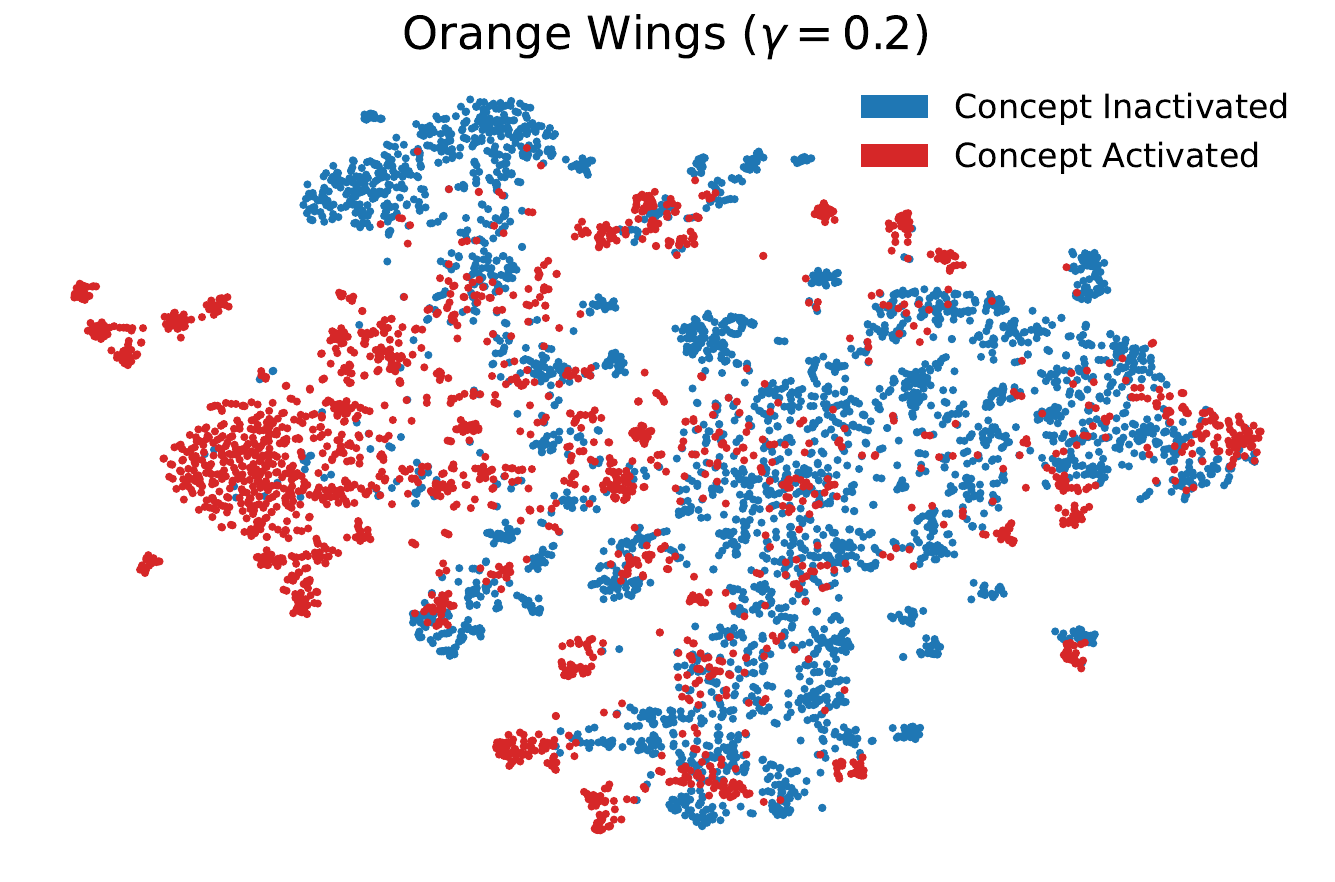}
        \includegraphics[width=0.32\linewidth]{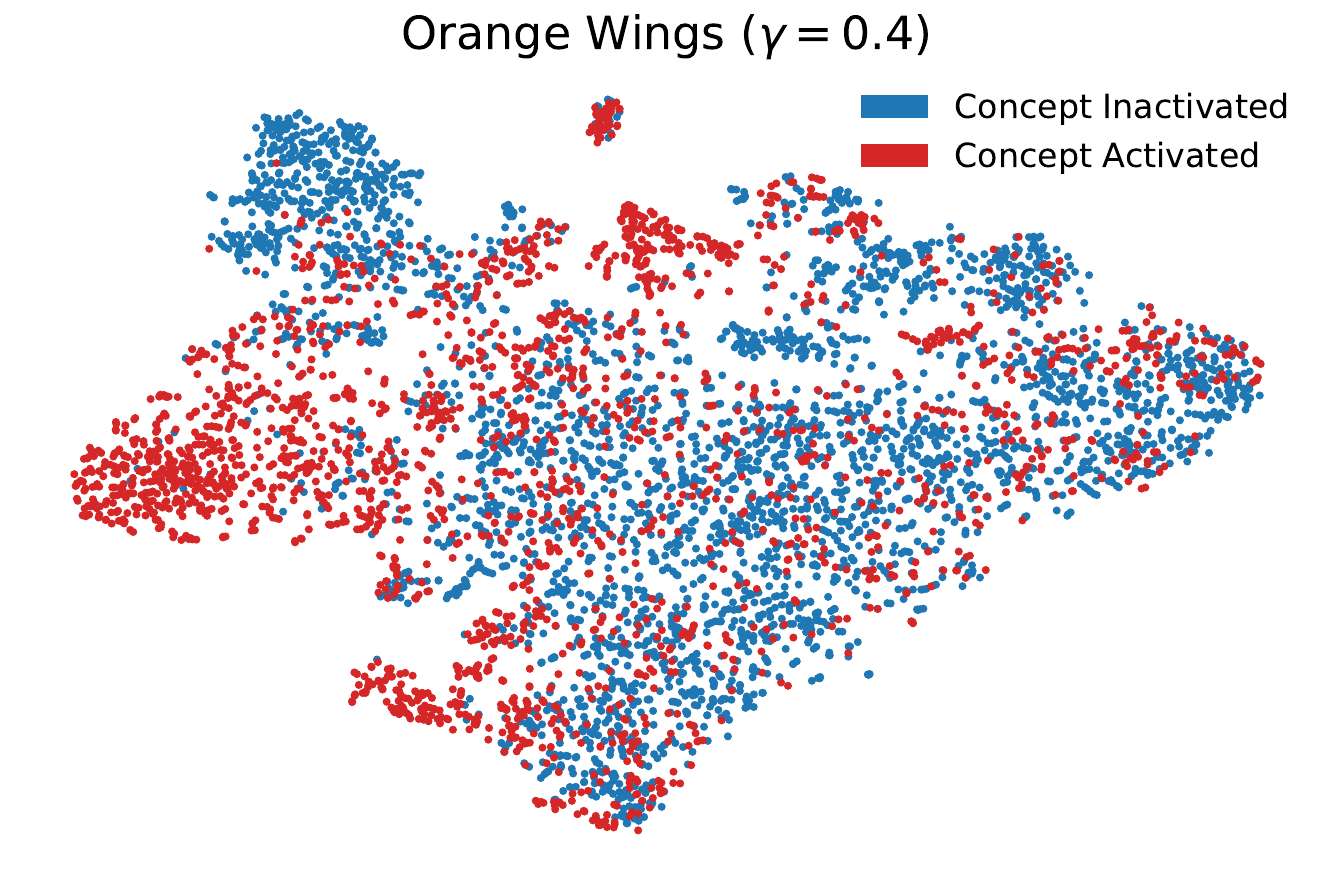}
        \subcaption{\texttt{Orange Wings}}
    \end{subfigure}
    
    \caption{
        Qualitative results: t-SNE~\citep{tsne} visualizations of various concept embeddings learnt in CUB with sample points colored red if the concept is active and blue if the concept is inactive in that sample.
        As noise ratio increases, the concepts are clearly entangled making concepts unreliable.
        }
    \label{app:tsne cas qualitative results}
\end{figure}

\subsection{Noise Injection Protocol}
In~\Cref{fig:overall concept drop}, noise is injected at the concept level with different random seeds, which can substantially influence how the results are interpreted.
To assess whether the same concepts are consistently identified as susceptible across different noise realizations (or whether susceptibility varies randomly) we report the susceptible concepts obtained for each seed in~\Cref{fig:random seeds}.
We find that similar concepts are repeatedly identified as susceptible across seeds, indicating structural sensitivity rather than random variation.

\begin{figure}[h]
    \centering
    \includegraphics[width=\linewidth]{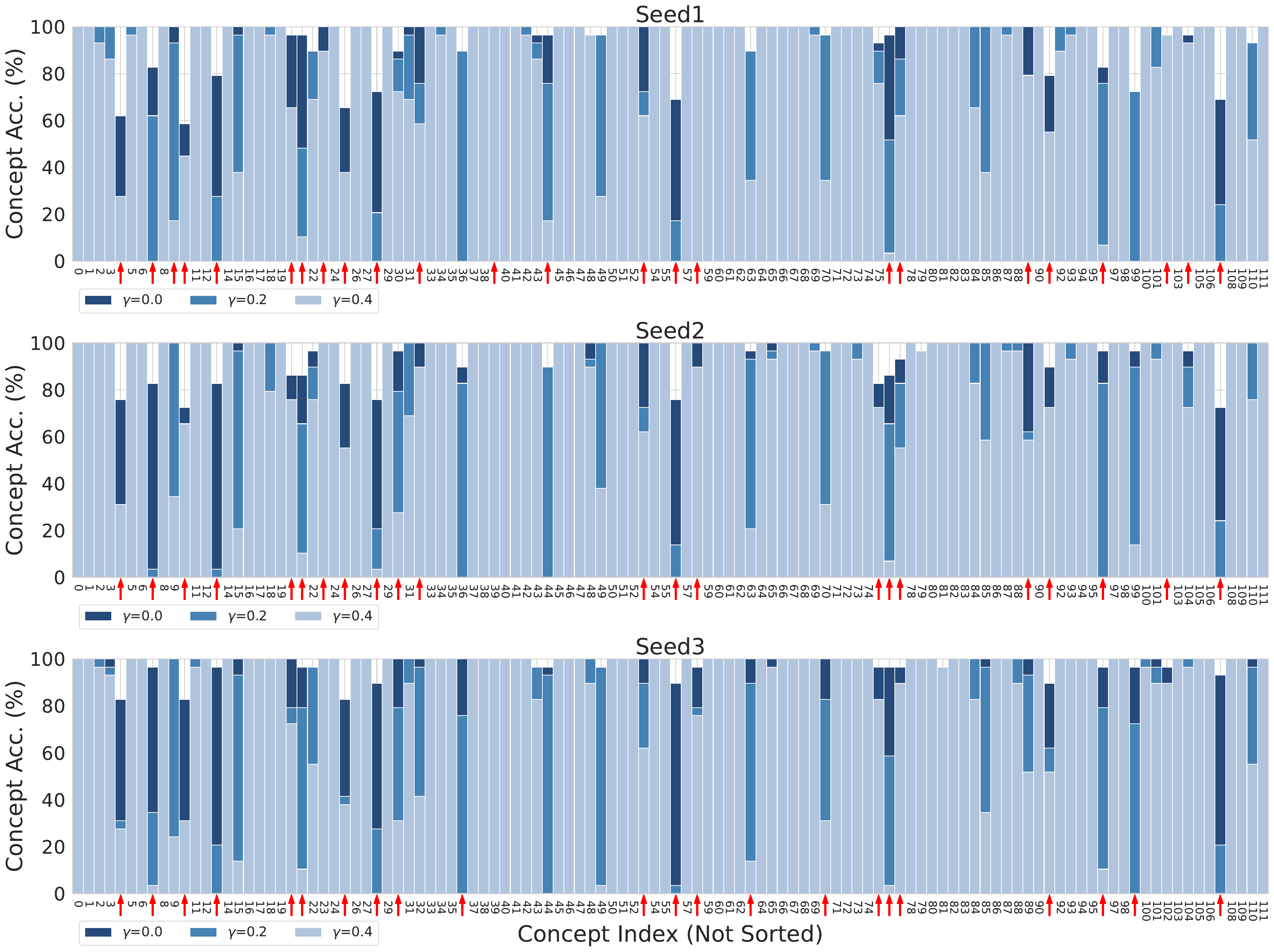}
    \caption{
        Susceptible concept sets across random seeds. 
        Susceptible concept sets are identified for each random seed, and similar concepts are consistently detected across seeds.
        }
    \label{fig:random seeds}
\end{figure}


\end{document}